\useunder{\uline}{\ul}{}
\definecolor{celestialblue}{rgb}{0.29, 0.59, 0.82}
\definecolor{cerulean}{rgb}{0.0, 0.48, 0.65}
\definecolor{cadmiumorange}{rgb}{0.93, 0.53, 0.18}
\DeclareMathOperator*{\argmax}{arg\,max}
\newtheorem{thm}{Theorem}
\newtheorem{corollary}{Corollary}
\newcommand*\circled[1]{\tikz[baseline=(char.base)]{
            \node[shape=circle,draw,inner sep=0pt] (char) {#1};}}
\begin{document}

\title{Node-aware Bi-smoothing: \\Certified Robustness against Graph Injection Attacks}

\author{\IEEEauthorblockN{Yuni Lai\IEEEauthorrefmark{1}, Yulin Zhu\IEEEauthorrefmark{1}, Bailin Pan\IEEEauthorrefmark{2}, Kai Zhou\IEEEauthorrefmark{1}}
	\IEEEauthorblockA{\IEEEauthorrefmark{1} Department of Computing, The Hong Kong Polytechnic University}
	\IEEEauthorblockA{\IEEEauthorrefmark{2} Department of Applied Mathematics, The Hong Kong Polytechnic University}
	\IEEEauthorblockA{csylai@comp.polyu.edu.hk, yulinzhu@polyu.edu.hk, 23061138g@connect.polyu.hk, kaizhou@polyu.edu.hk}
}


\maketitle

\thispagestyle{plain}
\pagestyle{plain}
\begin{abstract} 

Deep Graph Learning (DGL) has emerged as a crucial technique across various domains. However, recent studies have exposed vulnerabilities in DGL models, such as susceptibility to evasion and poisoning attacks. While empirical and provable robustness techniques have been developed to defend against graph modification attacks (GMAs), the problem of certified robustness against graph injection attacks (GIAs) remains largely unexplored. To bridge this gap, we introduce the \textit{node-aware bi-smoothing} framework, which is the \textit{first} certifiably robust approach for general node classification tasks against GIAs. Notably, the proposed node-aware bi-smoothing scheme is model-agnostic and is applicable for both evasion and poisoning attacks. Through rigorous theoretical analysis, we establish the certifiable conditions of our smoothing scheme. We also explore the practical implications of our node-aware bi-smoothing schemes in two contexts: as an empirical defense approach against real-world GIAs and in the context of recommendation systems. Furthermore, we extend two state-of-the-art certified robustness frameworks to address node injection attacks and compare our approach against them. Extensive evaluations demonstrate the effectiveness of our proposed certificates.\footnote{https://anonymous.4open.science/r/NodeAwareSmoothing-35E6/}

\end{abstract}

\section{Introduction}
\label{Sec:Intro}

Deep Graph Learning (DGL), particularly Graph Neural Networks (GNNs), has established itself as the dominant approach for graph learning tasks. DGL has consistently demonstrated outstanding performance across various applications, such as recommender systems, community detection, link prediction in social networks, network intrusion detection, and anomaly detection in financial networks~\cite{zhang2019graph}. 
Many of these applications are critical for ensuring system security, such as node classification in anomaly detection, which helps prevent money laundering~\cite{li2020flowscope} and financial fraud~\cite{ma2021comprehensive}. Consequently, ensuring the trustworthiness of those DGL models is of paramount importance.


Indeed, extensive research has been dedicated to studying the adversarial robustness of DGL against attacks. Specifically, various graph adversarial attacks~\cite{zugner2018adversarial,zugner2018adversarial1,liu2022towards} have been proposed to assess the vulnerability of DGL models. In response, different defense mechanisms are explored, resulting in robust DGL models such as Pro-GNN~\cite{jin2020graph}, RobustGCN\cite{zhu2019robust}, and GCNGuard~\cite{zhang2020gnnguard}. However, despite the effectiveness of defense models, their robustness is often compromised by the relentless development of new attack techniques~\cite{mujkanovic2022defenses}. The ongoing research on attacks and defense for DGL has resulted in a highly competitive status.


To end such an arms race between attack and defense,
there is a growing interest in developing provable defense methods that offer certified robustness \cite{li2023sok}. 
Specifically, certifiably robust models \cite{bojchevski2020efficient,wang2021certified,jia2020certified,schuchardt2023localized} can provide the theoretical guarantee that their predictions would stay unchanged as long as the amount of input perturbations is within a certain range. For instance, a smoothed GNN-based classifier~\cite{bojchevski2020efficient} can achieve a \textit{certified accuracy} of $60\%$ (meaning that $60\%$ of the test nodes are \textit{guaranteed} to be correctly classified)  when faced with an attack involving the \textit{arbitrary} deletion of up to $10$ edges from the graph.
Overall, certified robustness can significantly enhance the trustworthiness of DGL models in deployment.

Despite the significant progress in achieving certified robustness of models in computer vision~\cite{li2022double,fischer2021scalable,jia2022almost,levine2021deep,levine2020robustness,scholten2023hierarchical} and graph learning~\cite{bojchevski2020efficient,wang2021certified,jia2020certified,schuchardt2023localized}, \textit{there is a notable gap regarding the certified robustness of DGL against a novel and significant form of attack known as the Graph Injection Attack (GIA)}. Unlike the commonly investigated Graph Manipulation Attack (GMA), which allows the attacker to modify the existing structure of the graph, GIA involves the injection of new nodes (along with associated edges) into the original graph. \textit{Exploring the certified robustness of DGL against GIA is of great importance for several reasons.} 

\begin{figure}[t]
    \centering
    \includegraphics[width=0.48\textwidth,height=2.2cm]{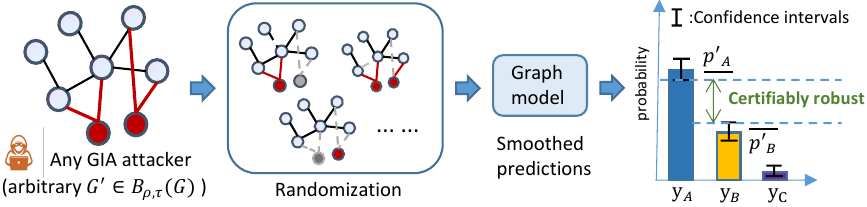}
    \caption{Certified Robustness via Node-aware Smoothing.}
    \label{fig:RandomizedSmoothing_framwork}
\end{figure}

Firstly, unlike in GMA where the attacker requires control over the entire graph, in GIA, the attacker only needs to control the newly injected malicious nodes. Consequently, GIA presents a less demanding threat model, making it a more  realistic threat. Secondly, it is shown that GIA is a more powerful and stealthy attack compared to GMA~\cite{sun2020non,zou2021tdgia,tao2021single,chen2022understanding,tao2023adversarial,ju2023let}. Notably, black-box GIAs such as $G^2A2C$~\cite{ju2023let} have successfully doubled the misclassification rate with only one node injected and one edge inserted.
Lastly, GIA is a type of attack that is particularly prevalent in recommender systems~\cite{zhang2021reverse,huang2021data,fan2023adversarial,guo2023targeted}, where adversaries can easily create fake accounts to engage with items, deliberately damaging recommendations intended for genuine users.
\textit{The practical deployment, high stealthiness, and power of GIA} underscore the urgent need to investigate the provable robustness of DGL under such attacks.

\textbf{Is adapting from existing methods sufficient?}
Our initial attempt is to adapt two existing and prevalent certifying frameworks to tackle the task of certifying DGL against GIA. We emphasize that these two certifying schemes have achieved state-of-the-art performance in their respective tasks, which include image classification and node classification in graphs.
Specifically:
\begin{itemize}
    \item[1)] Bagging-based certifying scheme~\cite{jia2021intrinsic}. Such a scheme is designed to certify image classifiers against sample insertion or deletion attacks. We can extend it to our task by regarding each node as an independent sample without accounting for the graph structure. 
    \item[2)] Randomized-smoothing certifying scheme~\cite{bojchevski2020efficient}. It is designed for DGL models against GMA. To extend it for GIA, we can pre-inject several isolated nodes in the clean graph and then certify how many edges can be injected from these nodes.
\end{itemize}

Both schemes operate by adding carefully crafted random noise to an input graph, resulting in a collection of randomized graphs, which are subsequently classified by a base classifier. The final classification result for the graph is then determined by a \textit{majority vote} among the classifications obtained from the randomized graphs. Leveraging the Neyman-Pearson lemma~\cite{Neyman1992}, these schemes offer verifiable classification margins when dealing with perturbed data. This is achieved by assessing the \textit{probability of overlap}, or likelihood ratio, between the randomized graphs originating from the clean graph and the perturbed graph. The underlying intuition is that after the randomization, if the perturbed graph and clean graph are identical, they should yield the same prediction. Consequently, a higher overlap probability corresponds to a wider certifiable radius, indicating a greater tolerance for perturbation levels.

Nevertheless, the above two adapted approaches have their own limitations, leading to poor certification performances (as shown in Table.~\ref{tab:poison_CA} and Figure.~\ref{fig:cer_poison}). The adapted bagging-based certifying scheme \textit{did not properly define the perturbation space} (limitation \circled{1}), resulting in an \textit{exaggerated} threat model where the attacker is unnecessarily too strong.  In particular, the scheme completely neglects the graph structure information, resulting in no constraints on the number of connected edges for each injected node (in practice, this will make GIA easily detectable). Consequently, the certification performance against GIA is significantly compromised. 
Although the randomized-smoothing certifying scheme can explicitly consider graph structure and restrict the added number of edges per node, \textit{it suffers from an extremely low probability of random sample overlap under GIAs} (limitation \circled{2}), resulting in an inadequate certification performance.
These limitations underscore the necessity and the challenges of developing novel certifying approaches that can effectively leverage the graph structure while increasing the overlap probability in order to provide a more effective certificate against GIA.


\textbf{Our solutions.}  
We propose a novel \textbf{node-aware} \textbf{bi-smoothing} scheme to explicitly address the above limitations. Specifically, to address limitation \circled{1},  we fully consider the practical constraint for GIA that each injected node can only connect to a few edges to ensure attack unnoticeability, leading to more accurate perturbation space and improved certification performances.
Our solution to this is a nontrivial generalization of the sparsity-aware certificate~\cite{bojchevski2020efficient} to certify against node injection perturbation.
Furthermore, to increase the sample overlap probability under GIA (limitation \circled{2}), our bi-smoothing scheme will \textit{randomly deletes nodes and edges simultaneously}. 
More specifically, we show that increasing the probability of deleting all inserted edges is essential for improving the overlap probability. 
Considering that the potential perturbed edges are concentrated around the injected nodes, node deletion enables a significantly higher probability of removing all perturbed edges originating from the injected nodes. 
Overall, by introducing node-aware bi-smoothing, we can model a more realistic and restricted attacker, and increase the chance of deleting all the perturbed edges from an injected node.

We offer a rigorous theoretical analysis to establish the validity of our robustness certificate. Additionally, we show the versatility of our framework by demonstrating its effectiveness not only against evasion attacks but also against poisoning attacks. Nevertheless, to enhance the certification performance specifically for poisoning attacks, we introduce a variant called \textbf{node-aware-exclude}. This variant excludes isolated nodes from the prediction process after randomization, thereby improving the overall prediction quality.

Our comprehensive evaluation shows that the proposed node-aware bi-smoothing framework can significantly improve the certification performances of the baselines (i.e., direct adaptations). For instance, in certain cases, our scheme has shown improvements of $760\%$ and $530\%$ in terms of the average certifiable radius (ACR). When arbitrarily injecting $10$ nodes with a maximum of $5$ edges per node, our node-aware and node-aware-exclude 
approaches achieve certified accuracies of $35\%$ and $55\%$ respectively. In contrast, the two direct adapted baselines yield $0\%$ certified accuracy.

\textbf{Practical Implications.} We further investigate the \textit{practical implications} of our proposed node-aware bi-smoothing schemes from two perspectives. Firstly, we explore the application of node-aware bi-smoothing schemes as an empirical defense approach to protect against an actual GIA. We compare our smoothed classifier with other state-of-the-art robust GNN models. Remarkably, the experimental results demonstrate that our model not only achieves competitive empirical accuracy but also provides certified accuracy, which is a distinctive advantage over other empirically robust models.
Secondly, we examine the potential of applying the smoothing schemes to recommendation systems, where GIAs are commonly encountered. Specifically, we treat the recommender system as a multi-label node classification task, where it predicts $K$ items for each user (node).    
To assess the effectiveness of our model, we evaluate the certified number of overlap items between the predicted and the ground truth items. Notably, our model demonstrates superior certified performance compared to the baseline method~\cite{jia2023pore} specifically designed for recommender systems.

We summarize the main contributions as follows:
\begin{itemize}
    \item We address the challenging task of achieving certified robustness against the \textit{graph injection attack}, which is a highly powerful and stealthy form of attack compared to the graph manipulation attack. Our primary technical advancement is a novel \textit{node-aware bi-smoothing} scheme, which is essential to achieve enhanced certified robustness.
    \item Our node-aware bi-smoothing scheme is highly versatile. It is model-agnostic and is applicable to both evasion and poisoning attacks. Furthermore, with minimum modification, our scheme can also provide certification for recommender systems, where graph injection attacks are commonly observed.
    \item 
    In addition, we demonstrate that our node-aware bi-smoothing scheme can be used as a practical defense strategy. Notably, our defense method achieves comparable empirical robustness to state-of-the-art robust models under actual graph injection attacks, while offering theoretical robustness guarantees.
    \item We conduct extensive experiments to validate the effectiveness of our schemes. The results show that our schemes can significantly improve the certified robustness against graph injection attacks compared to strong baseline methods.
\end{itemize}

\textbf{Organization.} We provide the preliminaries in Section~\ref{Sec:Backg}, and state our problem in Section~\ref{Sec:Problem}. In Section~\ref{Sec:Method}, we propose our smoothing scheme and the theoretical guarantees. Then, we illustrate the practical implementation in Section~\ref{Sec:Implem}, and the experimental results in Section~\ref{Sec:Results}. The related work is presented in Section~\ref{Sec:RelatedW}. Finally, we discuss the limitation in Section~\ref{Sec:Limit} and conclude in Section~\ref{Sec:Conc}.

\section{Background}
\label{Sec:Backg}
Our primary goal is to develop a certifiably robust classifier for the node classification task against graph injection attacks. In the following, we introduce the necessary preliminaries and background to tackle this task.

\subsection{Node Classification in Graphs}
A graph with $n$ nodes is represented as $G=(\mathcal{V},\mathcal{E},X)\in \mathbb{G}$, where $\mathcal{V}=\{v_1, \cdots, v_n\}$ is the set of nodes, $\mathcal{E}=\{e_{ij}=(v_i,v_j)\}$ is the set of edges with each edge $e_{ij}$ linking $v_i$ and $v_j$, and $X\in \mathbb{R}^{n\times d}$ are node features with dimension $d$. The graph structure of $G$ can also be encoded by adjacency matrix $A\in \{0,1\}^{n\times n}$ with $A_{ij} = 1$ if $e_{ij} \in \mathcal{E}$ and $A_{ij} = 0$ if $e_{ij} \notin \mathcal{E}$. Some of the nodes are associated with a label $y\in\mathcal{Y}=\{1,\cdots, C\}$. 
The task of node classification is to predict the missing node labels. To this end, a graph-based classifier $f:\mathbb{G}\rightarrow \{1,\cdots, C\}^n$ takes graph $G$ as input and is used to predict the labels. 
We consider both the inductive and transductive settings. Specifically, in the transductive setting, the model trained on graph $G$ can only make predictions for the current nodes in $G$, while the inductive classifier can make predictions for graphs with new nodes. 

\subsection{Graph Injection Attack}
Graph injection attack (GIA) is a type of attack that manipulates the structure of a graph by injecting malicious nodes with carefully crafted features to degrade the performance of node classification. For instance, a representative example HAOGIA~\cite{chen2022understanding} first formulates an adversarial objective function comprising two key components: an attack objective and an unnoticeable objective. The attack objective guides the attacker in accomplishing their malicious goal, while the unnoticeable objective aids the attacker in evading detection by defenders. Subsequently, the method utilizes gradient descent to iterate and locate the optimal edges and node features that maximize the objective function. 
We note that a GIA can occur at test time (i.e., evasion attack~\cite{chen2022understanding,ju2023let,chen2023single}) and training time (i.e., poisoning attack~\cite{sun2020non,zou2021tdgia,chen2023imperceptible,zhang2021reverse,huang2021data,fan2023adversarial,guo2023targeted}). 
Specifically, the former will manipulate the testing data to disrupt a trained model, while the latter will manipulate the training data causing changed model parameters. Our proposed scheme is applicable to both evasion and poisoning attacks.


A common requirement for GIA is to remain stealthy such that the injected malicious nodes can be spotted by detectors.
To ensure stealthiness, many existing attacks impose a constraint that the degree of an injected node should not exceed the average degree of the clean graph~\cite{sun2020non,tao2021single,chen2023imperceptible}. In particular, attack methods like $G$-NIA\cite{tao2021single}, $G^2A2C$~\cite{ju2023let}, and $G^2$-SNIA~\cite{chen2023single} 
adopt a default strategy of inserting a single malicious node with a single edge.
\textit{This motivates us to investigate certified robustness against such constrained while more realistic graph injection attacks.}

\subsection{Randomized Smoothing}


To defend against attacks, numerous approaches have been proposed, especially empirical approaches, such as adversarial training~\cite{dai2019adversarial}, robust models~\cite{kipf2016semi,jin2020graph,zhu2019robust,zhang2020gnnguard}, and anomaly detectors~\cite{zhang2020gcn}. Additionally, certified approaches provide provable robustness within a predefined perturbation set, representing the attacker's capabilities.

 A mainstream technique to achieve certified robustness is \textit{randomized smoothing}~\cite{bojchevski2020efficient,wang2021certified,jia2020certified,schuchardt2023localized}. It provides probabilistic certified robustness by adding random noise to the input samples.
One representative smoothing scheme designed for graph data is the sparsity-aware smoothing method~\cite{bojchevski2020efficient}. It offers an $l_0$-ball guarantee for graph modification attacks (GMA). This guarantee specifies the maximum number of edges that can be added or deleted among existing nodes while maintaining consistent predictions. It achieves this by creating a smoothed classifier through randomization, which involves randomly deleting or adding edges to the input samples. 
Based on this randomization, denoted as $\phi(\cdot)$, it constructs a smoothed classifier:
\begin{equation}
\label{eqn:smooth_g0}
    g_v(G):=\argmax_{y\in \{1,\cdots,C\}}\mathbb{P}(f_v(\phi(G))=y),
\end{equation}
where $f:\mathbb{G}\rightarrow \{1,\cdots, C\}^n$ returned the class $y$ given a randomized graph $\phi(G)$, and smoothed classifier $g(\cdot)$ returns the ``majority votes'' of the base classifier $f(\cdot)$. Then it estimates the probability of the model’s output given the perturbed graph based on the prediction on the clean graph and the sample overlap probability. The smoothed classifier has a certified consistent prediction if the top class probability $p'_A$ is larger than the runner-up class $p'_B$ under any perturbed graph in the perturbation set. 

\section{Problem Statement}
\label{Sec:Problem}
In this section, we present a formal definition of the threat model and outline our defense goal. 
\subsection{Threat model}
We consider an attacker whose goal is to degrade the performance of the node classification performance of a classifier. To achieve this, the attacker is allowed to inject $\rho$ nodes $\tilde{\mathcal{V}}=\{\tilde{v}_1,\cdots,\tilde{v}_{\rho}\}$ with arbitrary node features $\tilde{X}\in \mathbb{R}^{\rho \times d}$ into the graph. Let $\tilde{\mathcal{E}}$ denote the inserted edges from $\tilde{\mathcal{V}}$. To ensure stealthiness of attacks and constrain the attacker's ability, we assume that each injected node $\tilde{v}$ can connect at most $\tau$ edges. That is the degree of node $\tilde{v}$, denoted as $\delta(\tilde{v})$, is less than $\tau$. 

The attack causes a perturbation to the original graph $G$. Specifically, we define the node injection perturbation set as $B_{\rho,\tau}(G)$:
\begin{align}
\label{eqn:pertb_ball}
    B_{\rho,\tau}(G)&:=\{G'(\mathcal{V}',\mathcal{E}',X')|\mathcal{V}'=\mathcal{V}\cup\tilde{\mathcal{V}}, \mathcal{E}'=\mathcal{E}\cup \tilde{\mathcal{E}}, \nonumber\\ 
    &X'=X \cup \tilde{X}, |\tilde{\mathcal{V}}| \leq \rho,\delta(\tilde{v})\leq\tau,\forall \tilde{v} \in \tilde{\mathcal{V}}\}
\end{align}
The perturbation set 
$B_{\rho,\tau}(G)$ means that there are at most $\rho$ injected nodes, and at most $\rho\cdot \tau$ perturbed edges. This perturbation set belongs to a $l_0$-norm ball. 

Furthermore, the perturbation can occur before or after the model training, which is defined as \textit{evasion} attack and \textit{poisoning} attack.
We show that our proposed scheme is applicable for both the evasion and poisoning attacks.

\subsection{Goal of Provable Defense}
Our goal is to build up a smoothed classifier that can provide certified robustness against GIA. We emphasize that our method is model-agonistic in that it does not require to know model details. 
Specifically, we denote the perturbed graph with malicious nodes injected as $G'$, and its corresponding adjacency matrix as $A'$. For any graph classifier $f(\cdot)$, we create its smoothed classifier $g(\cdot)$. Our goal is to verify whether the classification result for a given node $v$ remains unchanged: $g_v(G)\overset{\text{?}}{=}g_v(G')$, for all adversarial example $G'\in B_{\rho,\tau}(G)$ in a predefined perturbation set.

\section{Certified Robustness against GIA}
\label{Sec:Method}
In this section, we first introduce sparsity-aware smoothing \cite{bojchevski2020efficient} that serves as the basis of our scheme. Then, we propose our node-aware bi-smoothing scheme, and provide the general theoretical condition for provable robustness.

\subsection{Preliminary: Sparsity-aware Smoothing}
Our node-aware bi-smoothing scheme is a nontrivial modification from the sparsity-aware smoothing \cite{bojchevski2020efficient}.
As mentioned above, sparsity-aware randomized smoothing is capable of providing $l_0$-ball guarantee for graph modification attack (GMA), in which the perturbation set denoted as $B_{r_a,r_d}$ is at most $r_a$ edges can be added and $r_d$ edges be deleted among existing nodes.
Specifically, it first specifies a randomization scheme $\phi$ that randomly adds or deletes edges:
$\mathbb{P}(\phi(A)_{ij}\neq A_{ij})= p_-^{A_{ij}}p_+^{(1-A_{ij})}$, where $p_-, p_+ \in[0,1]$ are the probability of adding edges or deleting edges (set $p_+=0$ in our adaptation). Based on the randomization, it constructs a smoothed classifier $g$ defined in Eq.~\eqref{eqn:smooth_g0}.
The model verifies whether $g(G)=g(G')$ for any adversarial example $G'\in B_{r_a,r_d}(G)$ in the given graph modification perturbation set. With this existing certifying framework proposed for $l_0$-ball graph modification attack (GMA), next, we illustrate how to generalize it to graph injection attack (GIA).

\subsubsection{A Direct Adaptation as Baseline}
We can use this model to certify the node injection perturbation set $B_{\rho,\tau}(G)$ defined in \eqref{eqn:pertb_ball}, by pre-injecting $\rho$ isolated nodes in the clean graph, and then applying this model to certify if the model can tolerate adding arbitrary $\rho\cdot \tau$ edges (i.e., $r_a=\rho\tau$). This is based on the assumption that the isolated/singleton nodes will not impact the classification results of other nodes. Note that this assumption holds for almost all graph models, such as all message-passing GNNs~\cite{scholten2022randomized} and common recommendation models. For the poisoning setting, we can merge the training phrase and testing phrase of the base classifier as a whole classifier $F(G)$ that takes $G$ as input for both training and then making predictions. To avoid the effect from isolated nodes to other nodes, we can let $F(G)$ bypass all isolated nodes in the training phase. (For the detailed adaptation with theoretical proof, please refer to Section~\ref{Sec:Method}, Theorem~\ref{theorm:certify_condition} with $p_n=0, p_-=p_e>0$.)

Despite the applicable adaptation, we point out that, given a perturbed graph $G'\in B_{\rho,\tau}(G)$, its likelihood ratio (sample overlap) $\frac{\mathbb{P}(\phi(G')=Z)}{\mathbb{P}(\phi(G)=Z)}>0$ only if all inserted edges are deleted, and the probability is $(p_-)^{\rho\tau}$, which diminishes significantly as the number of injected nodes and allowable edges grow. To enlarge the probability within the positive likelihood region, we subsequently introduce our node-aware bi-smoothing scheme.

\subsection{Node-aware Bi-Smoothing}


The main idea of our certificates against node injection is 
to design suitable smoothing distributions: (1) deleting edges $\phi_e(G)$, and (2) deleting nodes $\phi_n(G)$. Specifically, $\phi_e(G)$ randomly deletes edges in $G$ with probability $p_e$, and $\phi_n(G)$ randomly deletes nodes (all its incident edges) with probability $p_n$. We combine edge-level and node-level smoothing distributions to form $\phi(G)=(\phi_e(G),\phi_n(G))$, which we termed \textbf{node-aware bi-smoothing}, to generate the randomized smoothing samples and then classify all of the graphs to obtain the ``majority vote". 
We formally represent our smoothed classifier $g$ as follows:
\begin{align}
\label{eqn:smooth_g}
    &g_v(G):=\argmax_{y\in \{1,\cdots,C\}}p_{v,y}(G), \\
    &p_{v,y}(G):=\mathbb{P}(f_v(\phi(G))=y),\nonumber
\end{align}
where $p_{v,y}(G)$ represents the probability that the base graph classifier $f$ returned the class $y$ for node $v$ given a randomized graph $\phi(G)$, and smoothed classifier $g(\cdot)$ returns the ``majority votes'' of $f(\cdot)$.

We assume that for any graph model, the classification result of a query node $v\in G'$ is the same as $v\in G$ if the injected nodes are isolated from all existing nodes in the graph $G$. 
Next, we briefly illustrate how node-aware bi-smoothing can enlarge the probability of isolating all injected nodes. All the perturbed edges from an injected node are removed if the injected node is deleted in node deletion.
Moreover, since the attacker is restricted to injecting only a few edges per node, both node deletion and edge deletion contribute to the probability of individually deleting perturbed edges. A perturbed edge can be deleted either through edge deletion or by deleting the other node that the edge connects to
(See Figure~\ref{fig:nodeaware_scheme} for examples, and proof of Theorem~\ref{theorm:certify_condition} for more details). Since there are $\rho$ injected nodes, and at $\tau$ injected edges per injected nodes, the probability of deleting all the perturbed edges is $\tilde{p}:=(p_n+(1-p_n)(p_e+p_n-p_ep_n)^{\tau})^{\rho}$ under our proposed node-aware bi-smoothing. However, when adapting sparsity-aware smoothing~\cite{bojchevski2020efficient} ($p_n=0$), the probability is $(p_e)^{\rho\tau}$, which is much smaller. Next, we formulate the certified robustness verification problem and show that such probability is the key to yielding the robustness guarantee (Theorem~\ref{theorm:certify_condition}).

\begin{figure}[t]
    \centering
\includegraphics[width=0.40\textwidth,height=5.2cm]{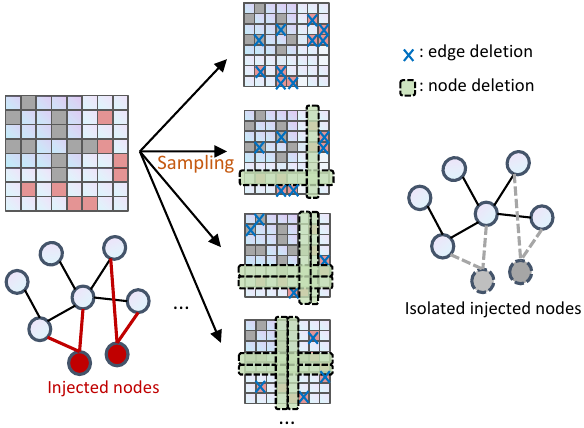}
    \caption{Illustration of \textbf{node-aware} bi-smoothing.}
    \label{fig:nodeaware_scheme}
\end{figure}

\subsection{Condition for Certified Robustness under Node-aware Bi-Smoothing}
We first formulate the certifying problem as a linear program following the idea of ~\cite{bojchevski2020efficient,lee2019tight} and then derive the condition for certified robustness.

\subsubsection{Problem Formulation}
For $\forall v\in \mathcal{V}$ and a given graph $G$, we assume that the top-class of $g_v(G)$ is $y_A=\argmax_{y\in\{1,\cdots,C\}}p_{v,y}(G)$ and the running-up class is $y_B=\argmax_{y\neq \hat{y}_A(v)} p_{v,y}(G)$. Let $p_A:=p_{v,y_A}(G)$, and $p_B:=p_{v,y_B}(G)$, if the node $v$ is correctly classified by $g$ under clean graph with certificate, we must have $p_A\geq \underline{p_A}>\overline{p_B}\geq p_B$, where the $\underline{p_A}$ is the lower bound of $p_A$, and $\overline{p_B}$ is the upper bound of $p_B$. The prediction can be further certified under perturbed graph if $p'_A>p'_B, \forall G'\in B_{r_a,r_d}(G)$, where $p'_A:=p_{v,y_A}(G')$ and $p'_B:=p_{v,y_B}(G')$ are the classification probabilities under perturbed graph.


The $p'_A$ and $p'_B$ can be obtained based on the fact that the randomized sample $\phi(G)$ and $\phi(G')$ have a probability of being overlapped, and the likelihood ratio is the same within some regions. We can divide sample space into disjoint regions $\mathbb{G}=\bigcup_i^I \mathcal{R}_i$, where $\mathcal{R}_i$ denote the consent likelihood region that $\frac{\mathbb{P}(\phi(G)=Z)}{\mathbb{P}(\phi(G')=Z)}=c_i$ for some constant $c_i$. 
Let $r_i=\mathbb{P}(\phi(G)\in\mathcal{R}_i)$, $r'_i=\mathbb{P}(\phi(G')\in\mathcal{R}_i)$ denote the probability that the random sample fall in the partitioned region $\mathcal{R}_i$. By the law of total probability, we have
$p_{v,y}(G)=\sum_i \mathbb{P}(f(Z)=y | Z\in \mathcal{R}_i)\mathbb{P}(\phi(G)=Z\in\mathcal{R}_i).$
Let $h_i:=\mathbb{P}(f(Z)=y_A|Z\in \mathcal{R}_i)$ and $t_i:=\mathbb{P}(f(Z)=y_B|Z\in \mathcal{R}_i)$, we have, $p'_A=p_{v,y_A}(G')=h^Tr'$, and $p'_B=p_{v,y_B}(G')=t^Tr'$.
Then, the verification problem can be defined as a Linear Programming (LP) problem:
\begin{align}
\label{opt:randomsmooth}
\min_{h,t}\quad &\mu=p'_A-p'_B=h^Tr'-t^Tr', \\
\text{s.t.} \quad &h^Tr=\underline{p_A},\, t^Tr=\overline{p_B},\nonumber\\
& 0\leq h \leq1,\, 0\leq t \leq 1,\nonumber
\end{align}
where the $\underline{p_A}$ is the lower bound of $p_A$, and $\overline{p_B}$ is the upper bound of $p_B$; the vectors $h\in[0,1]^{I}$ and $t\in[0,1]^{I}$ determine the worse-case classifier that assigns class $y_A$ and class $y_B$ among the regions such that the classification margin $\mu$ under perturbed graph is minimized. Hence, the optimal $\mu^*>0$ indicates that the prediction is certified to be consistent for $\forall G'\in B_{r_a,r_d}(G)$. 

\subsubsection{Solution \& Condition} The LP problem \eqref{opt:randomsmooth} can be solved directly according to sorted constant likelihood ratio regions (\cite{bojchevski2020efficient}, Appendix B). The worst-case classifier
$h$ will assign class $y_A$ in decreasing order of regions by their constant likelihood ratios 
($c_1\geq c_2 \geq \cdots \geq c_I$) until $\mathbb{P}(f_v(\phi(A))=y_A)=h^Tr=\underline{p_A}$, and $t$ will assign class $y_B$ in increasing order of the constant likelihood regions ($c_I\leq \cdots \leq c_1$) until $\mathbb{P}(f_v(\phi(A))=y_B)=t^Tr=\overline{p_B}$. Subsequently, leveraging this solution, we establish the theoretical condition of the certificate under our node-aware bi-smoothing scheme:


\begin{thm}
\label{theorm:certify_condition}
Let $f:\mathbb{G}\longrightarrow \{1,\cdots, C\}^{n}$ be any graph classifier, $g$ be its smoothed classifier defined in \eqref{eqn:smooth_g} with $\phi(G)=(\phi_e(G),\phi_n(G))$, $v\in G$ be any query node, $B_{\rho,\tau}(G)$ be the node injection perturbation set defined in \eqref{eqn:pertb_ball}. Suppose $y_A, y_B\in \{1,\cdots, C\}$ and $\underline{p_A}, \overline{p_B}\in[0,1]$. Then we have
$g_v(G')=g_v(G)$, $\forall G' \in B_{\rho,\tau}(G)$, if:
\begin{equation}
\label{eqn:certify_condition1}
    \mu_{\rho,\tau}:=\tilde{p}\,(\underline{p_A}-\overline{p_B}+1)-1 >0, 
\end{equation}
where $\tilde{p}:=(p_n+(1-p_n)(p_e+p_n-p_ep_n)^{\tau})^{\rho}$.
\end{thm}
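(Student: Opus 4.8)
The plan is to verify the certified condition $p'_A > p'_B$ for every $G' \in B_{\rho,\tau}(G)$ by feeding the right overlap probability into the sorted-likelihood-ratio solution of the linear program \eqref{opt:randomsmooth}. First I would normalize the two graphs onto a common node set: replace $G'$'s reference clean graph by the augmented graph $\bar{G}$ that carries the node set $\mathcal{V}\cup\tilde{\mathcal{V}}$ but keeps every injected node isolated. Since the smoothed classifier ignores isolated injected nodes, $g_v(G)=g_v(\bar{G})$, so it suffices to certify $g_v(\bar{G})=g_v(G')$ and to read $\underline{p_A},\overline{p_B}$ off $\bar{G}$. Because $\phi$ only deletes edges and nodes and never inserts them, any randomized outcome $Z$ that still contains an injected edge is unreachable from $\bar{G}$; its likelihood ratio $\mathbb{P}(\phi(\bar{G})=Z)/\mathbb{P}(\phi(G')=Z)$ is therefore $0$. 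Such outcomes may be assigned freely to the runner-up class by the worst-case classifier but are invisible to the clean-prediction constraints $h^Tr=\underline{p_A}$, $t^Tr=\overline{p_B}$.

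The central quantity is the total $\phi(G')$-mass of the complementary overlap region in which all injected edges are removed, since this is precisely the region where $f_v(\phi(G'))$ is forced to agree with $f_v(\phi(\bar{G}))$. I would compute this mass $\tilde{p}$ as the main combinatorial step. Fix one injected node: it is neutralized either by deleting the node itself (probability $p_n$), or by keeping it (probability $1-p_n$) and independently removing each incident injected edge, where a single edge is removed whenever its edge-deletion coin fires or its original endpoint is deleted, i.e. with probability $p_e+p_n-p_ep_n=1-(1-p_e)(1-p_n)$. As an injected node carries at most $\tau$ edges, the per-node neutralization probability is $p_n+(1-p_n)(p_e+p_n-p_ep_n)^{\tau}$, and taking the product over the $\rho$ independent injected nodes gives $\tilde{p}=(p_n+(1-p_n)(p_e+p_n-p_ep_n)^{\tau})^{\rho}$. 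Because this expression is monotonically decreasing in both $\tau$ and $\rho$, the attacker's worst case saturates the perturbation set \eqref{eqn:pertb_ball} at exactly $\rho$ nodes each of degree $\tau$, which is the configuration realized in the statement.

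With the sample space split into the overlap region (effective ratio $1/\tilde{p}$, $\phi(G')$-mass $\tilde{p}$, on which the worst-case classifier is tied to $\underline{p_A}$ and $\overline{p_B}$) and its zero-ratio complement ($\phi(G')$-mass $1-\tilde{p}$, freely assignable against the defender), I would invoke the sorted-region solution recalled just before the theorem. Placing the $y_A$-budget entirely inside the overlap region yields the worst-case lower bound $\underline{p'_A}=\tilde{p}\,\underline{p_A}$; filling the entire zero-ratio complement plus the overlapping $\overline{p_B}$-budget for the runner-up class yields $\overline{p'_B}=(1-\tilde{p})+\tilde{p}\,\overline{p_B}$. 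Subtracting gives $\mu_{\rho,\tau}=\underline{p'_A}-\overline{p'_B}=\tilde{p}(\underline{p_A}-\overline{p_B}+1)-1$, so $\mu_{\rho,\tau}>0$ forces $p'_A>p'_B$ for every admissible $G'$, which is exactly $g_v(G')=g_v(G)$.

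The step I expect to be the main obstacle is the likelihood-ratio characterization underlying this LP: one must argue that, for the worst-case margin, the partition collapses to ``all injected edges deleted'' versus ``at least one survives,'' and that the overlap region may be treated with the single effective ratio $1/\tilde{p}$ when solving \eqref{opt:randomsmooth}. This is where the interaction between the two smoothing operations must be controlled, since deleting an original endpoint simultaneously removes an injected edge and couples edge-removal to the surviving subgraph; handling this coupling cleanly is exactly what separates the argument from the edge-only sparsity-aware certificate and makes the generalization nontrivial. Aligning the worst-case attacker configuration (degree-capped, node-count-capped) with the minimizer of $\tilde{p}$, and confirming it also minimizes the LP value, is the crux of the proof.
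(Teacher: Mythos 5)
Your proposal follows the paper's proof essentially step for step: the same pre-injection of isolated nodes to put $G$ and $G'$ on a common node set, the same two-region partition of the sample space (``all injected edges removed,'' with likelihood ratio $1/\tilde{p}$, versus its complement, with ratio $0$), the same per-node/per-edge product computation of $\tilde{p}$, and the same worst-case classifier assignment yielding $p'_A=\tilde{p}\,\underline{p_A}$, $p'_B=\tilde{p}\,\overline{p_B}+1-\tilde{p}$, and hence $\mu_{\rho,\tau}=\tilde{p}(\underline{p_A}-\overline{p_B}+1)-1$. The one step you single out as the main obstacle---rigorously justifying that the overlap region may be collapsed to a single constant-likelihood-ratio region despite the coupling whereby deleting an original endpoint simultaneously removes an injected edge and perturbs the surviving original subgraph---is precisely the step the paper's own proof asserts rather than argues (it simply declares the two regions and their ratios), so your attempt is no less complete than the published argument on exactly the point you identified.
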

\begin{proof}
    (Sketch) There are two constant likelihood ratios: $c_1=1/\tilde{p}$ when all inserted edges are removed and $c_2=0$ when they are not. The worst-case classifier with condition $\mathbb{P}(f_v(\phi(A))=y_A)=\underline{p_A}$ and $\mathbb{P}(f_v(\phi(A))=y_B)=\overline{p_B}$ will assign class $y_A$ in the low likelihood ratio region in priority in order to make the $p'_A$ (classification probability under perturb graph) as small as possible. On the other hand, it will assign class $y_B$ in the high likelihood ratio region in priority in order to make the $p'_B$ as large as possible. The $\mu_{\rho,\tau}$ is calculated from  $p'_A-p'_B$ under such a worst-case classifier. 
    Please refer to Appendix.~\ref{Sec:Appendix_A} for detailed proof.
\end{proof}

Based on the Theorem.~\ref{theorm:certify_condition}, we have the following corollary that further highlights the important role of the probability $\tilde{p}$ in certifiable condition:
\begin{corollary}
A node is only certifiable with the necessary conditions:
\begin{equation}
    \tilde{p}>\frac{1}{2}.
\end{equation}
\end{corollary}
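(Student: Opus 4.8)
The plan is to read off the corollary as a direct necessary consequence of the certification condition in Theorem~\ref{theorm:certify_condition}. By that theorem, a node is certifiable only if $\mu_{\rho,\tau} = \tilde{p}\,(\underline{p_A}-\overline{p_B}+1)-1 > 0$, so I would start by isolating the factor $\underline{p_A}-\overline{p_B}+1$ and bounding it using only the fact that $\underline{p_A}$ and $\overline{p_B}$ are bounds on probabilities. Since $p_A \in [0,1]$ and $\underline{p_A}$ is a lower bound on $p_A$, we have $\underline{p_A} \le 1$; since $p_B \ge 0$ and $\overline{p_B}$ is an upper bound on $p_B$, we have $\overline{p_B} \ge 0$. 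These two observations pin down the most favorable configuration possible for the certificate.

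Next I would substitute these extreme values to obtain the tightest possible (best-case) margin. Plugging $\underline{p_A} \le 1$ and $\overline{p_B} \ge 0$ into the expression gives
\begin{equation}
\mu_{\rho,\tau} = \tilde{p}\,(\underline{p_A}-\overline{p_B}+1)-1 \le \tilde{p}\,(1-0+1)-1 = 2\tilde{p}-1.
\end{equation}
Because certification requires $\mu_{\rho,\tau}>0$, this chain forces $2\tilde{p}-1>0$, i.e. $\tilde{p}>\tfrac{1}{2}$. The logical direction is worth emphasizing explicitly: the displayed inequality shows that even in the idealized case where the top-class lower bound attains $1$ and the runner-up upper bound attains $0$, the margin cannot be positive unless $\tilde{p}>\tfrac{1}{2}$; hence no choice of $(\underline{p_A},\overline{p_B})$ can rescue a node with $\tilde{p}\le\tfrac{1}{2}$.

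There is no real technical obstacle here, as the statement is a necessary (not sufficient) condition and follows from monotonicity of $\mu_{\rho,\tau}$ in its arguments together with the trivial probability bounds. The only point requiring a sentence of care is justifying that $\underline{p_A}\le 1$ and $\overline{p_B}\ge 0$ are the correct extremal directions: since $\tilde{p}\ge 0$, the map $(\underline{p_A},\overline{p_B}) \mapsto \tilde{p}(\underline{p_A}-\overline{p_B}+1)-1$ is nondecreasing in $\underline{p_A}$ and nonincreasing in $\overline{p_B}$, so replacing them by $1$ and $0$ respectively yields a valid upper bound on the attainable margin. From that upper bound the claim $\tilde{p}>\tfrac{1}{2}$ is immediate.
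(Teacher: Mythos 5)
Your proof is correct and follows essentially the same route as the paper: both derive the bound $\mu_{\rho,\tau} \leq 2\tilde{p}-1$ from the trivial probability bounds and conclude that certification ($\mu_{\rho,\tau}>0$) forces $\tilde{p}>\tfrac{1}{2}$. The only cosmetic difference is that you bound $\underline{p_A}\leq 1$ and $\overline{p_B}\geq 0$ separately (with an explicit monotonicity remark), whereas the paper uses the combined inequality $\underline{p_A}-\overline{p_B}\leq 1$ directly.
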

\begin{proof}
    With the definition of $\underline{p_A}$ and $\overline{p_B}$, we have $\underline{p_A}-\overline{p_B}\leq 1$. Then, 
    \begin{align}
    \mu_{\rho,\tau}&=\tilde{p}(\underline{p_A}-\overline{p_B}+1)-1\nonumber\\
    &\leq \tilde{p}(1+1)-1\nonumber\\
    &\leq 2\cdot \tilde{p}-1.\nonumber
    \end{align}
    According to Theorem~\ref{theorm:certify_condition}, we can certify a node if $\mu_{\rho,\tau}>0$. We have $\mu_{\rho,\tau}>0$ only if $2\cdot \tilde{p}-1>0$, which means $\tilde{p}>\frac{1}{2}$.
\end{proof}

With the Theorem.~\ref{theorm:certify_condition}, we can now give black-box certified robustness for graph models against graph injection evasion attacks. Next, we show that this is also applicable to poisoning attacks with small changes.

\subsection{Improving Certificate against Poisoning Attack}

In this subsection, we show that our certifying scheme can also work for the poisoning attack threat model with minor adaptation. First, the certifying condition defined in Theorem~\ref{theorm:certify_condition} does not rely on the structure of the target model. That is, the Theorem~\ref{theorm:certify_condition} is suitable for any graph model as long as the isolated nodes do not impact the model predictions on other nodes. The graph model training process, however, can be viewed as an end-to-end function that takes in a graph for training and outputs a model parameter. With a fixed model parameter, it takes the same graph as input and outputs the predictions for each node. We can combine these two processes as a complex function $F:\mathbb{G}\longrightarrow \{1,\cdots, C\}^{n}$, so that the previous certifying scheme is applicable to the poisoning threat model. The only difference is that the classifier itself depends on the data. 

To avoid the impact of isolated nodes on the model parameter, we propose two different strategies termed \textbf{node-aware-include} and \textbf{node-aware-exclude}. By excluding isolating nodes from training while including them in the testing phase, node-aware-include strategy has the same certifying scheme as an evasion attack, because the data sampling is totally the same as in Theorem~\ref{theorm:certify_condition}. Nevertheless, the graph models trained without isolated nodes might have poor generalization on isolated nodes in the testing. Furthermore, some graph models, such as graph-based recommender system models, cannot make predictions for the nodes (i.e., users and items) that are not involved in the training phase. To deal with these problems, we propose a variant termed node-aware-exclude that excludes isolated nodes totally from training and testing. However, the sample space is slightly different from that in the Theorem~\ref{theorm:certify_condition} because the base model does not vote for the isolated nodes. Next, we formally define the smoothed classier with node-aware-exclude and provide the corresponding certifying condition.

We let the smoothed classifier $g$ under node-aware-exclude abstain from voting for all isolated nodes:
\begin{align}
\label{eqn:smooth_g_exclude}
    &g_v(G):=\argmax_{y\in \{1,\cdots,C\}}p_{v,y}(G), \\
    &p_{v,y}(G):=\mathbb{P}(F_v(\phi(G))=y)),\nonumber\\
    &F_v(\phi(G))=\text{ABSTAIN, if $v\in \phi(G)$ is isolated},\nonumber 
\end{align}
where $p_{v,y}(G)$ represents the probability that the base GNN classifier $F$ returned the class $y$ for node $v$ under the smoothing distribution $\phi(G)$. Note that the base classifier $F(\phi(G))$ here is first trained on $\phi(G)$ and then makes predictions. To derive the certified condition under such a smoothed classifier, we add a common assumption on the attacker that the attack edges added to a node $v$ should not exceed its original degree $d(v)$ (which is widely adopted in almost all attackers to ensure their stealthiness). The certified condition is given as the following theorem:

\begin{thm}
\label{theorm:certify_condition_exclude}
Let $f:\mathbb{G}\longrightarrow \{1,\cdots, C\}^{n}$ be any graph classifier, $g$ be its smoothed classifier defined in \eqref{eqn:smooth_g_exclude} with $\phi(G)=(\phi_e(G),\phi_n(G))$, $v\in G$ be any query node, $B_{\rho,\tau}(G)$ be the node injection perturbation set defined in \eqref{eqn:pertb_ball}, and the attack edges added to a node $v$ should not exceed its original degree $d(v)$. Suppose $y_A, y_B\in \{1,\cdots, C\}$ and $\underline{p_A}, \overline{p_B}\in[0,1]$. Then we have 
$g_v(G')=g_v(G)$, $\forall G' \in B_{\rho,\tau}(G)$, if:
\begin{align}
\label{eqn:certify_condition2}
\mu_{\rho,\tau}&:=\tilde{p}(\underline{p_A}-\frac{(1-\underline{p'_0})\overline{p_B}}{(1-p_0)}+1-\underline{p'_0})-(1-\underline{p'_0})>0, 
\end{align}
where $\tilde{p}:=(p_n+(1-p_n)(p_e+p_n-p_ep_n)^{\tau})^{\rho}$, $d(v)$ denotes the degree of node $v$, and $p_0:=p_n+(1-p_n)(p_e+p_n-p_ep_n)^{d(v)}$ is the probability that the node $v$ is deleted by the smoothing $\phi(G)$, $\underline{p'_0}:=p_n+(1-p_n)(p_e+p_n-p_ep_n)^{2d(v)}$.  
\end{thm}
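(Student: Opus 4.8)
The plan is to follow the linear-program argument behind Theorem~\ref{theorm:certify_condition} and \eqref{opt:randomsmooth}, but to enlarge the region decomposition so that it also records whether the query node $v$ is isolated and therefore abstains under the exclude rule \eqref{eqn:smooth_g_exclude}. Concretely, I would refine the two likelihood-ratio regions of Theorem~\ref{theorm:certify_condition} (the overlap region, where all injected edges are deleted, versus its complement) by intersecting each with the event ``$v$ is isolated'' and with its complement. On the abstaining cells the base classifier contributes zero to every $p_{v,y}$, so the worst-case label assignments $h$ and $t$ may be supported only on the votable cells; the likelihood-ratio ordering that drives the sorted-region solution of \cite{bojchevski2020efficient} is inherited from Theorem~\ref{theorm:certify_condition}.

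The next step is the mass bookkeeping, which is where the abstention changes the constraints. Under the clean graph every randomized sample lies in the overlap region and the votable mass equals $1-p_0$, where $p_0$ is the isolation probability of a degree-$d(v)$ node; the clean constraints $h^Tr=\underline{p_A}$ and $t^Tr=\overline{p_B}$ are thus budgeted inside this $1-p_0$ of votable mass. Under the perturbed graph, the hypothesis that the attack adds at most $d(v)$ edges at $v$ caps the degree of $v$ in $G'$ at $2d(v)$, so its isolation probability is at least $\underline{p'_0}$ and the votable mass is at most $1-\underline{p'_0}$. Feeding these facts into the sorted worst-case solution, the goal is to show $p'_A\ge \tilde p\,\underline{p_A}$ (the $y_A$ mass can only be pushed onto the overlap region) and $p'_B\le \tilde p\,\frac{1-\underline{p'_0}}{1-p_0}\,\overline{p_B}+(1-\tilde p)(1-\underline{p'_0})$ (the $y_B$ mass saturates the votable non-overlap cell together with the re-normalized votable overlap cell). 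Subtracting these two bounds reproduces $\mu_{\rho,\tau}$ in \eqref{eqn:certify_condition2}.

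I expect the main obstacle to be controlling the joint overlap-and-isolation masses of the perturbed measure, because once isolation is tracked the clean and perturbed measures are no longer related by the single constant ratio $1/\tilde p$ used in Theorem~\ref{theorm:certify_condition}: the node $v$ has degree $d(v)$ in $G$ but up to $2d(v)$ in $G'$, so its abstention probability genuinely differs, and the overlap event is moreover weakly correlated with the deletion of $v$ itself through the attack edges incident to $v$. Making the resulting inequalities rigorous, rather than relying on an exact independence that does not quite hold, is the delicate part, and it is precisely what the two quantities $p_0$ and $\underline{p'_0}$, and the normalization factor $\frac{1-\underline{p'_0}}{1-p_0}\ge 1$, are introduced to absorb.

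As a consistency check I would run before trusting the algebra, note that setting $p_0=\underline{p'_0}=0$ (no node is ever isolated, so nothing abstains) collapses \eqref{eqn:certify_condition2} back to the evasion certificate $\tilde p(\underline{p_A}-\overline{p_B}+1)-1$ of Theorem~\ref{theorm:certify_condition}; recovering this limit confirms that the extra abstention terms carry the correct signs. Finally, $\mu_{\rho,\tau}>0$ certifies $g_v(G')=g_v(G)$ for every $G'\in B_{\rho,\tau}(G)$, exactly as in the evasion case.
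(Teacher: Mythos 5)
Your proposal follows essentially the same route as the paper's proof: the paper likewise refines the overlap/non-overlap regions by the event that $v$ is not isolated (written $v\vdash Z$ there), uses votable masses $1-p_0$ under $\phi(G)$ versus $\tilde p\,(1-p'_0)$ and $(1-\tilde p)(1-p'_0)$ under $\phi(G')$, and derives exactly your two bounds $p'_A\ge \tilde p\,\underline{p_A}$ and $p'_B\le \tilde p\,\overline{p_B}\,(1-\underline{p'_0})/(1-p_0)+(1-\tilde p)(1-\underline{p'_0})$ (via $d(v)\le d(v')\le 2d(v)$, hence $\underline{p'_0}\le p'_0\le p_0$) before subtracting to obtain $\mu_{\rho,\tau}$. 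The one ``delicate part'' you flag---the correlation between the overlap event and the isolation of $v$ through attack edges incident to $v$---is not actually resolved in the paper either, whose proof simply writes the joint probabilities in factorized form, so your plan is at least as careful as the published argument.
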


\begin{proof}
    (Sketch) Given a node $v$, the classifier does not vote for it if the node $v$ is isolated in the smoothing. We need to calculate the likelihood ratio of regions that intersect with the region where $v$ is not excluded. For the $\phi(G)$, it has a probability of $1-p_0$ that the node $v$ is included in the voting, while for $\phi(G')$, the probability is upper bound by $1-\underline{p'_0}$ and lower bound by $1-p_0$ because the attacker inserts new edges to node $v$, and the number of new edges on a single node should not exceed the original degree by assumption. The likelihood ratio has two possible values: $c_1=\frac{1-p_0}{\tilde{p}(1-p'_0)}$ when all inserted edges are removed and $c_2=0$ when they are not. 
    See Appendix.~\ref{Sec:Appendix_A} for the complete proof.
\end{proof}

With Theorem.~\ref{theorm:certify_condition_exclude}, we next illustrate that it can be further extended to provide provable robust recommendations. 

\subsection{Certificate for Recommender System}
In particular, a recommender system can be regarded as a $K$-label classifier that predicts $K$ items for each user (node). To generalize our certifying scheme to the recommender system, we adopt the framework of PORE~\cite{jia2023pore} proposed by Jia et al., which defined the certified robustness problem as how many malicious users (nodes) can be injected while the recommendation for a user $u$ is maintained to a certain extent, i.e., \textit{at least $r$ recommended items are overlapped with ground truth items $I_u$}. 


A graph-based recommender system is trained on the user-item interaction graph $G$. Unlike PORE, which randomly selects $s$ users from the graph for aggregation (similar to \cite{jia2021intrinsic}), our approach involves applying node-aware bi-smoothing to generate random graphs. 
Specifically, in our method, $\phi_n(G)$ removes all ratings of a user with probability $p_n$, while $\phi_e(G)$ removes items with probability $p_e$. Assuming that a base recommender system trained on $\phi(G)$ predicts $K'$ items for a user $u$, we denote these predictions as $F_u(\phi(G))$. Our smoothed recommender system predicts the top-$K$ items based on the item probabilities obtained from the base recommender system. The probabilities, denoted as $p_{u,i}=\mathbb{P}(i\in F_u(\phi(G)))$, represent the probability of item $i$ being included in $F_u(\phi(G))$. We define the smoothed recommender system as $g_u(G)$:
\begin{align}
\label{eqn:smooth_g_RS}
    &g_u(G):=\{i|i \in \text{top-}K(p_{u,:})\}, \\
    &p_{u,i}:=\mathbb{P}(i\in F_u(\phi(G))),\nonumber\\
    &F_u(\phi(G))=\text{ABSTAIN, if $u\in \phi(G)$ is isolated},\nonumber 
\end{align}
where $\text{top-}K(p_{u,:})$ gives the top-$K$ items with the largest recommendation probability $p_{u,i}$.


According to~\cite{jia2023pore}, we can get at least $r$ recommended items that match with ground truth items $I_u$ if the $r$th highest item probability among items $I_u$ is higher than the $(K-r+1)$th highest item probability among $I\setminus I_u$ under the poisoned graph, where $I$ is all the items in the training set. We next provide the condition for $|g_u(G') \cap I_u|\geq r, \forall G'\in B_{\rho,\tau}(G)$ in the following theorem:
 
\begin{thm}
\label{theorm:certify_condition_RS}
    Let $F_u(G)$ be any base recommender system trained on $G$ and recommend $K'$ items to the user $u$, $g_u(G)$ be its smoothed recommender defined in \eqref{eqn:smooth_g_RS}, $u\in G$ be any query user, $B_{\rho,\tau}(G)$ be the node injection perturbation set defined in \eqref{eqn:pertb_ball}. Then, we have at least $r$ recommended items after poisoning are overlapped with ground truth items $I_u$: $|g_u(G') \cap I_u|\geq r, \forall G'\in B_{\rho,\tau}(G)$ if:
    \begin{equation}
    \label{eqn:certify_condition_RS}
        \hat{p}\,\underline{p_r}-\min_{H_c}(\overline{p}_{H_c}+K'(1-\hat{p})(1-p_0))/c>0,
    \end{equation}
    where $\hat{p}:=(p_n+(1-p_n)p_e^{\tau})^{\rho}$, $p_0:=p_n+(1-p_n)(p_e)^{d(u)}$ is the probability that the user $u$ is deleted by the smoothing $\phi(G)$, $d(u)$ is the number of user ratings in training set, $\underline{p_r}$ is the lower bound of the $r$th largest item probability among $\{p_{u,i}|i\in I_u\}$, $H_c$ denote any subset of the top-$(K-r+1)$ largest items among $I\setminus I_u$ with size $c$, $\overline{p}_{H_c}:=\sum_{j\in H_c} \overline{p_{u,j}}$ is the sum of probability upper bounds for $c$ items in $H_c$.
\end{thm}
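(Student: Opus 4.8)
The plan is to follow the PORE reduction of~\cite{jia2023pore} and then reuse the constant-likelihood-ratio machinery underlying Theorem~\ref{theorm:certify_condition} and Theorem~\ref{theorm:certify_condition_exclude}, adapted to the bipartite user--item setting. First I would reduce the set-overlap guarantee to a comparison of order statistics: by the criterion of~\cite{jia2023pore}, the event $|g_u(G')\cap I_u|\ge r$ holds whenever, under the poisoned graph $G'$, the $r$th largest smoothed probability $p'_{u,i}$ among ground-truth items $i\in I_u$ strictly exceeds the $(K-r+1)$th largest smoothed probability $p'_{u,j}$ among non-ground-truth items $j\in I\setminus I_u$. It therefore suffices to produce a lower bound on the former and an upper bound on the latter holding uniformly over $G'\in B_{\rho,\tau}(G)$, then combine them.

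Second, I would set up the two-region partition. Since an injected user is a row of the bipartite graph and $\phi_n$ deletes only users while $\phi_e$ deletes each rating independently with probability $p_e$, the event that all $\rho\tau$ injected ratings are removed is \emph{independent} of how the smoothing acts on the clean part of $G$; in the worst case (injected users targeting distinct items) this event has probability exactly $\hat p=(p_n+(1-p_n)p_e^{\tau})^{\rho}$. Writing $\mathcal R_1$ for the randomized graphs in which every injected rating is deleted (so that, by the isolation assumption, the injected users cannot influence $u$'s recommendation) and $\mathcal R_0$ for its complement, the ratio $\mathbb P(\phi(G')=Z)/\mathbb P(\phi(G)=Z)$ equals the constant $\hat p$ on $\mathcal R_1$ while $\mathbb P(\phi(G)=Z)=0$ on $\mathcal R_0$ (treating $\phi(G)$ as always keeping the pre-injected users isolated). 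This is exactly the two-valued structure solved by the LP in~\eqref{opt:randomsmooth}.

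Third, I would bound the good items. The worst-case-classifier solution on this partition gives $p'_{u,i}\ge \hat p\,p_{u,i}$ for every item, since the mass the adversary must place to realize $p_{u,i}$ on the clean graph is scaled by exactly $\hat p$ when transported to $G'$. As $p'_{u,i}\ge \hat p\,p_{u,i}$ holds entrywise, the $r$th largest value among $\{p'_{u,i}:i\in I_u\}$ is at least $\hat p$ times the $r$th largest among $\{p_{u,i}:i\in I_u\}$, hence at least $\hat p\,\underline{p_r}$.

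Fourth, I would bound the bad items, which I expect to be the main obstacle. For a bad item $j$ the same partition gives $p'_{u,j}\le \hat p\,\overline{p_{u,j}}+\mathbb P\!\left(j\in F_u(\phi(G'))\cap\mathcal R_0\right)\le \overline{p_{u,j}}+\mathbb P\!\left(j\in F_u(\phi(G'))\cap\mathcal R_0\right)$. Summing over a size-$c$ subset $H_c$ and using that the base recommender outputs at most $K'$ items only when $u$ is not isolated yields $\sum_{j\in H_c}p'_{u,j}\le \overline p_{H_c}+K'(1-\hat p)(1-p_0)$: the event $\mathcal R_0$ has probability $1-\hat p$ and is independent of the event that $u$ survives (probability $1-p_0$, unchanged by injection because $u$'s own ratings are untouched), and at most $K'$ items are recommended in this region. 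The delicate step is converting these per-subset sum bounds into a bound on the $(K-r+1)$th order statistic: invoking the counting/averaging lemma of~\cite{jia2023pore}, the $(K-r+1)$th largest bad-item probability is at most $(\overline p_{H_c}+K'(1-\hat p)(1-p_0))/c$ for every admissible $H_c$, hence at most the minimum over $H_c$. Combining with the third step, the PORE criterion holds as soon as $\hat p\,\underline{p_r}-\min_{H_c}(\overline p_{H_c}+K'(1-\hat p)(1-p_0))/c>0$, which is exactly~\eqref{eqn:certify_condition_RS}. The obstacle is keeping the averaging argument and the extra-mass term mutually consistent, in particular justifying the independence of $u$'s survival from $\mathcal R_0$ and arguing that the clean-top subsets $H_c$ suffice to dominate the $G'$-top bad items in the order-statistic step.
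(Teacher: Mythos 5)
Your proposal follows essentially the same route as the paper's own proof: the same two-region partition based on whether all $\rho\tau$ injected ratings are removed (with probability $\hat{p}=(p_n+(1-p_n)p_e^{\tau})^{\rho}$), the same good-item lower bound $p'_{u,i}\geq\hat{p}\,\underline{p_{u,i}}$, the same bad-item upper bound combining the worst-case transfer on $\mathcal{R}_1$ with the at-most-$K'$-items argument on the complement (yielding the $K'(1-\hat{p})(1-p_0)$ term, using that injection leaves $u$'s survival probability $1-p_0$ unchanged), and the same min-over-averages step inherited from PORE. The paper likewise defers the clean-top-versus-poisoned-top subtlety in the order-statistic step to the PORE argument, and its only cosmetic difference is that it works with the reciprocal likelihood ratio conditioned explicitly on the event $u\vdash Z$, which you handle equivalently via the independence of $u$'s survival from the injected rows.
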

\begin{proof}
    (sketch) All malicious users are removed in the randomization with probability $\hat{p}:=(p_n+(1-p_n)p_e^{\tau})^{\rho}$. See detailed proof in Appendix.~\ref{Sec:Appendix_A}.
\end{proof}

\section{Implementation in Practice}
\label{Sec:Implem}
With the certifying conditions from Theorem~\ref{theorm:certify_condition}, Theorem~\ref{theorm:certify_condition_exclude}, and Theorem~\ref{theorm:certify_condition_RS}, we aim to demonstrate how to instantiate them to train a defense model and obtain its certified robustness in both \textit{evasion} and \textit{poisoning} settings. 

\subsection{Certified Robustness Against Evasion Attack} 
Following~\cite{cohen2019certified,bojchevski2020efficient}, we train a graph model with noise augmentation to enhance the model's generalization on smoothed samples. In each epoch of training, we apply $\phi(G)$ to add noise to the graph (Algorithm~\ref{alg:training_noise}). After a graph model is trained, we sample $N$ random graphs $G_1,G_2,\cdots,G_N$ from the smoothing distribution $\phi(G)$ to process Monte Carlo: $p_{v,y}(G)\approx \sum_{i=1}^{N} \mathbb{I}(f_v(G_i)=y))/N$. 
Based on this frequency, we can obtain the top two predictions $y_A$ and $y_B$. Nevertheless, the prediction might not always be consistent due to the randomness. There are two levels of randomness we need to deal with: the prediction of $y_A$ and its probability $p_A$. To guarantee that the model predicts $y_A$ with probability at least $1-\alpha$, following \cite{cohen2019certified}, we employ a two-sided hypothesis test on the count of $y_A$ prediction $n_A \sim Binomial(n_A+n_B,\frac{1}{2})$, where $n_A:=\sum_{i=1}^{N} \mathbb{I}(f_v(G_i)=y_A)$ and $n_B:=\sum_{i=1}^{N} \mathbb{I}(f_v(G_i)=y_B)$. The model returns ABSTAIN during certifying if the p-value is greater than $\alpha$. To bound the probability $p_A$ and $p_B$, similar to~\cite{cohen2019certified,bojchevski2020efficient}, we compute a lower bound of $\underline{p_A}$ and upper bound of $\overline{p_B}$ based on the Clopper-Pearson Bernoulli confidence interval with confidence $\alpha/C$, where $C$ is the class number of the classifier. These lead to a lower bound of classification margin $\mu_{\rho,\tau}$, and it entails a valid certificate simultaneously with confidence level probability $\alpha$. The detailed practical certifying process is further outlined in Algorithm~\ref{alg:certify_evasion}. 
\begin{algorithm} 
\caption{Training with noise (evasion)}  
\label{alg:training_noise}  
\begin{algorithmic}[1]   
\REQUIRE Clean graph $G$, smoothing distribution $\phi(G)$ with smoothing parameters $p_e$ and $p_n$, training epoch $E$.
\FOR{$e=1,\cdots,E$}
\STATE{Draw a random graph $G_e\sim \phi(G)$. }
\STATE{$f=train\_model(f(G_e))$ on training nodes.}
\ENDFOR
\RETURN A base classifier $f(\cdot)$
\end{algorithmic}  
\end{algorithm}
\begin{algorithm}  
\caption{Certified robustness with Monte Carlo sampling (evasion)}  
\label{alg:certify_evasion}
\begin{algorithmic}[1]  
\REQUIRE Clean graph $G$, smoothing distribution $\phi(G)$ with smoothing parameters $p_e$ and $p_n$, trained base classifier $f(\cdot)$, sample number $N$, confidence level $\alpha$, perturbation budget $\rho$ and $\tau$.
\STATE{Draw $N$ random graphs $\{G_i|\sim G_i \sim \phi(G)\}_{i=1}^N$.}
\STATE{$counts=|\{i: f(G_i)=y\}|$, for $y=1, \cdots, C$.}
\STATE{$y_A,y_B=$ top two indices in $counts$.}
\STATE{$n_A,n_B=counts[y_A],counts[y_B]$.}
\STATE{$\underline{p_A},\overline{p_B}=\text{CP\_Bernolli}(n_A,n_B,N,\alpha)$.}
\IF{Binomial($n_A+n_B,\frac{1}{2})>\alpha$}
    \RETURN ABSTAIN
\IF{$\mu_{\rho,\tau}>0$}
\RETURN Certified prediction $y_A$.
\ENDIF
\ENDIF
\RETURN ABSTAIN
\end{algorithmic}  
\end{algorithm}

\subsection{Certified Robustness Against Poisoning Attack}
In poisoning attacks, because the perturbation is before the training phase, we take the training and inference of a model as an end-to-end function $F(\cdot)$ to substitute the base classifier $f(\cdot)$ in the evasion attack. That is, for each randomized graph $G_i \sim \phi(G)$, we first train a model based on $G_i$ and then make predictions. We further summarize the certifying process in Algorithm~\ref{alg:certify_poisoning}. Note that the node-aware-include and node-aware-exclude primarily differ in the calculation of $\mu_{\rho,\tau}$. The former utilizes Eq.~\eqref{eqn:certify_condition1}, while the latter employs Eq.~\eqref{eqn:certify_condition2}. Regarding the implementation of the recommender system, it shares a similar training process with node-aware-exclude. The key distinction is that the prediction is obtained by the classifier defined in \eqref{eqn:smooth_g_RS}, and $\mu_{\rho,\tau}$ should be computed with \eqref{eqn:certify_condition_RS}.

\begin{algorithm} 
\caption{Certified robustness with Monte Carlo sampling (poisoning)}  
\label{alg:certify_poisoning}
\begin{algorithmic}[1]  
\REQUIRE Clean graph $G$, smoothing distribution $\phi(G)$ with smoothing parameters $p_e$ and $p_n$, sample number $N$, confidence level $\alpha$, perturbation budget $\rho$ and $\tau$.
\STATE{Draw $N$ random graphs $\{G_i|\sim G_i \sim \phi(G)\}_{i=1}^N$.}
\STATE{$F(G_i):$ model trained on $G_i$ and makes predictions.}
\STATE{$counts=|\{i: F(G_i)=y\}|$, for $y=1, \cdots, C$.}
\STATE{$y_A,y_B=$ top two indices in $counts$.}
\STATE{$n_A,n_B=counts[y_A],counts[y_B]$.}
\STATE{$\underline{p_A},\overline{p_B}=CP\_Bernolli(n_A,n_B,N,\alpha)$.}
\IF{Binomial($n_A+n_B,\frac{1}{2})>\alpha$}
    \RETURN ABSTAIN
\IF{$\mu_{\rho,\tau}>0$}
\RETURN Certified prediction $y_A$.
\ENDIF
\ENDIF
\RETURN ABSTAIN
\end{algorithmic}  
\end{algorithm}

\subsection{Empirical Robustness}
Our proposed node-aware bi-smoothing can not only provide certified robustness but also serve as a general defense strategy that alleviates the threat of graph injection attack (GIA). We can follow exactly the same process used to train a smoothed model to achieve empirical robustness in the presence of a perturbed graph. Importantly, our approach is compatible with a wide range of base classifiers. This property further offers the practical values of our model.

\section{Evaluation}
\label{Sec:Results}
We conduct extensive experiments on three datasets to evaluate our proposed certifiably robust framework for \textit{node classification} and  \textit{recommender system}. We assess the certifiable robustness of the smoothed classifiers against \textit{evasion} attack and \textit{poisoning} attack. In summary, our experiments show the following findings:

\begin{itemize}
    \item Our proposed node-aware bi-smoothing scheme significantly enhances the certified accuracy and average certifiable radius under various realistic graph injection attack (GIA) scenarios. 
    \item The variant node-aware-exclude method we propose for poisoning attacks further improves the certification performance in both the node classification task and recommendation task.
    \item Our node-aware bi-smoothing scheme has shown competitive empirical defense performance when compared to existing baselines. 
    \item Ablation studies demonstrate the crucial role of node-aware bi-smoothing and node-aware-exclude in achieving successful certification against GIA. 
\end{itemize}

\subsection{Experiment Setting}
In general, we follow the settings in~\cite{cohen2019certified,bojchevski2020efficient}. Next, we will explain the detailed settings, including datasets and models, certificate parameters, baselines, and evaluation metrics.
\subsubsection{Datasets and Models}
We take Graph Convolution Neural Network (GCN)~\cite{kipf2016semi}, one of the most representative GNNs, as the base classifier in node classification on Cora-ML and Citeseer datasets. For evaluating the recommender system, we take an item-based recommender system named SAR~\cite{argyriou2020microsoft} on MovieLens-100k dataset~\cite{harper2015movielens} following~\cite{jia2023pore}. Specifically, the Cora-ML dataset contains $2,995$ nodes, $8,416$ edges, and $7$ classes, with an average node degree of $5.68$. The Citeseer dataset contains $3,327$ nodes, $4,732$ edges, and $6$ classes, with an average node degree of $3.48$. For each class, we sample 50 nodes as the training set, 50 for the validation set, and the remaining for the testing set. 
The MovieLens-100k dataset contains about $100,000$ rating records involving $943$ users and $1,682$ items. For each user, we take $85\%$ of its history ratings as training data and the remaining for testing data. 

\subsubsection{Certificate Parameters}
By default, we set the number of smoothing samples as $N=100,000$, for certifying GCN node classification against evasion attack, and $N=1,000$ for poisoning attack. For certifying the recommender system, we set the $N=100,000$. For all the experiments, we see the confidence level as $\alpha=0.01$. For the node injection perturbation set, we evaluate a range of node injection numbers $\rho$ and various edge budgets $\tau=5,10$ ($\tau=5$ if not mentioned). 

\subsubsection{Adapted Baselines}
Given the absence of previous work on certifying general node classification tasks against GIA, we adapt existing certificates designed for other tasks. There are three certifying schemes adaptable to our task:
\begin{itemize}
    \item Bagging-cert~\cite{jia2021intrinsic}: As mentioned in Section~\ref{Sec:Intro}, the bagging-cert was originally designed for certifying inserting or deleting training samples in image classification tasks. To extend it for node infection perturbation, we can view each node with its incident edges as an independent training sample, such that, the problem is the same as image classification. Note that it can only
    certify against poisoning attacks.
    \item Sparsity-aware~\cite{bojchevski2020efficient}: Another way is to use sparsity-aware by adding $\rho$ singleton nodes to the clean graph, and then certifying how many edge insertions it can withstand. When we set $p_n=0$ in our node-aware bi-smoothing, it becomes sparsity-aware smoothing (our $p_e$ is analogous to $p_-$ in \cite{bojchevski2020efficient}).
    \item PORE~\cite{jia2023pore}: It extends the bagging-cert~\cite{jia2021intrinsic} to provide a provable recommender system scheme under node injection attack, and we employ it as our baseline when certifying the recommender system.
\end{itemize}

\subsubsection{Evaluation Metrics}

A common metric to measure the robustness of a model with guarantee is \textbf{\textit{certified accuracy}}: it is the ratio of samples that the prediction is both \textit{correct} and \textit{certified} to be consistent under the defined perturbation set. We formally define the certified accuracy with the given attack budget $\rho$ and $\tau$ as follows:
$\xi(\rho,\tau)=\frac{1}{n}\sum_{i=1}^n \mathbb{I}(g_{v_i}(G)=g_{v_i}(G')=y_i), \forall G'\in B_{\rho,\tau}(G),$ where $y_i$ is the ground truth of node $v_i$. In this paper, we evaluate the certified accuracy of the set of testing nodes. 
However, evaluating the certificate strength is insufficient due to the inherent trade-off between prediction quality (i.e., clean accuracy) and certified accuracy. In general, higher 
smoothing variance improves the certified accuracy, but it reduces the prediction confidence. For this reason, following~\cite{schuchardt2020collective,schuchardt2023localized}, we also quantify \textbf{\textit{average certifiable radius}} (ACR) with given degree budget $\tau_0$: $ACR=\sum_{\rho=1}^{+\infty}\rho \cdot(\xi(\rho,\tau_0)-\xi(\rho+1,\tau_0))$. Intuitively, it is the discrete integral (area) under the certified accuracy curve (Figure~\ref{fig:ACR_explain}). 
\begin{figure}[h]
    \centering
    \includegraphics[width=0.20\textwidth,height=3.0cm]{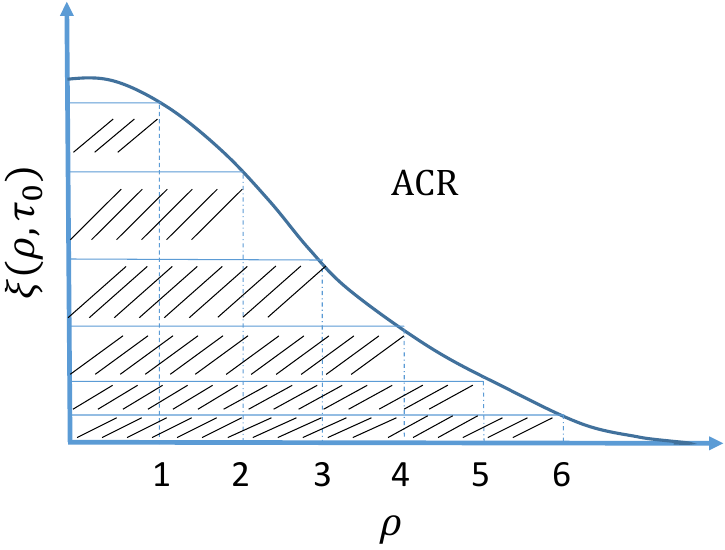}
    \caption{Illustration of ACR, where $\xi(\rho,\tau_0)$ denotes the certified accuracy under a fixed degree budget $\tau_0$.}
    \label{fig:ACR_explain}
\end{figure}

When evaluating the robustness of the recommender system, we adopt \textit{\textbf{certified precision}} and \textit{\textbf{certified recall}} following~\cite {jia2023pore}. These metrics measure the certified overlapping of recommended items and ground truth items:

\begin{align}
    &\text{certified precision} = \min_{G'\in B_{\rho,\tau}(G)} \frac{|I_u\cap g_u(G')|}{K},\\
    &\text{certified recall} = \min_{G'\in B_{\rho,\tau}(G)} \frac{|I_u\cap g_u(G')|}{I_u},
\end{align}
where $K$ is the number of recommended items, and $I_u$ is the ground truth recommendations. Similarly, \textit{we can calculate \textbf{ACR} by substituting certified accuracy with certified precision or recall}. 

\subsection{Certified  Robustness for Node Classification}

\subsubsection{Against Graph Injection Evasion Attack}

\begin{table}[hbt!]
\centering
\caption{Certified accuracy comparison under \textit{evasion} perturbation. For each method, we report the best results under different smoothing parameters, while the $\alpha$ and $N$ are the same.}
\setlength{\tabcolsep}{3pt}
\begin{tabular}{lclrrrrr}
\hline
\multicolumn{3}{c}{certified   accuracy} & \multicolumn{4}{c}{$\rho$} & \multirow{2}{*}{ACR} \\ \cline{1-7}
dataset & \multicolumn{1}{l}{$\tau$} & methods & \multicolumn{1}{l}{0} & \multicolumn{1}{l}{3} & \multicolumn{1}{l}{5} & \multicolumn{1}{l}{10} &  \\ \hline
\multirow{4}{*}{Cora-ML} & \multirow{2}{*}{5} & sparse-aware\cite{bojchevski2020efficient} & \textbf{0.809} & 0.000 & 0.000 & 0.000 & 0.691 \\
 &  & \textbf{node-aware} & 0.735 & \textbf{0.730} & \textbf{0.730} & \textbf{0.729} & \textbf{100.648} \\ \cline{2-8}
 & \multirow{2}{*}{10} & sparse-aware\cite{bojchevski2020efficient} & \textbf{0.809} & 0.000 & 0.000 & 0.000 & 0.000 \\
 &  & \textbf{node-aware} & 0.735 & \textbf{0.730} & \textbf{0.729} & \textbf{0.721} & \textbf{51.390} \\ \hline
\multirow{4}{*}{Citeseer} & \multirow{2}{*}{5} & sparse-aware\cite{bojchevski2020efficient} & \textbf{0.705} & 0.000 & 0.000 & 0.000 & 0.644 \\
 &  & \textbf{node-aware} & 0.674 & \textbf{0.666} & \textbf{0.666} & \textbf{0.666} & \textbf{31.558} \\ \cline{2-8}
 & \multirow{2}{*}{10} & sparse-aware\cite{bojchevski2020efficient} & \textbf{0.705} & 0.000 & 0.000 & 0.000 & 0.000 \\
 &  & \textbf{node-aware} & 0.674 & \textbf{0.666} & \textbf{0.666} & \textbf{0.666} & \textbf{16.979} \\ \hline
\end{tabular}
\label{tab:evasion_CA}
\end{table}

\begin{figure}[hbt!]
\centering
    \subfigure[Sparsity-aware \cite{bojchevski2020efficient}]{
    \includegraphics[width=0.18\textwidth,height=2.8cm]{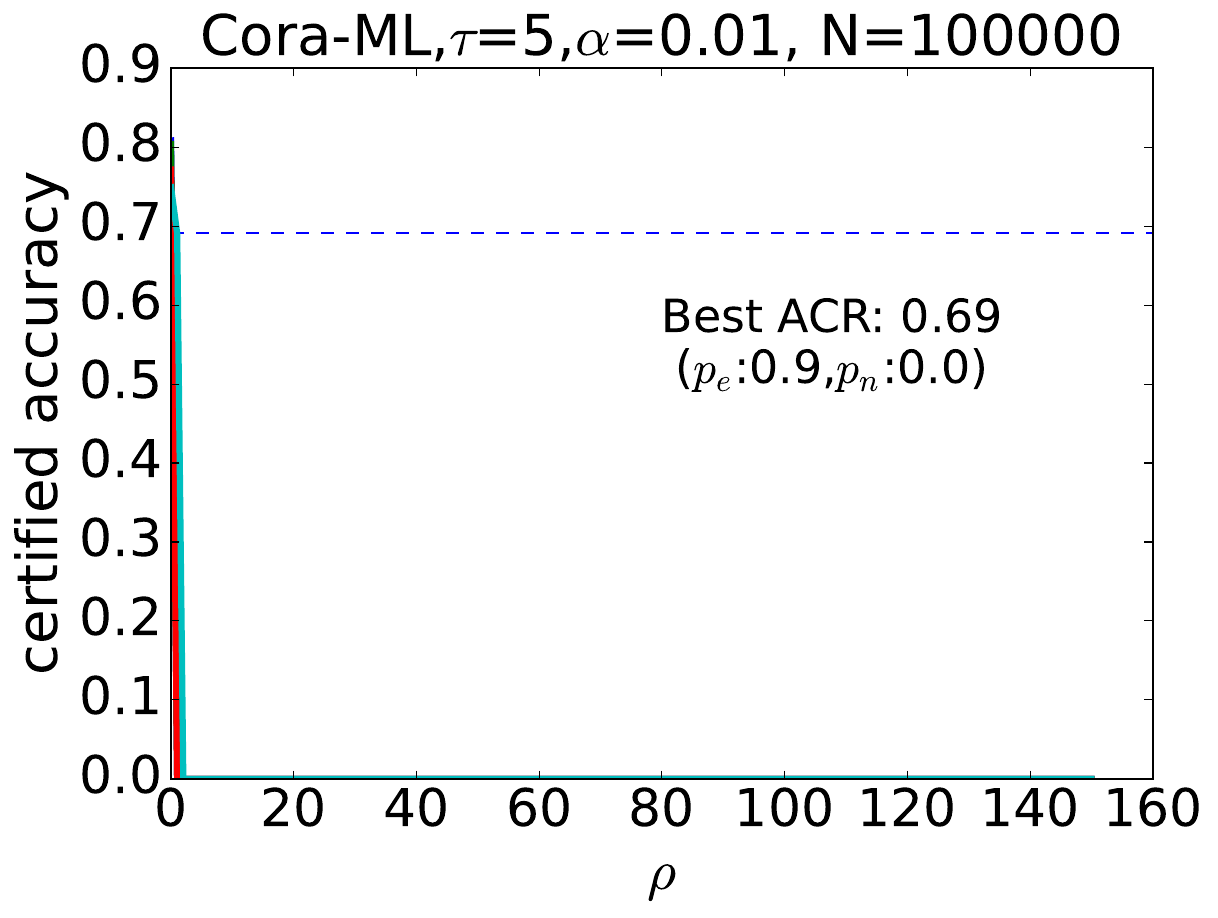}
    }
    \subfigure[Sparsity-aware \cite{bojchevski2020efficient}]{
    \includegraphics[width=0.255\textwidth,height=2.8cm]{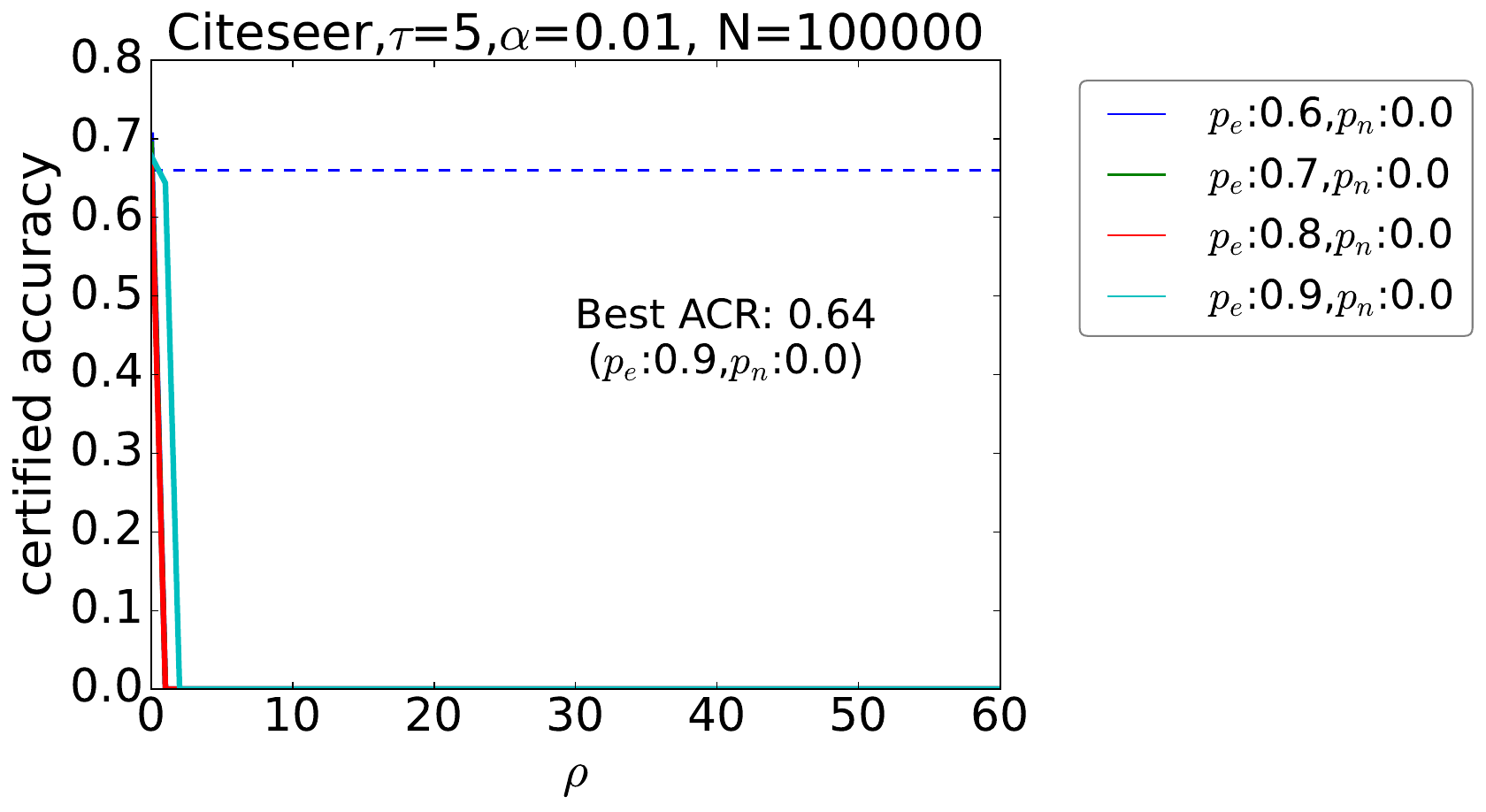}
    }
    \subfigure[\textbf{Node-aware} ($p_e=0$)]{
    \includegraphics[width=0.18\textwidth,height=2.8cm]{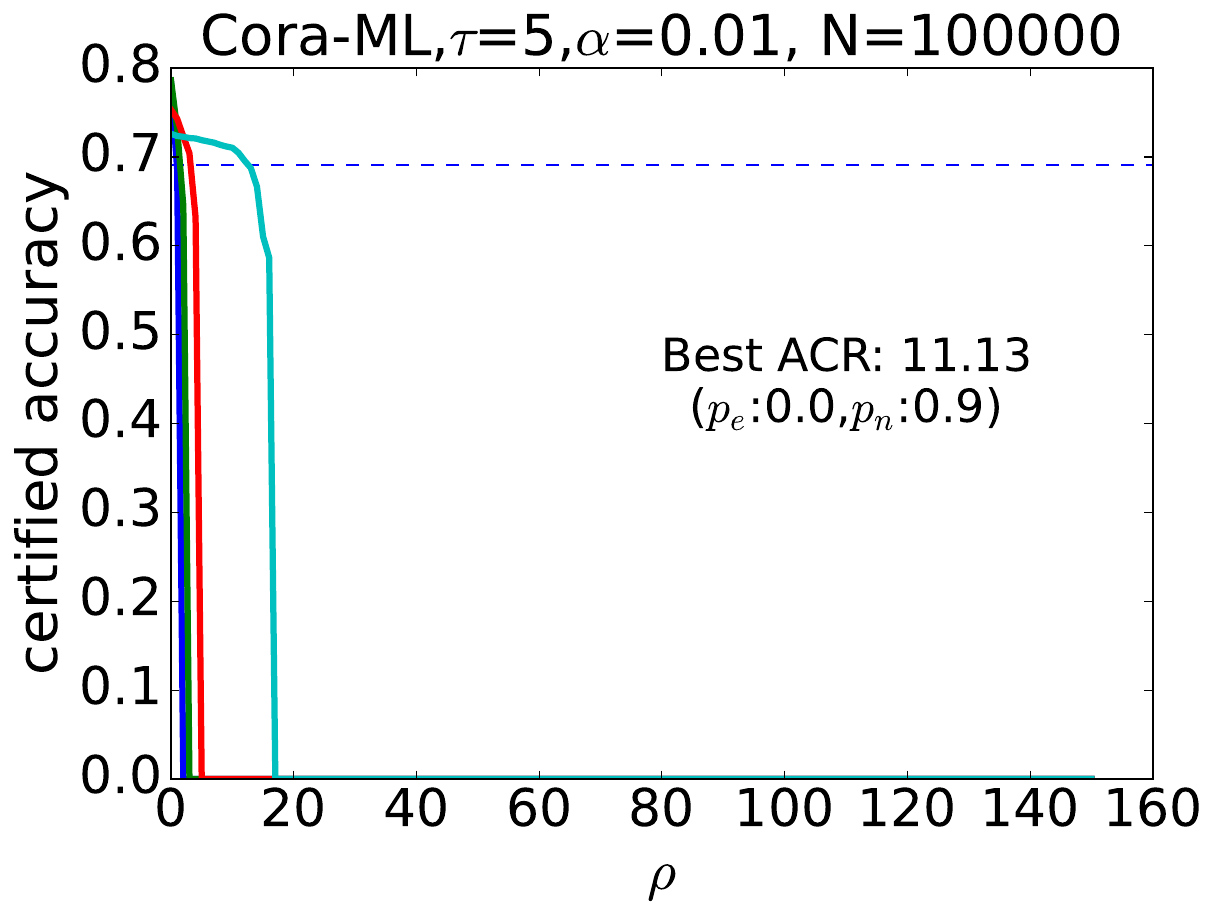}
    }
    \subfigure[\textbf{Node-aware} ($p_e=0$)]{
    \includegraphics[width=0.255\textwidth,height=2.8cm]{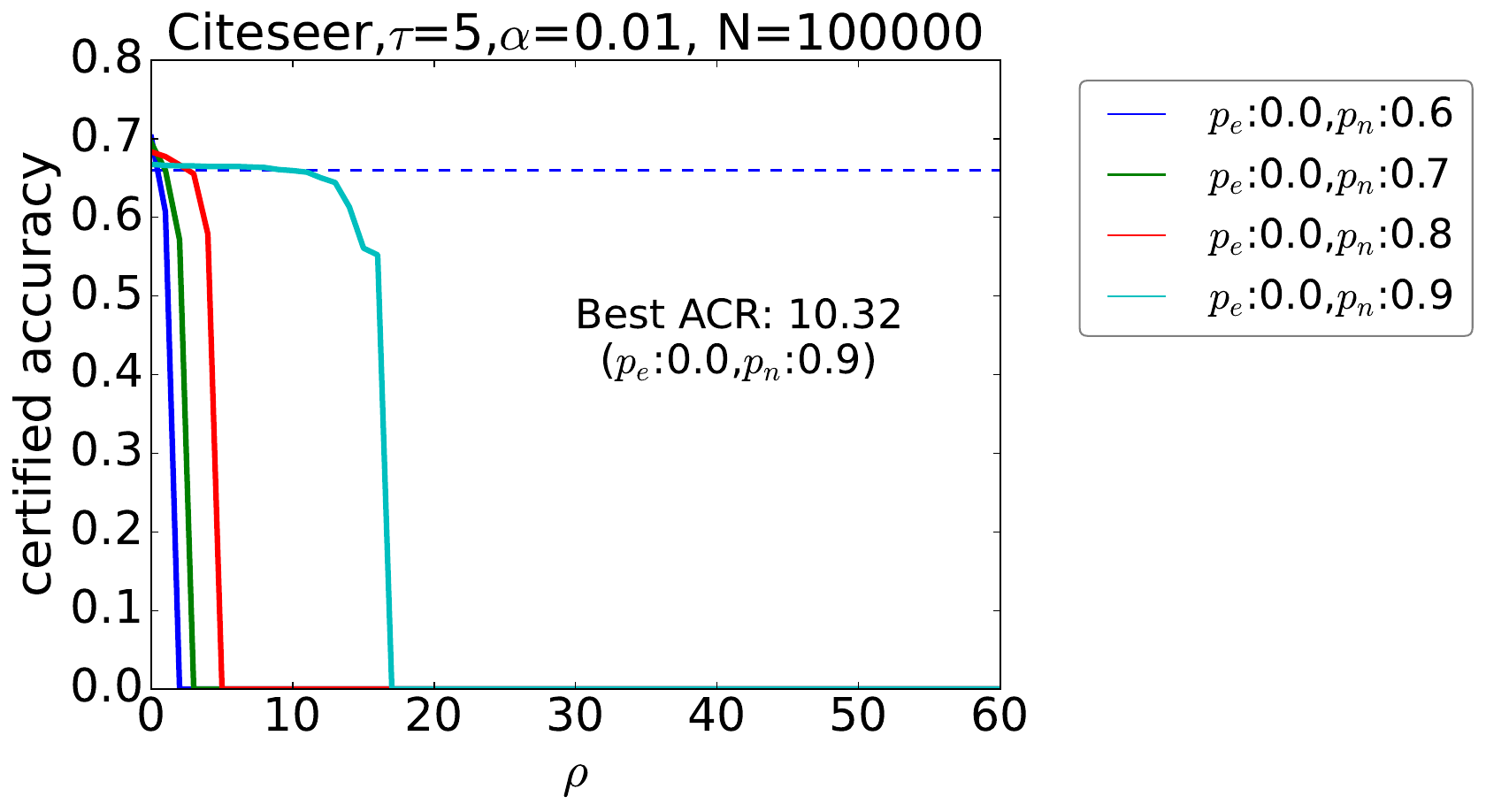}
    }
    \subfigure[\textbf{Node-aware} ($p_e,p_n>0$)]{
    \includegraphics[width=0.18\textwidth,height=2.8cm]{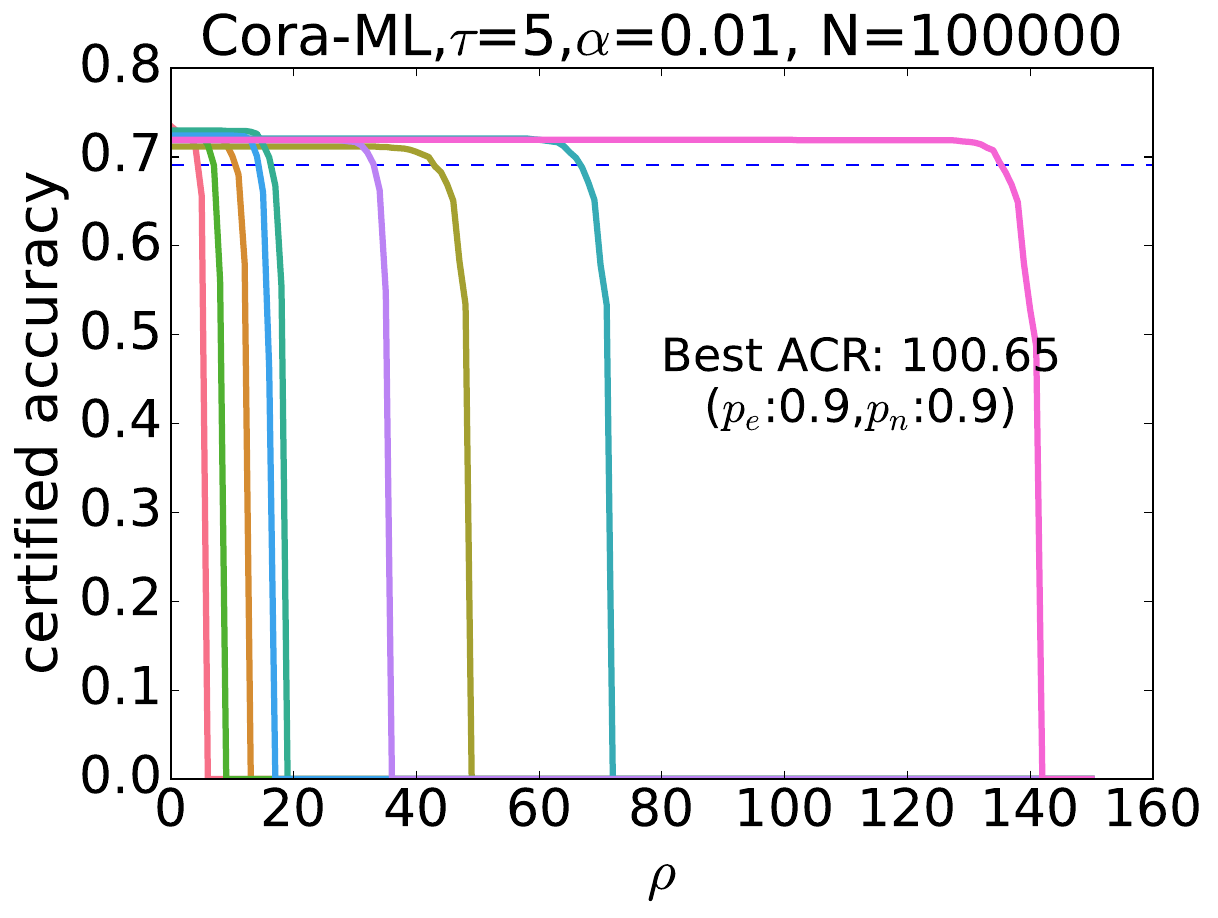}
    }
    \subfigure[\textbf{Node-aware} ($p_e,p_n>0$)]{
    \includegraphics[width=0.255\textwidth,height=2.8cm]{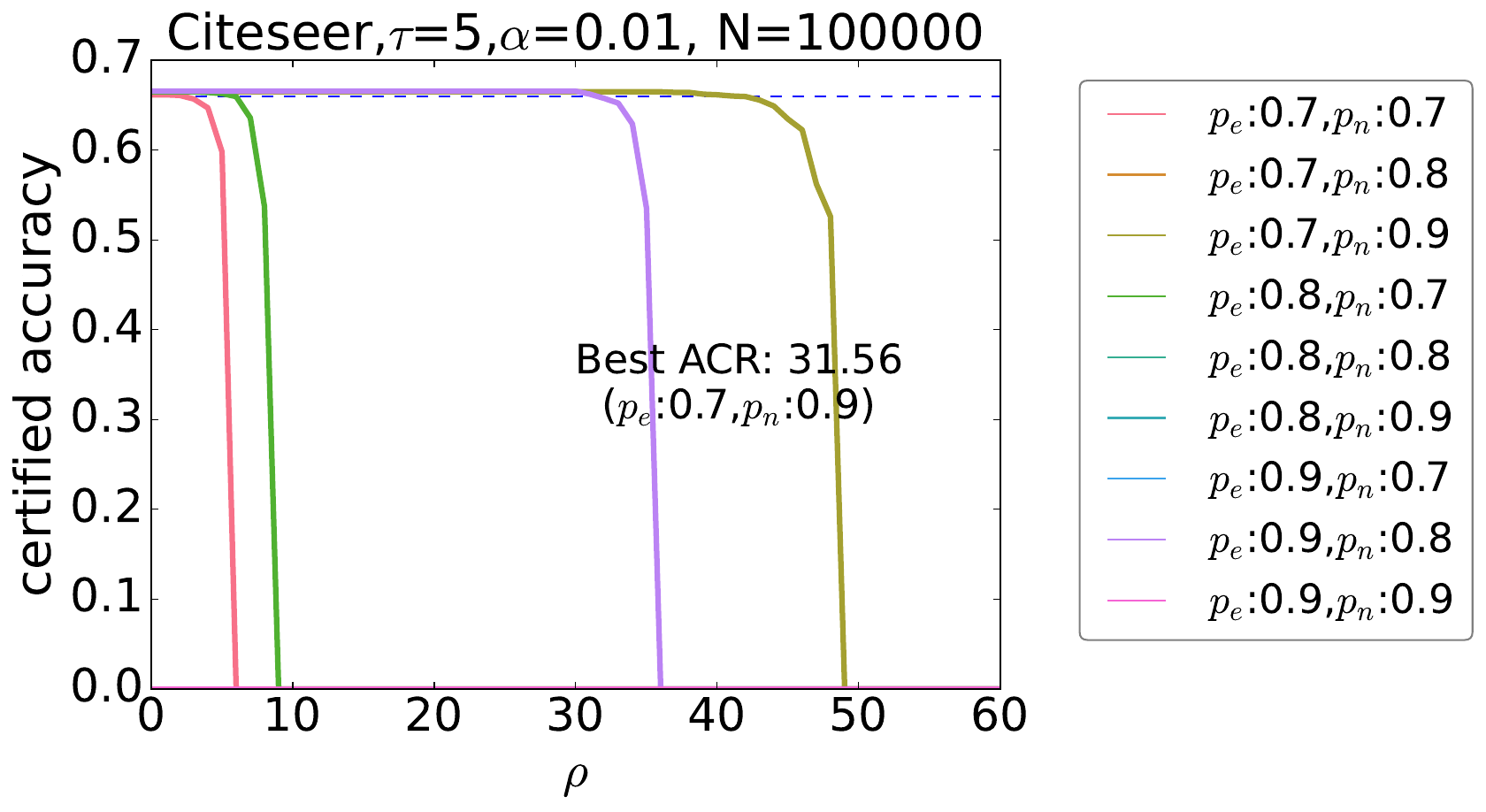}
    }
\caption{Certified accuracy under \textit{evasion} perturbation with $\tau=5$. $\rho$: the number of injected nodes, $\tau$: the edge budget per injected node. The blue dotted line represents the accuracy of Multilayer Perceptron (MLP).}
\label{fig:cer_evasion}
\end{figure}
\vspace{-10pt}
For evasion attacks, we adapt sparsity-aware smoothing~\cite{bojchevski2020efficient} as a baseline for comparison (since bagging-cert~\cite{jia2021intrinsic} is designed for poisoning attacks only).
In Table.~\ref{tab:evasion_CA}, we report the certified accuracy under the attack budgets $\rho=3,5,10$, and also the clean accuracy ($\rho=0$) of the smoothed classifier. The average certifiable radius (ACR) comparison is also shown in Table.~\ref{tab:evasion_CA}. For all the metrics, we report the best result for each method under various smoothing hyper-parameters. Compared to the baseline, we observe that \textit{our proposed node-aware bi-smoothing leads to overwhelmingly higher certified accuracy} under the same budgets. For example, on the Cora-ML dataset, when there are at most $10$ injected nodes with a budget degree of $5$, we achieve a certified accuracy of $72.9\%$ while the baseline is $0.0\%$.

In Figure.~\ref{fig:cer_evasion}, we report the certified accuracy curve under a range of attack budget $\rho$ using various smoothing parameters ($p_e$ and $p_n$). We find that sparse-smoothing is not able to certify attack budget $\rho>2$ with $\tau=5$ (Figure.~\ref{fig:cer_evasion}a,\ref{fig:cer_evasion}b), this highlights that \textit{the direct adaptation from graph modification attack (GMA) to graph injection attack (GIA) is not effective although it is possible}. In contrast,\textit{ our node-aware bi-smoothing has non-trivial ACR under various parameters ($p_e>0.7$ and $p_n>0.7$)}. 

Due to the trade-off between certified accuracy and clean accuracy, we also report the clean accuracy under various smoothing parameters in Figure.~\ref{fig:clean_acc}a,\ref{fig:clean_acc}b. 
Since the MLP does not rely on graph structure, node injection attacks under an evasion scenario are ineffective. As a result, a robust graph classifier is meaningful only if its clean accuracy is higher than the MLP. We only report the effective results (the parameters with clean accuracy higher than the MLP).

\subsubsection{Against Graph Injection Poisoning Attack}
Similar to evasion attacks, we also observed significantly higher certifiable performance under our node-aware bi-smoothing compared to baselines sparsity-aware~\cite{bojchevski2020efficient} and bagging-cert~\cite{jia2021intrinsic} (Table.~\ref{tab:poison_CA} and Figure.~\ref{fig:cer_poison}). 
Notably, \textit{our ensemble smoothed classifier has the potential to increase the clean accuracy}, as shown in Figure.~\ref{fig:clean_acc}c,\ref{fig:clean_acc}d,\ref{fig:clean_acc}e,\ref{fig:clean_acc}f. Without sacrificing clean accuracy, our proposed node-aware-exclude has over $55\%$ certifiable accuracy on Cora-ML, and $43\%$ certifiable accuracy on Citeseer with $10$ allowable malicious node with arbitrary features, while the two baseline methods are not able to certify any one of the nodes under the same condition as shown in Table.~\ref{tab:poison_CA} and Figure.~\ref{fig:cer_poison}. Moreover, our node-aware-exclude has an average certifiable radius (ACR) of $16.66$ on Cora-ML and $12.05$ on Citeseer, which improved baselines by $760\%$ and $530\%$, respectively. \textit{These experimental results reveal the effectiveness of our proposed method in poisoning attack scenarios.}

\begin{table}[hbt!]
\centering
\caption{Certified accuracy comparison under \textit{poisoning} perturbation ($\tau=5$).}
\setlength{\tabcolsep}{3pt}
\begin{tabular}{clrrrrr}
\hline
\multicolumn{2}{c}{certified   accuracy} & \multicolumn{4}{c}{$\rho$} & \multirow{2}{*}{ACR} \\ \cline{1-6}
\multicolumn{1}{l}{dataset} & methods & \multicolumn{1}{l}{0} & \multicolumn{1}{l}{3} & \multicolumn{1}{l}{5} & \multicolumn{1}{l}{10} &  \\ \hline
\multirow{4}{*}{Cora-ML} & sparse-aware\cite{bojchevski2020efficient} & \textbf{0.832} & 0.000 & 0.000 & 0.000 & 0.229 \\
 & bagging\cite{jia2021intrinsic} & 0.744 & 0.354 & 0.163 & 0.000 & 1.931 \\
 & \textbf{node-aware-include} & 0.770 & {\ul 0.480} & {\ul 0.428} & {\ul 0.351} & {\ul 8.297} \\
 & \textbf{node-aware-exclude} & {\ul 0.819} & \textbf{0.666} & \textbf{0.587} & \textbf{0.554} & \textbf{16.658} \\ \hline
\multirow{4}{*}{Citeseer} & sparse-aware\cite{bojchevski2020efficient} & \textbf{0.740} & 0.000 & 0.000 & 0.000 & 0.130 \\
 & bagging\cite{jia2021intrinsic} & 0.681 & 0.362 & 0.146 & 0.001 & 1.905 \\
 & \textbf{node-aware-include} & {\ul 0.734} & {\ul 0.493} & {\ul 0.443} & {\ul 0.321} & {\ul 8.835} \\
 & \textbf{node-aware-exclude} & 0.717 & \textbf{0.530} & \textbf{0.462} & \textbf{0.428} & \textbf{12.048} \\ \hline
\end{tabular}
\label{tab:poison_CA}
\end{table}

\begin{figure}[hbt!]
\centering
    \subfigure[Sparsity-aware \cite{bojchevski2020efficient}]{
    \includegraphics[width=0.18\textwidth,height=2.8cm]{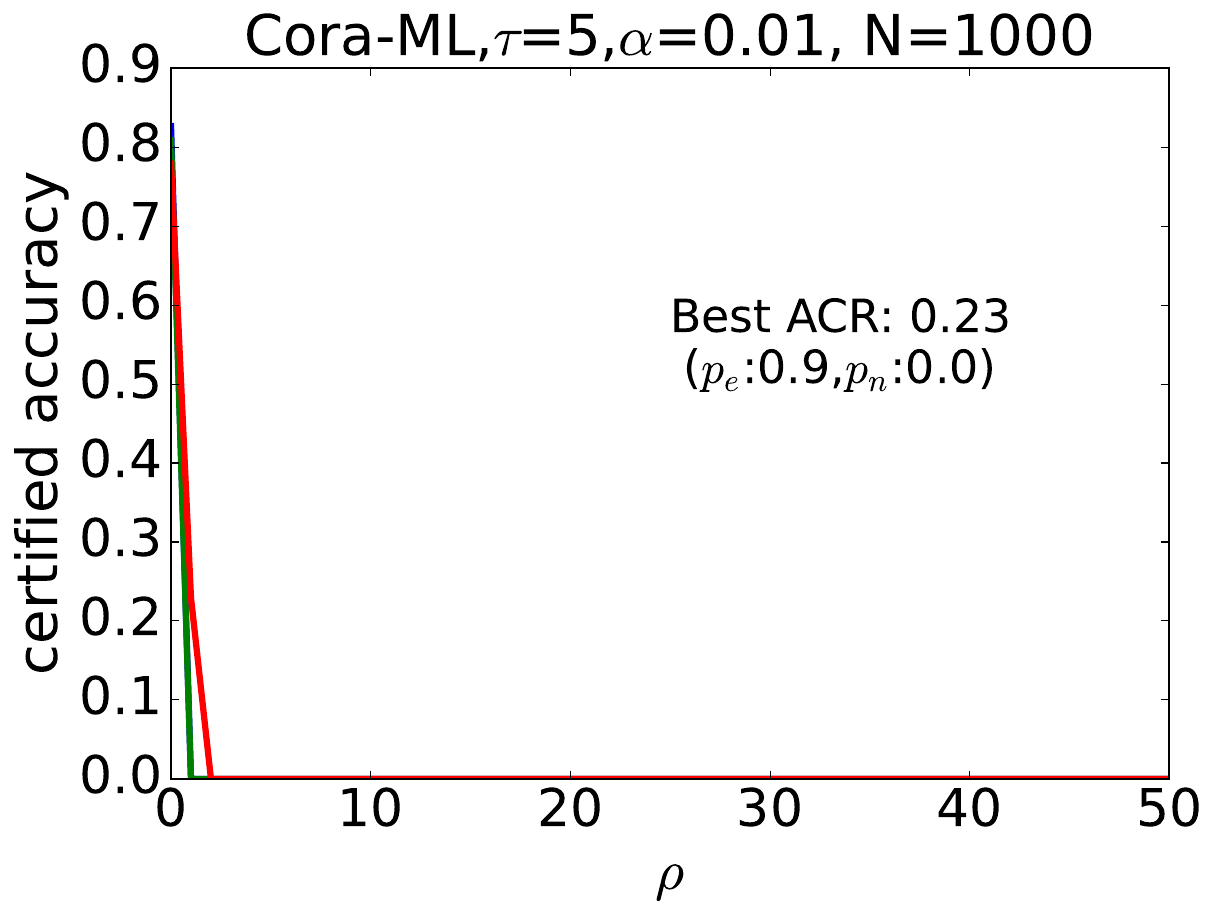}
    }
    \subfigure[Sparsity-aware \cite{bojchevski2020efficient}]{
    \includegraphics[width=0.255\textwidth,height=2.8cm]{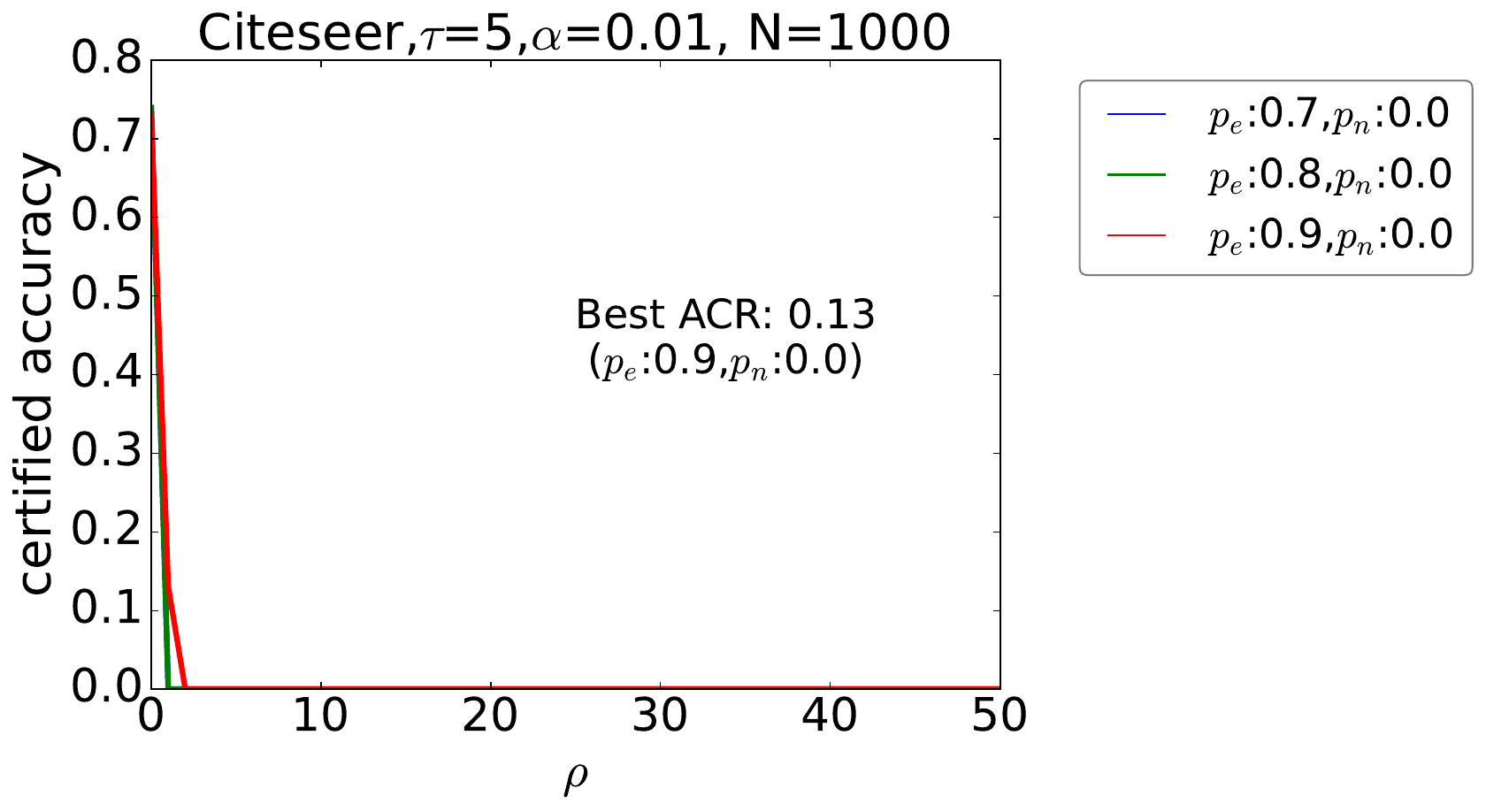}
    }
    \subfigure[Bagging-cert\cite{jia2021intrinsic}]{
    \includegraphics[width=0.18\textwidth,height=2.8cm]{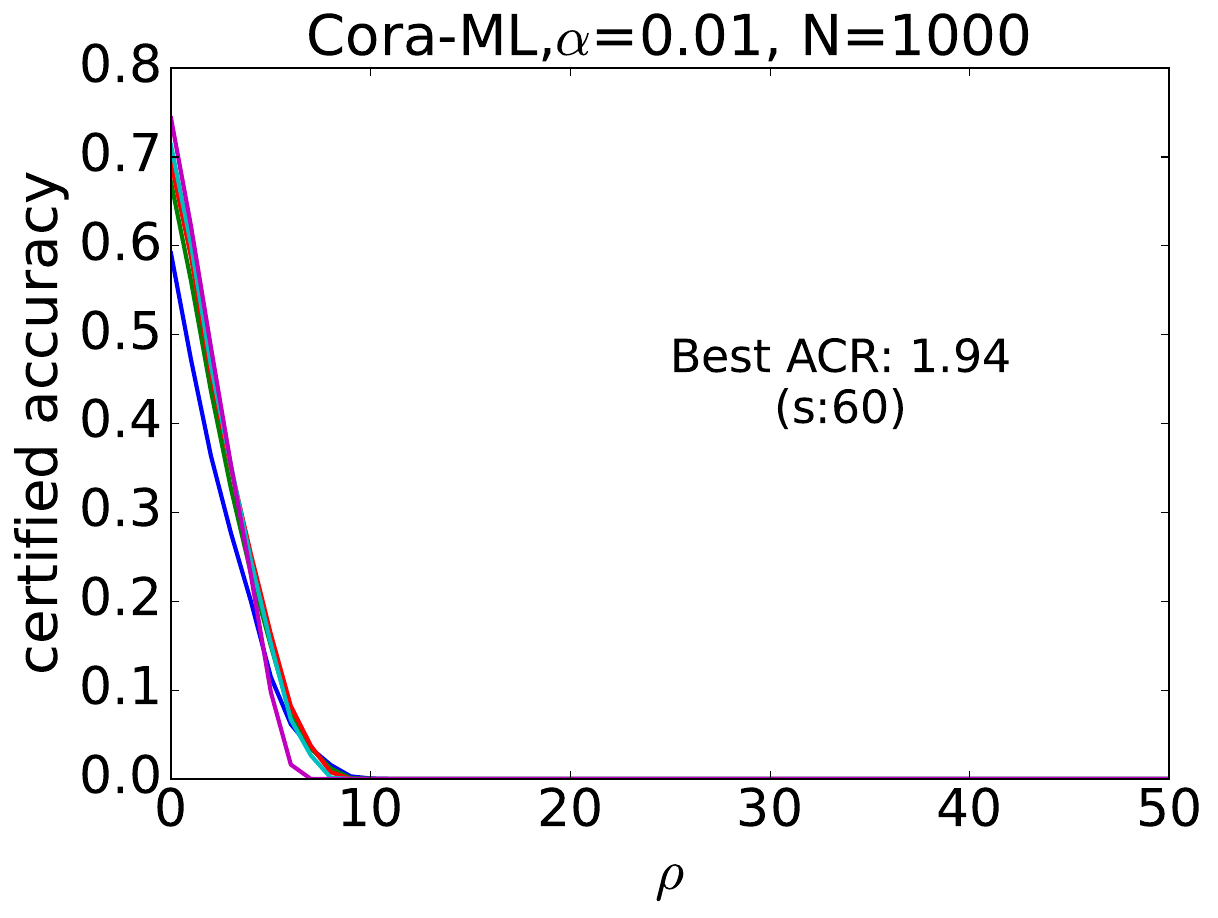}
    }
    \subfigure[Bagging-cert\cite{jia2021intrinsic}]{
    \includegraphics[width=0.255\textwidth,height=2.8cm]{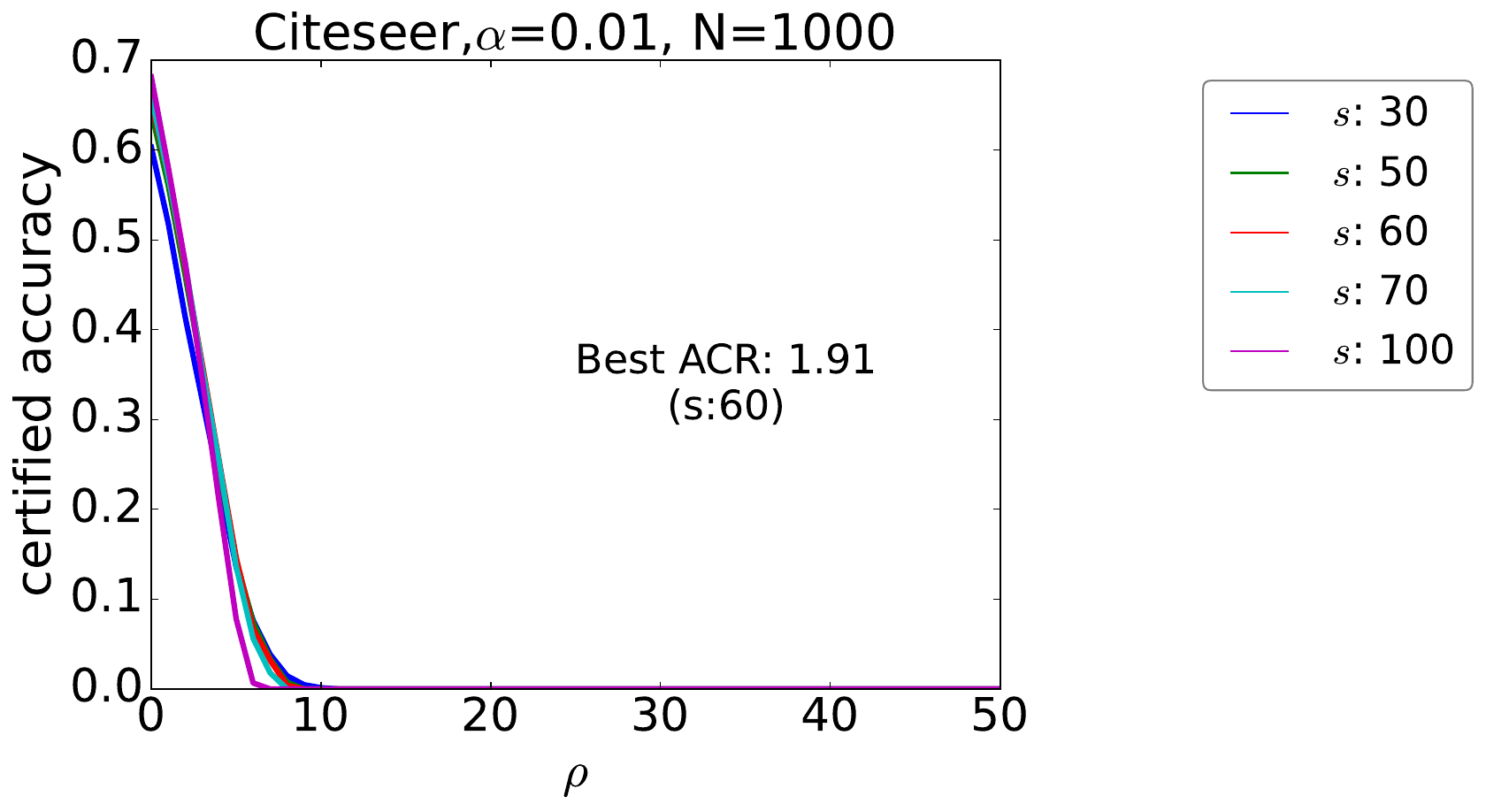}
    }
    \subfigure[\textbf{Node-aware}-include]{
    \includegraphics[width=0.18\textwidth,height=2.8cm]{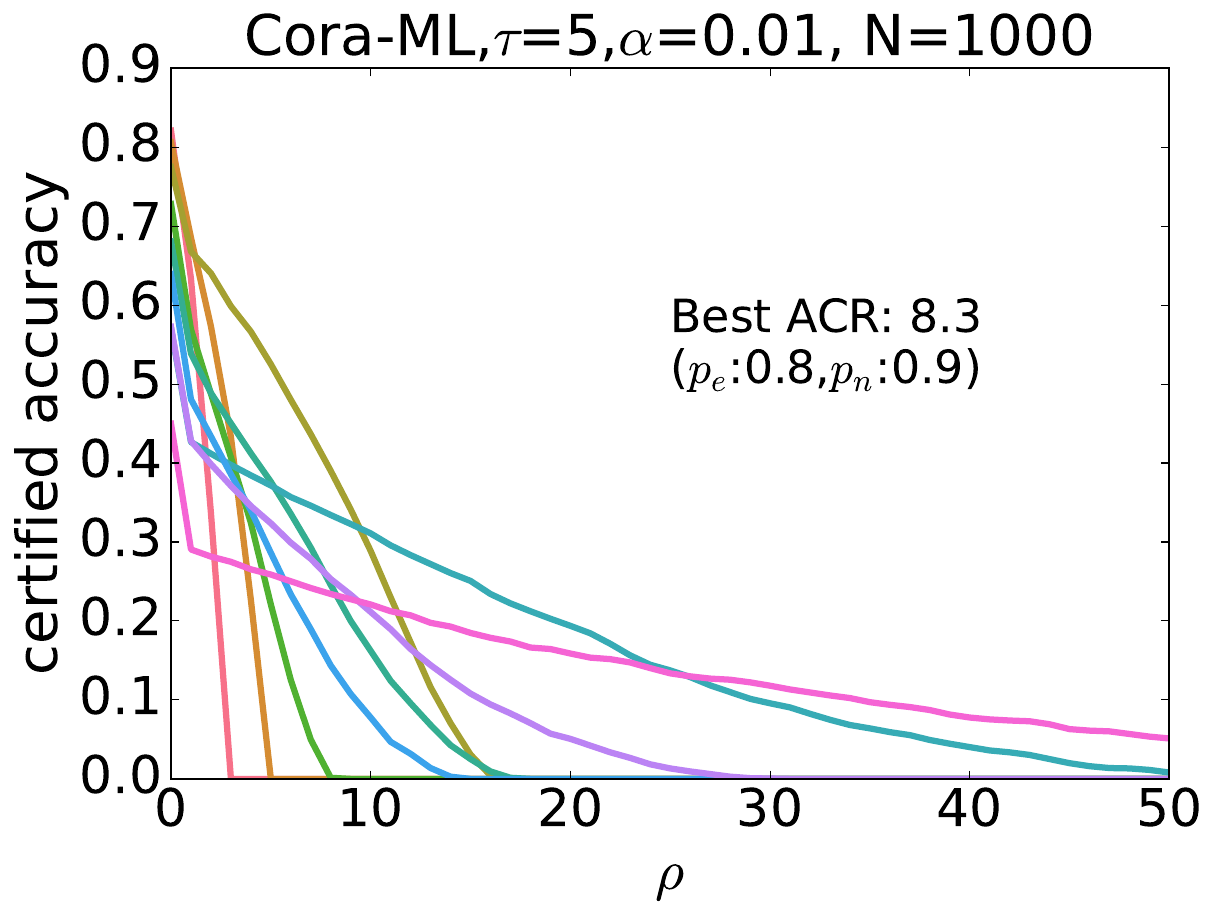}
    }
    \subfigure[\textbf{Node-aware}-include]{
    \includegraphics[width=0.255\textwidth,height=2.8cm]{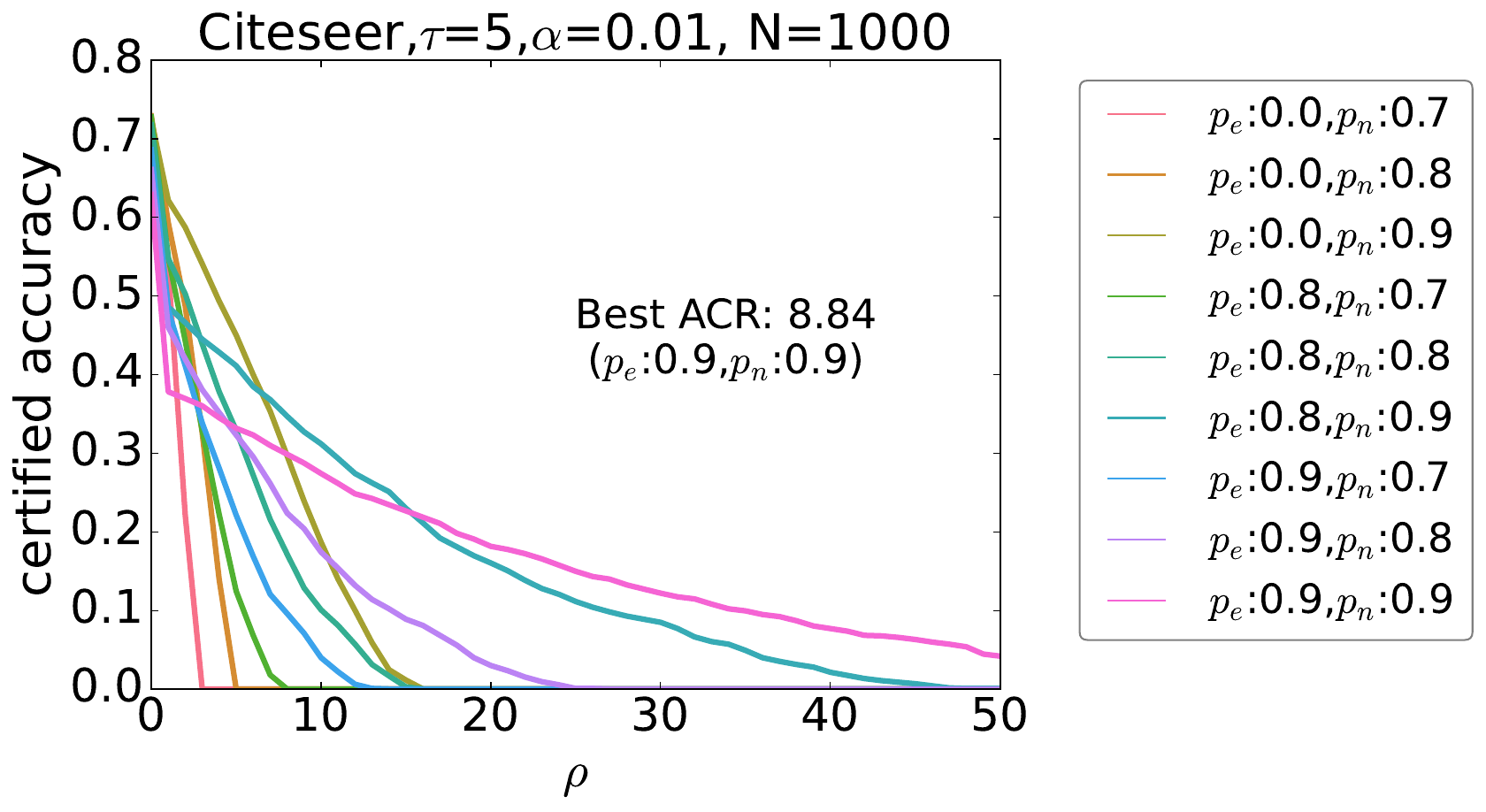}
    }
    \subfigure[\textbf{Node-aware}-exclude]{
    \includegraphics[width=0.18\textwidth,height=2.8cm]{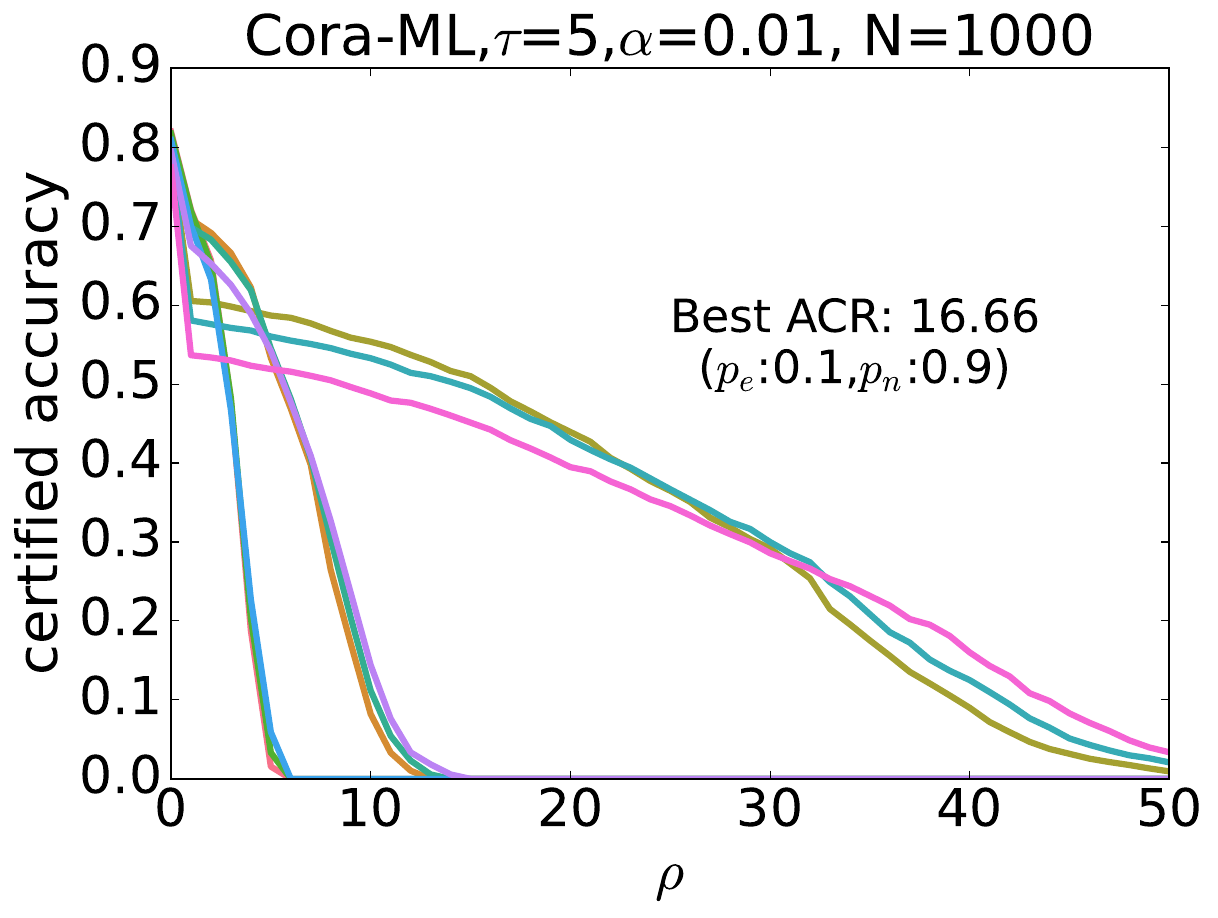}
    }
    \subfigure[\textbf{Node-aware}-exclude]{
    \includegraphics[width=0.255\textwidth,height=2.8cm]{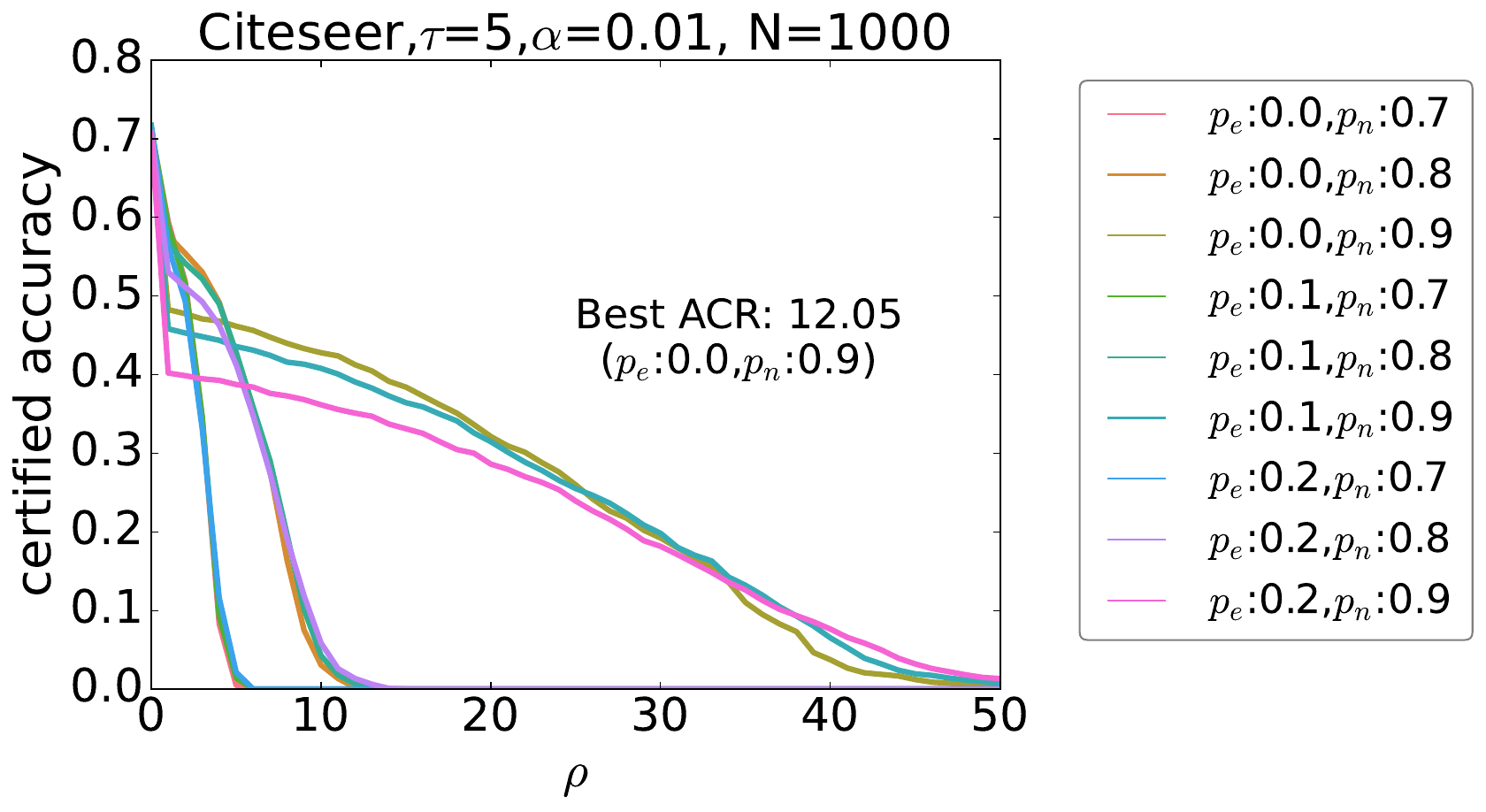}
    }
\caption{Certified accuracy under \textit{poisoning} perturbation with $\tau=5$. $s$ is the bagging size of the model~\cite{jia2021intrinsic}.
The sharp decrease in certified accuracy at the beginning is due to the ABSTAIN for less confident $y_A$.}
\label{fig:cer_poison}
\end{figure}

\begin{table}[hbt!]
\centering
\setlength{\tabcolsep}{3pt}
\caption{Certified precision and recall comparison on recommender system ($\tau=10$).}
\begin{tabular}{clrrrrr}
\hline
\multicolumn{2}{c}{MovieLens-100k} & \multicolumn{4}{c}{$\rho$} & \multirow{2}{*}{ACR} \\ \cline{1-6}
\multicolumn{1}{l}{metrics} & methods & 0 & 3 & 5 & 10 &  \\ \hline
\multirow{2}{*}{\begin{tabular}[c]{@{}c@{}}certified \\ precisioin\end{tabular}} & PORE\cite{jia2023pore} & 0.203 & 0.056 & 0.038 & 0.015 & 0.427 \\
 & \textbf{node-aware-exclude} & \textbf{0.209} & \textbf{0.081} & \textbf{0.053} & \textbf{0.018} & \textbf{0.617} \\ \hline
\multirow{2}{*}{\begin{tabular}[c]{@{}c@{}}certified \\ recall\end{tabular}} & PORE\cite{jia2023pore} & 0.170 & 0.050 & 0.037 & 0.018 & 0.424 \\
 & \textbf{node-aware-exclude} & \textbf{0.174} & \textbf{0.074} & \textbf{0.051} & \textbf{0.021} & \textbf{0.601} \\ \hline
\end{tabular}
\label{tab:RecS_CA}
\end{table}

\begin{figure}[h]
    \centering
    \subfigure[PORE~\cite{jia2023pore}]{
    \includegraphics[width=0.18\textwidth,height=2.8cm]{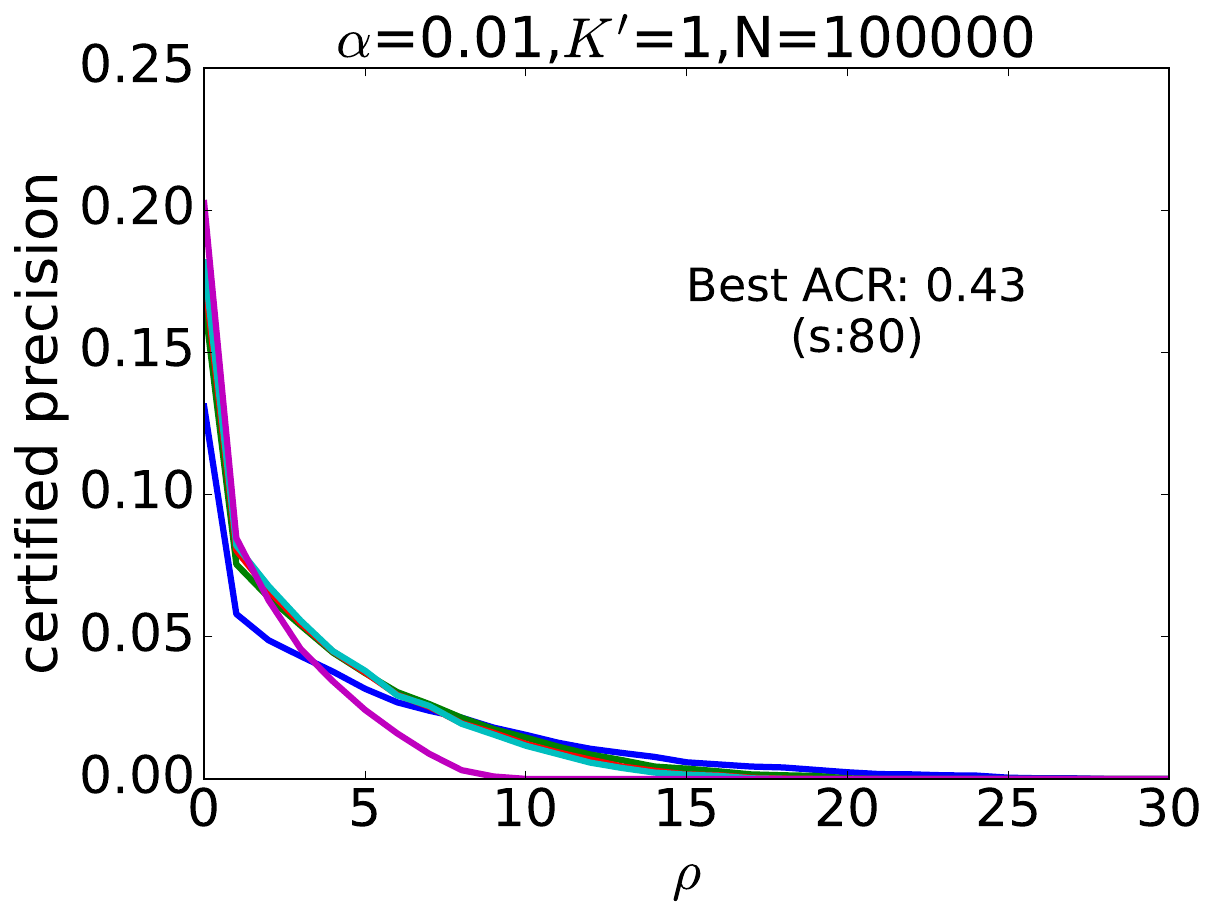}
    }
    \subfigure[PORE~\cite{jia2023pore}]{
    \includegraphics[width=0.255\textwidth,height=2.8cm]{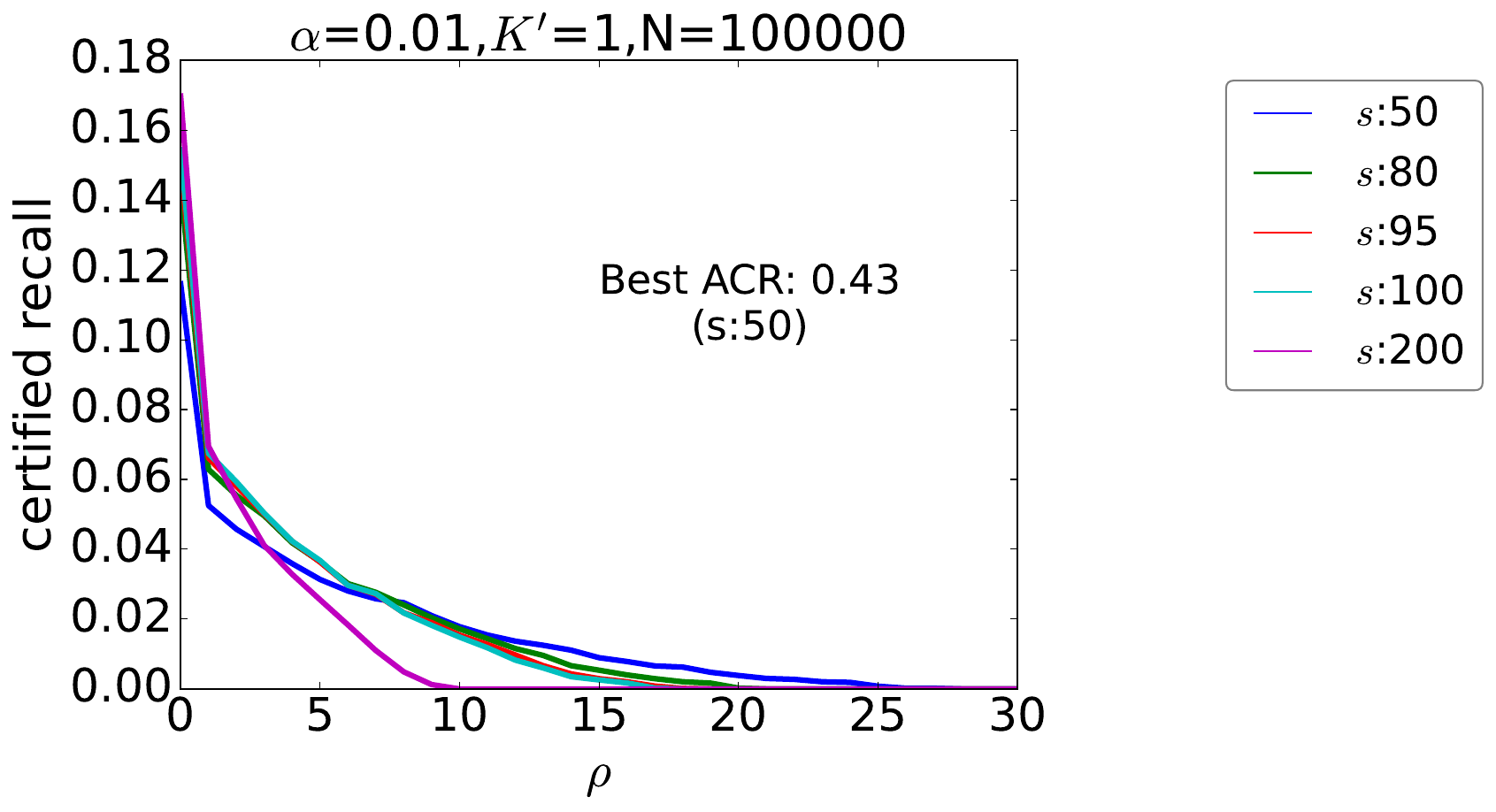}
    }
    \subfigure[\textbf{Node-aware}-exclude]{
    \includegraphics[width=0.18\textwidth,height=2.8cm]{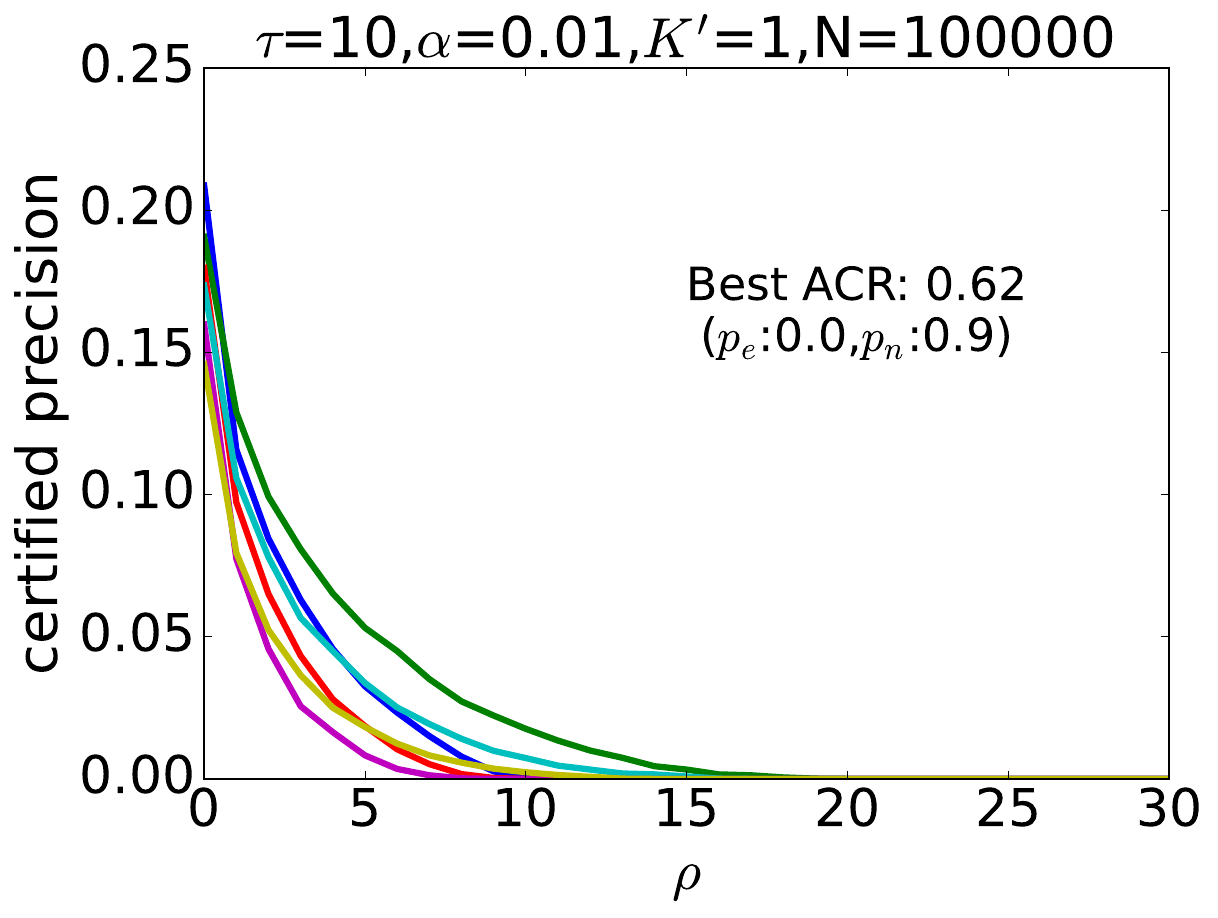}
    }
    \subfigure[\textbf{Node-aware}-exclude]{
    \includegraphics[width=0.255\textwidth,height=2.8cm]{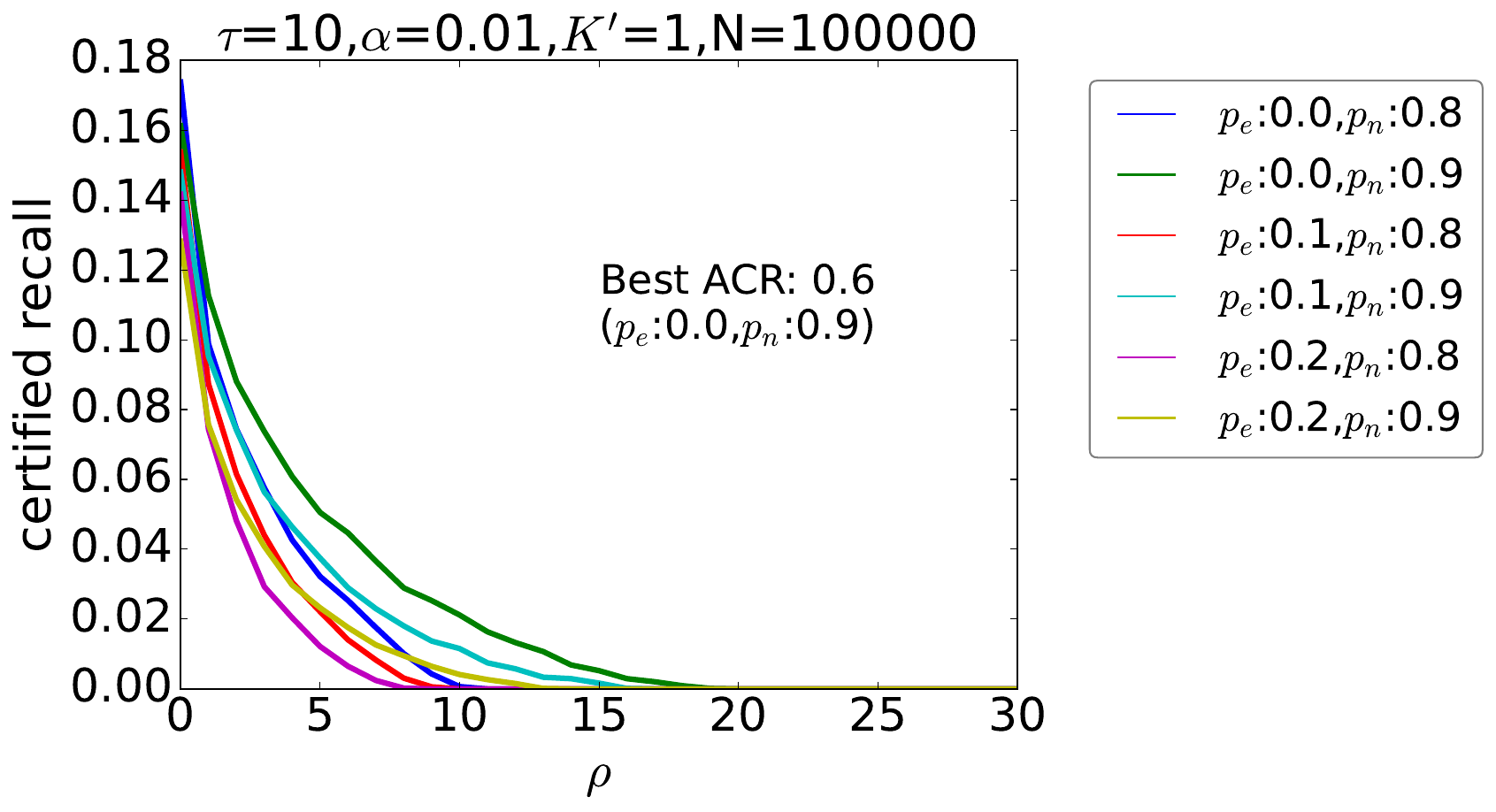}
    }
    \caption{Certified precision and recall on SAR Recommender system (MovieLens-100K dataset, $85\%$ training) under poisoning perturbation, where $s$ is the bagging size of the model PORE~\cite{jia2023pore}, $K'$ is the number of items recommended by the base recommender, and we set $K=10$ as the number of items recommended by the smoothed recommender.}
    \label{fig:cer_RS}
\end{figure}


\subsection{Certifiable Robustness for Recommender System}
Our proposed node-aware-exclude is also applicable to provide provable recommendations. We compare our model to the provable recommender system, PORE~\cite{jia2023pore}. Similar to node classification, our model is capable of considering the restricted attacker with node degree constraint, while PORE can only certify malicious nodes with unlimited degree budgets. Table.~\ref{tab:RecS_CA} and Figure.~\ref{fig:cer_RS} show the certified precision and recall under PORE and our proposed smoothing with degree budget $\tau=10$. Notably, 
our smoothed classifier has observed better certification performance.

\subsection{Empirical Robustness for Node Classification}
In this section, we take a state-of-the-art GIA attacker HAOGIA~\cite{chen2022understanding} as an example to study the empirical robustness of our model. For comparison, we take \textit{four} widely-used GNN defense models, GCN~\cite{kipf2016semi}, ProGNN~\cite{jin2020graph}, RobustGCN\cite{zhu2019robust}, and GNNGuard~\cite{zhang2020gnnguard} as baselines, and evaluate their accuracy under attacks with budgets $\rho=\{10,20,30,40,50\}$ and $\tau=5$. The results are presented in Table.~\ref{tab:empirical}. 

Although our model is primarily designed for certified robustness, it achieves a competitive empirical accuracy.
Notably, the node-aware-include variant \textit{maintains nearly unchanged empirical accuracy even as the attack budget increases}.
In both datasets, we achieve the best empirical defense when the number of injected nodes $\rho\geq 40$. \textit{These results highlight the effectiveness of our model as an empirical defense framework.} Although the node-aware-exclude variant experiences a slight decrease in empirical accuracy compared to \textit{include}, it achieves the best certified accuracy. 
Notably, while our model can provide both empirical and certified robustness, other common defense baselines can only offer empirical robustness without any guarantee.

\begin{table*}
\centering
\caption{Empirical robust accuracy of different defense models under HAOGIA~\cite{chen2022understanding} attack. The baseline models can only provide empirical robustness, while our method offers both empirical and certified robustness. We show the parameters achieve better certified accuracy (the last $3^{th}$ and $4^{th}$ columns) and better empirical accuracy (the last two columns).}
\setlength{\tabcolsep}{2.5pt}
\begin{tabular}{clllllllll}
\hline
\multicolumn{2}{c}{defense   models} & GCN & ProGNN & RobustGCN & GNNGuard & \textbf{node-aware-include} & \textbf{node-aware-exclude} & \textbf{node-aware-include} & \textbf{node-aware-exclude} \\ \hline
\multicolumn{1}{l}{} & attack & \multicolumn{4}{c}{\multirow{2}{*}{empirical robust accuracy}} & ($p_e:0.8, p_n:0.9$) & ($p_e:0.1, p_n:0.9$) & ($p_e:0.1, p_n:0.7$) & ($p_e:0.1, p_n:0.7$) \\ \cline{7-10} 
\multicolumn{1}{l}{dataset} & $\rho$ & \multicolumn{4}{c}{} & empirical (certified) & empirical (certified) & empirical (certified) & empirical (certified) \\ \hline
\multirow{6}{*}{Cora-ML} & clean & {\ul 0.816} & \textbf{0.832} & 0.800 & 0.792 & 0.571 & 0.784 & 0.814 & 0.807 \\
 & 10 & {\ul 0.815} & \textbf{0.831} & 0.800 & 0.788 & 0.560 (0.311) & 0.778 (\textbf{0.533}) & 0.814 (0.000) & 0.811 (0.000) \\
 & 20 & {\ul 0.815} & \textbf{0.830} & 0.802 & 0.790 & 0.551 (0.194) & 0.776 (\textbf{0.429}) & 0.814 (0.000) & 0.809 (0.000) \\
 & 30 & {\ul 0.815} & \textbf{0.816} & 0.791 & 0.782 & 0.550 (0.096) & 0.775 (\textbf{0.300}) & 0.813 (0.000) & 0.785 (0.000) \\
 & 40 & 0.775 & 0.791 & 0.775 & 0.756 & 0.546 (0.040) & 0.770 (\textbf{0.125}) & \textbf{0.813} (0.000) & {\ul0.801} (0.000) \\
 & 50 & 0.764 & 0.771 & 0.763 & 0.745 & 0.543 (0.008) & 0.762 (\textbf{0.021}) & \textbf{0.808} (0.000) & {\ul0.793} (0.000) \\ \hline
\multicolumn{1}{l}{} & $\rho$ & \multicolumn{4}{c}{empirical robust accuracy} & ($p_e:0.8, p_n:0.9$) & ($p_e:0.1, p_n:0.9$) & ($p_e:0.6, p_n:0.7$) & ($p_e:0.7, p_n:0.7$) \\ \hline
\multirow{6}{*}{Citeseer} & clean & 0.700 & 0.719 & 0.702 & 0.668 & 0.675 & 0.714 & \textbf{0.736} & {\ul 0.732} \\
 & 10 & 0.695 & 0.707 & 0.688 & 0.657 & 0.614 (0.312) & 0.700 (\textbf{0.408}) & \textbf{0.728} (0.000) & {\ul 0.724} (0.002) \\
 & 20 & 0.685 & 0.681 & 0.683 & 0.649 & 0.611 (0.160) & 0.685 (\textbf{0.315}) & \textbf{0.732} (0.000) & {\ul 0.714} (0.000) \\
 & 30 & 0.647 & 0.673 & 0.654 & 0.623 & 0.607 (0.085) & 0.693 (\textbf{0.198}) & \textbf{0.730} (0.000) & {\ul 0.711} (0.000) \\
 & 40 & 0.638 & 0.648 & 0.638 & 0.610 & 0.603 (0.021) & 0.681 (\textbf{0.065}) & \textbf{0.730} (0.000) & {\ul 0.709} (0.000) \\
 & 50 & 0.629 & 0.611 & 0.618 & 0.615 & 0.600 (0.001) & 0.677 (\textbf{0.007}) & \textbf{0.729} (0.000) & {\ul 0.706} (0.000) \\ \hline
\end{tabular}
\label{tab:empirical}
\end{table*}

\subsection{Ablation Study and Hyper-parameters}

\begin{figure}[hbt!]
\centering
    \subfigure[$\underline{p_A}$]{
    \includegraphics[width=0.220\textwidth,height=2.8cm]{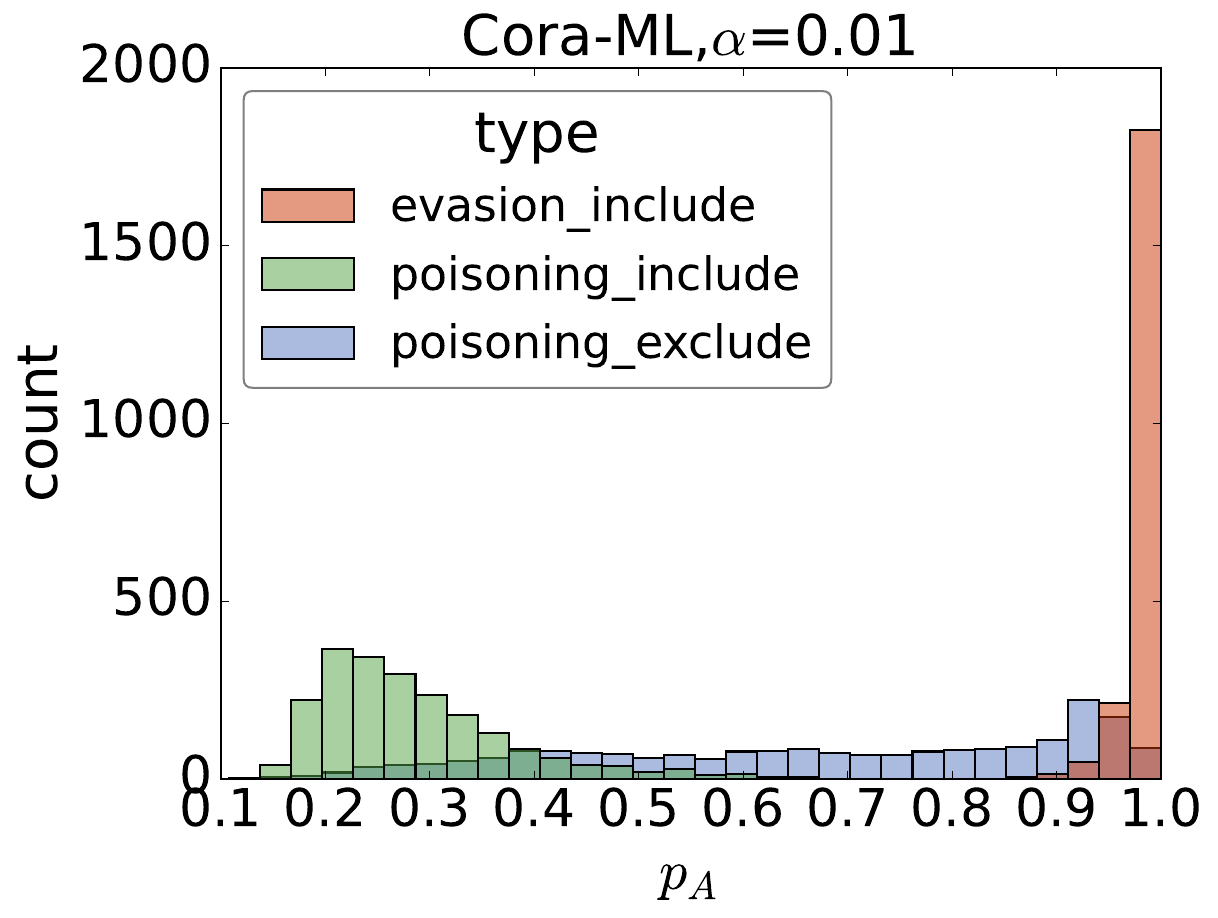}
    }
    \subfigure[$\underline{p_A}$]{
    \includegraphics[width=0.220\textwidth,height=2.8cm]{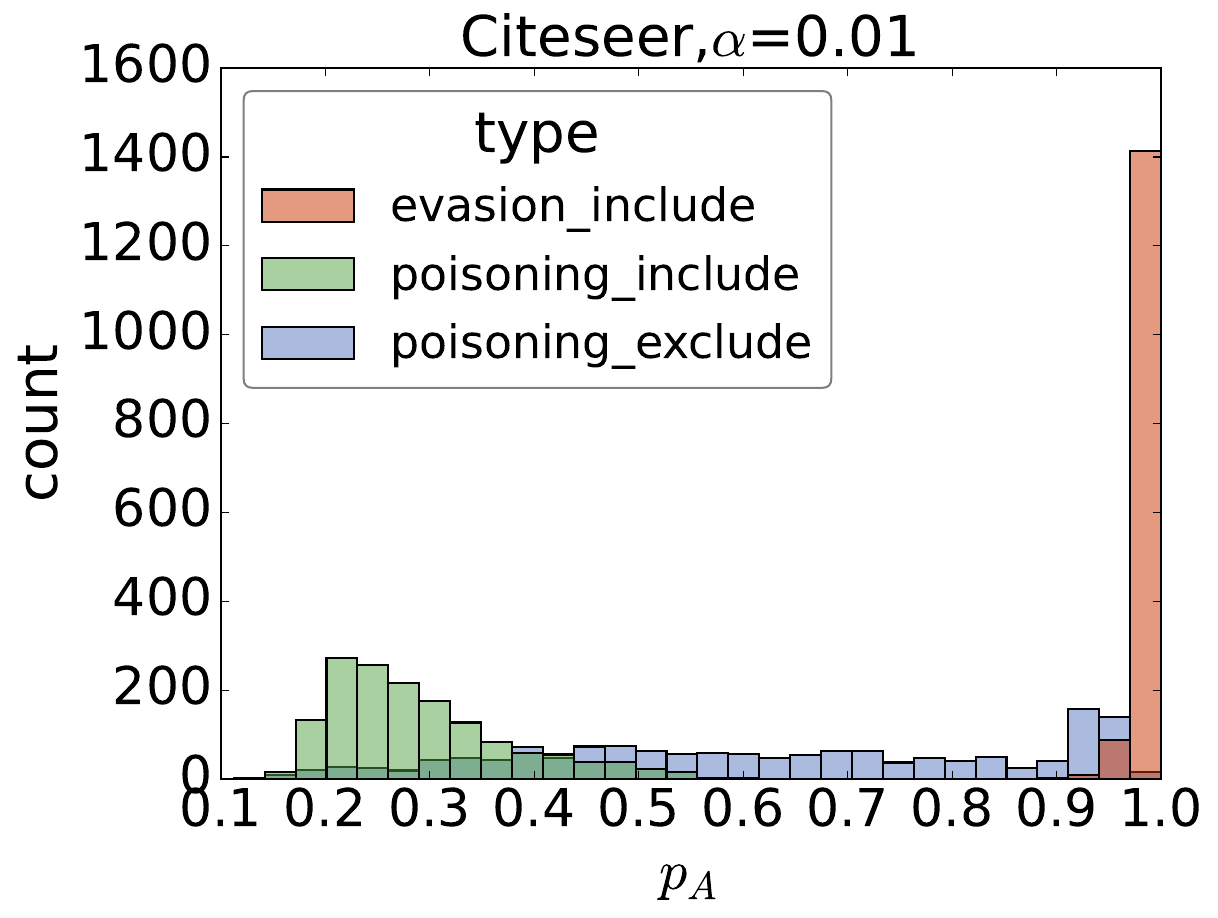}
    }
\caption{Histograms of $\underline{p_A}$ under different types of model.}
\label{fig:pA_pB}
\end{figure}

To further analyze the important factor of the effectiveness of our proposed certifying scheme, we study our node-aware bi-smoothing with a single smoothing distribution and compare the node-aware-include with node-aware-exclude. 

\subsubsection{Edge Deletion and Node Deletion smoothing}
The ablation studied with the smoothing node-deletion only ($p_e=0$) and edge-deletion only ($p_n=0$) are shown in Figure.~\ref{fig:cer_evasion} and \ref{fig:cer_poison}. Note that, the baseline~\cite{bojchevski2020efficient} corresponds to the edge-deletion only ($p_n=0$).

In the case of edge-deletion only ($p_n=0$), we observe that the certifying performance improves as $p_e$ increases, particularly in the node-aware-include strategy where the base model also votes for isolated nodes.
In node-aware-\textit{exclude} (Figure.~\ref{fig:cer_poison}g,\ref{fig:cer_poison}h), 
we achieve the highest Average Certifiable Radius (ACR) with small values of $p_e$ ($0.0$ and $0.1$). 
This phenomenon can be attributed to a high $p_e$ resulting in a large number of isolated nodes that the model does not provide votes for. 
Consequently, the model has a less confident $\underline{p_A}$ and a higher ratio of ABSTAIN in our statistical testing for $y_A$ due to the limited sample size. Nevertheless, this issue can be mitigated as $N$ increases (Figure.\ref{fig:cer_poison_N}), where $p_e>0$ consistently achieves better performance. In the case of node-deletion only ($p_e=0$), we observe an increasing performance as $p_n$ increases (Figure.\ref{fig:cer_evasion}c,\ref{fig:cer_evasion}d and Figure.\ref{fig:cer_poison}e,\ref{fig:cer_poison}f). 

These ablation studies clearly demonstrate the importance of both edge-deletion and node-deletion smoothing techniques, and the latter has a more significant impact, which is further supported by the observations in Figure.~\ref{fig:hyper_heat}.


\subsubsection{Comparing node-aware-include and node-aware-exclude}
When comparing the \textit{include} strategy with the \textit{exclude} strategy in the poisoning attack scenario (Table.\ref{tab:poison_CA} and Figure.\ref{fig:cer_poison}), we observe that the \textit{exclude} strategy enhances the performance in terms of the ACR and certified accuracy. This improvement is primarily attributed to the higher confidence in $p_A$ achieved by the \textit{exclude} strategy (Figure.~\ref{fig:pA_pB}).

\subsubsection{Hyper-parameters Analysis}

\begin{figure}[hbt!]
    \centering
    \includegraphics[width=0.25\textwidth,height=3.6cm]{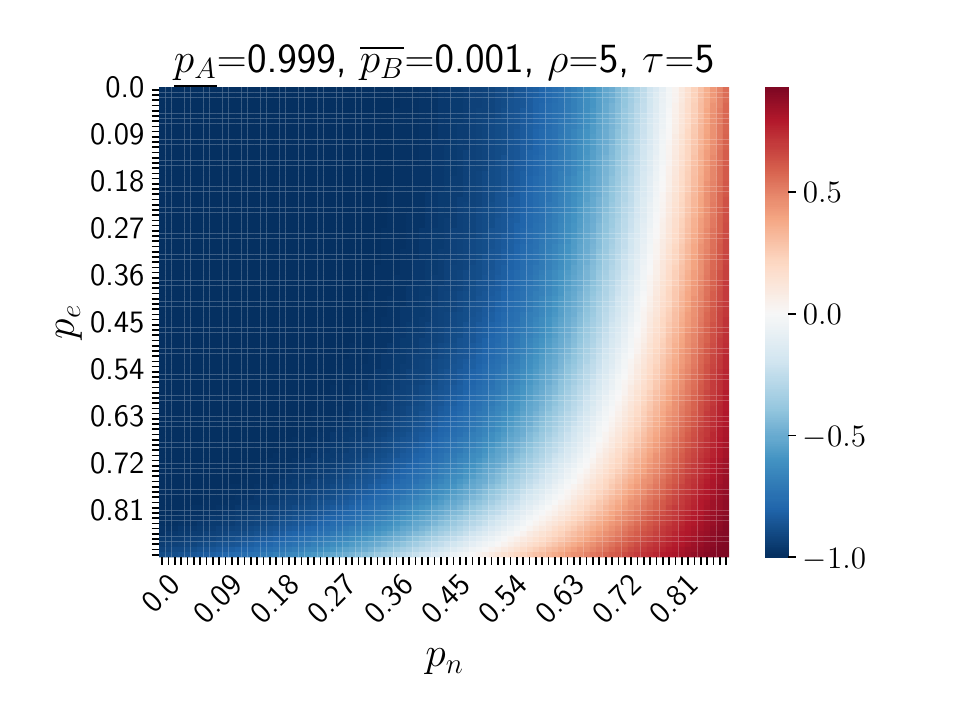}
    \caption{The impact of smoothing-parameters ($p_e$, $p_n$) on $\mu_{\rho,\tau}$ (\textit{include}) under sufficiently large $\underline{p_A}$. This figure shows that both node deletion and edge deletion smoothing play an important role in the certifying condition $\mu_{\rho,\tau}>0$ (red). }
    \label{fig:hyper_heat}
\end{figure}

\begin{figure}[hbt!]
\centering
    \subfigure[\textbf{Node-aware}-include]{
    \includegraphics[width=0.18\textwidth,height=2.8cm]{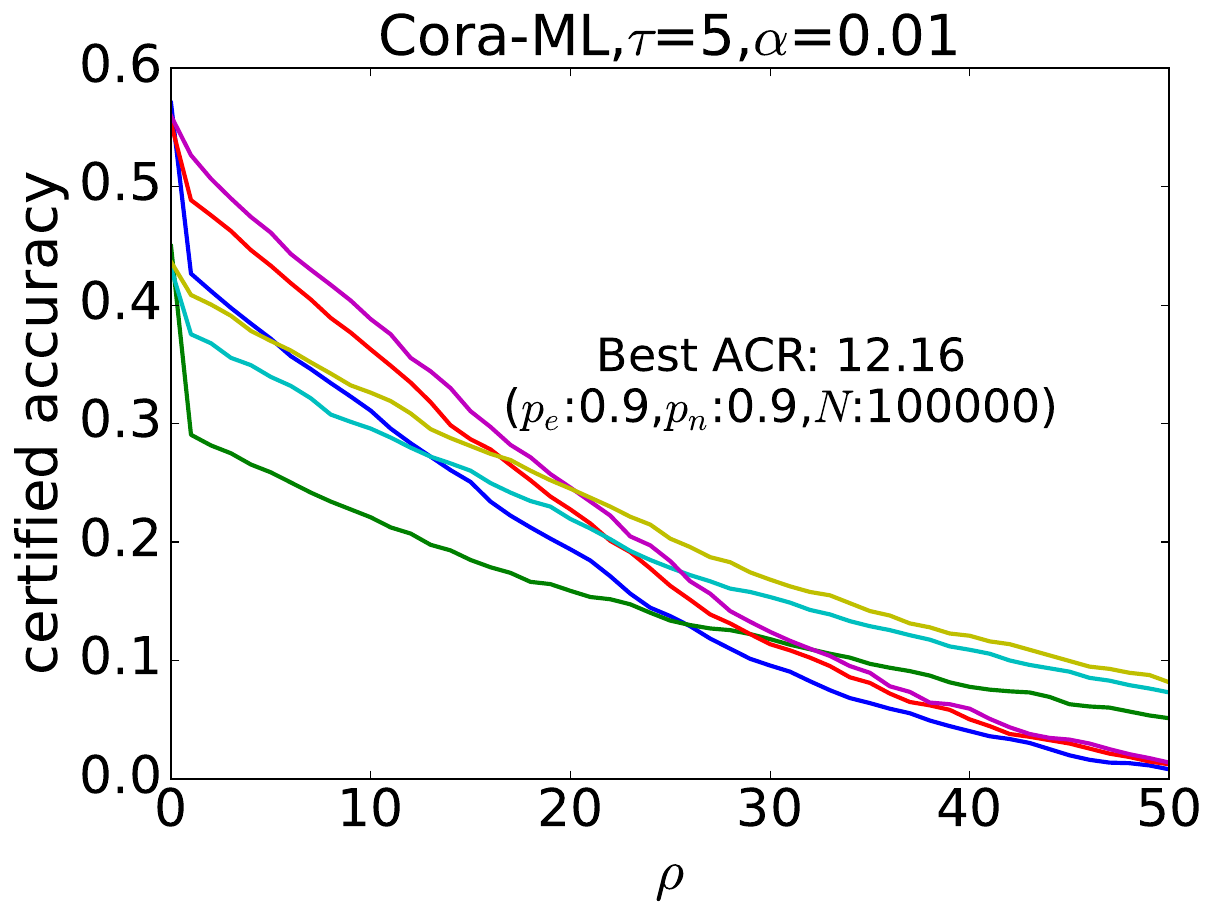}
    }
    \subfigure[\textbf{Node-aware}-include]{
    \includegraphics[width=0.255\textwidth,height=2.8cm]{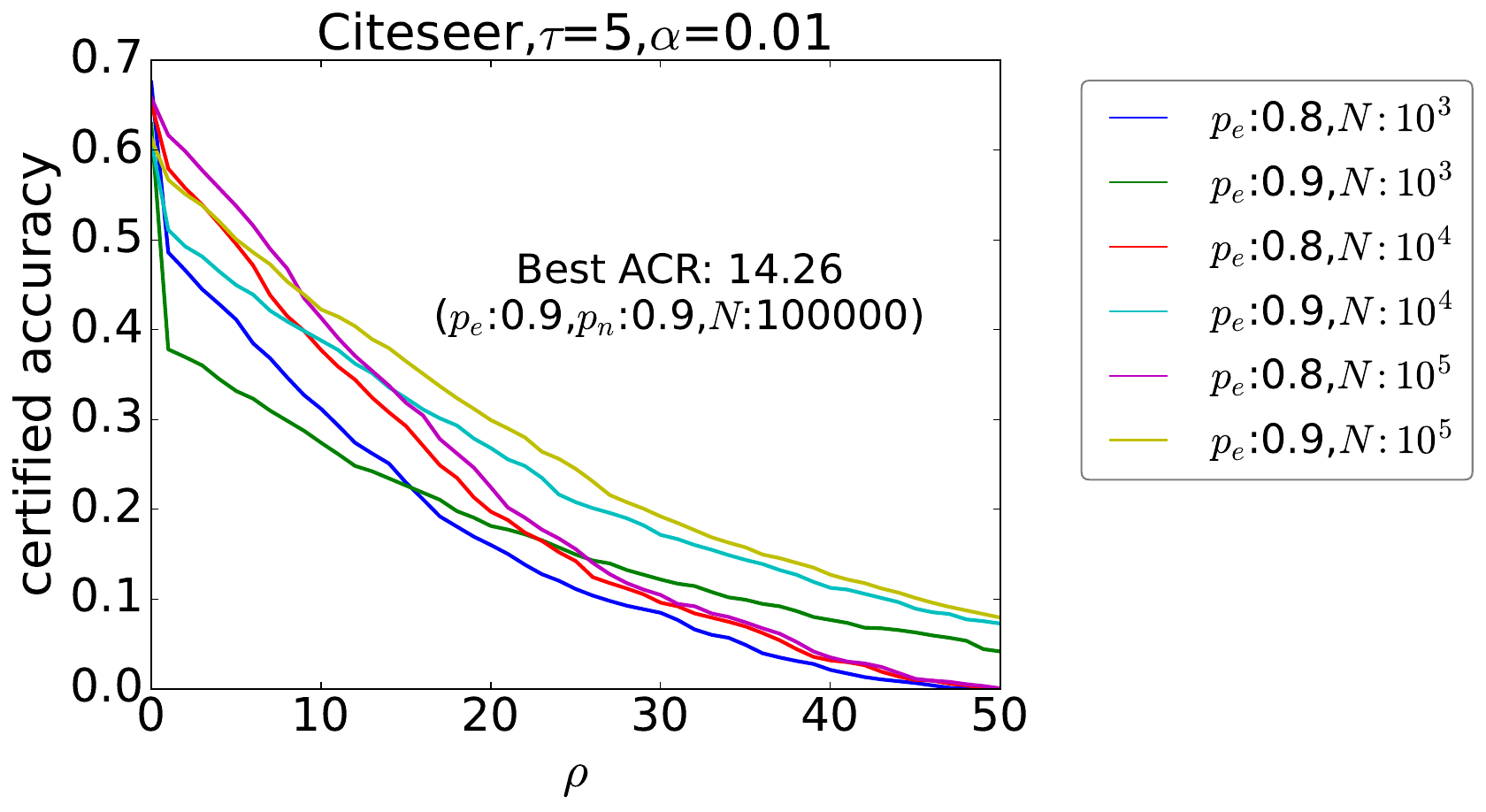}
    }
    \subfigure[\textbf{Node-aware}-exclude]{
    \includegraphics[width=0.18\textwidth,height=2.8cm]{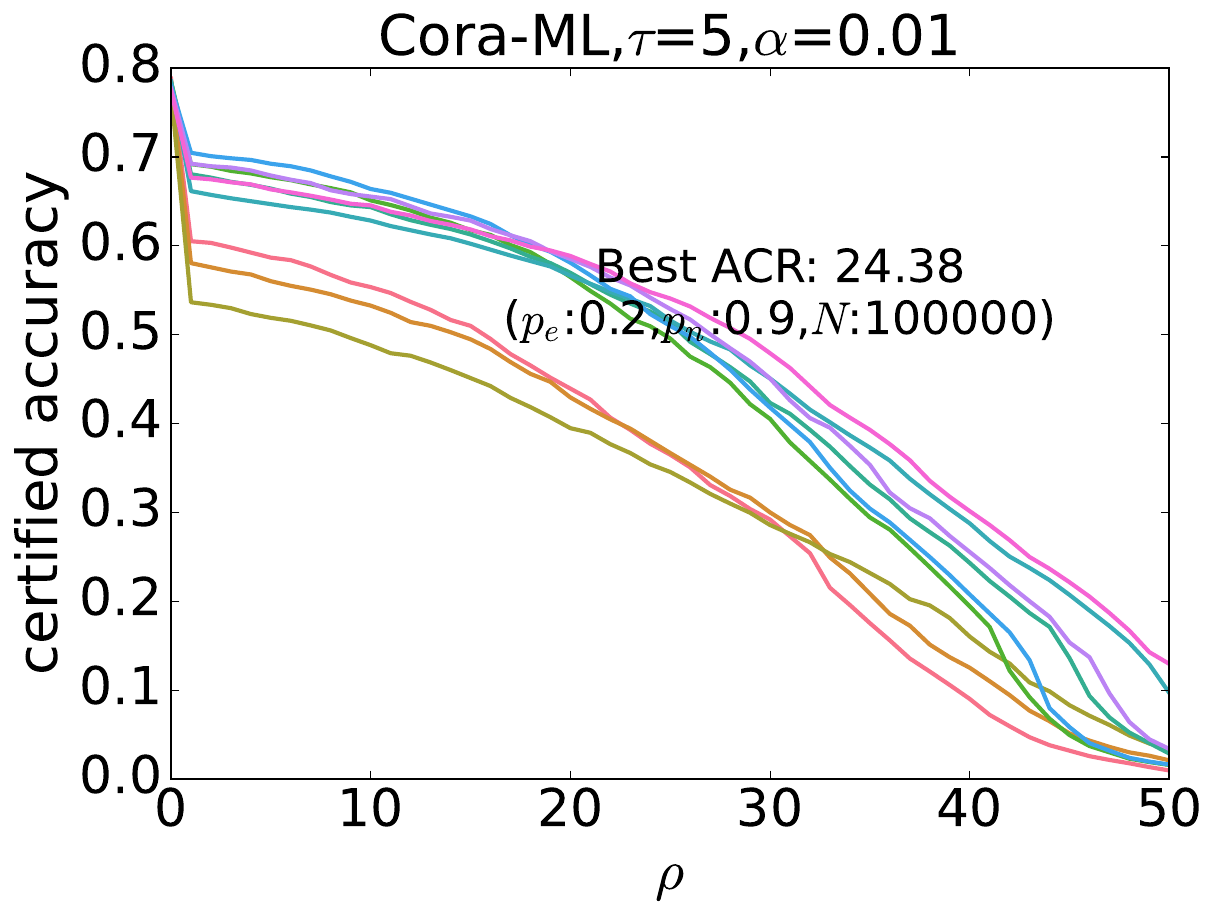}
    }
    \subfigure[\textbf{Node-aware}-exclude]{
    \includegraphics[width=0.255\textwidth,height=2.8cm]{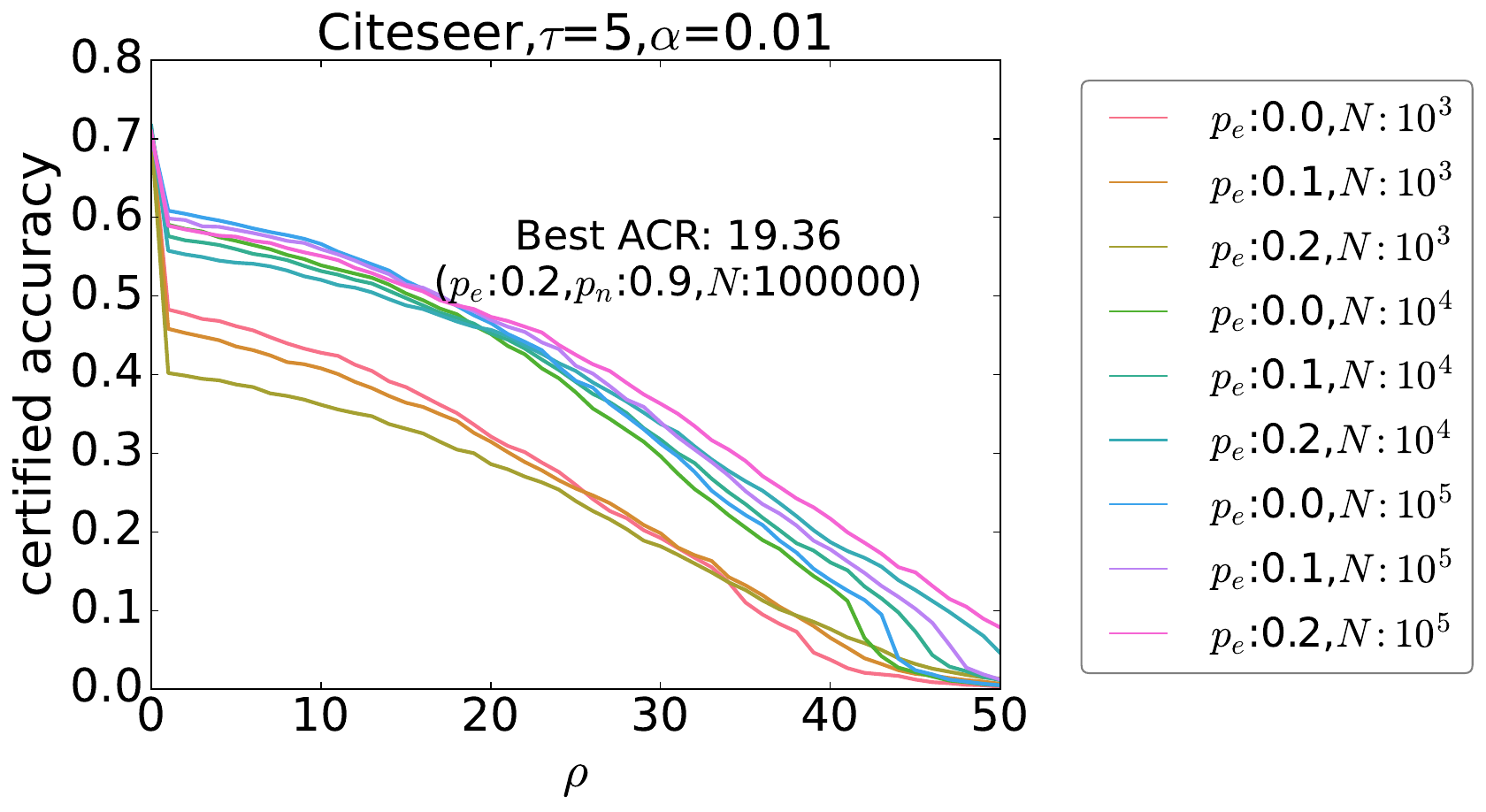}
    }
\caption{Impact of $N$ on certified accuracy under \textit{poisoning} perturbation with $p_n=0.9$, $\tau=5$.}
\label{fig:cer_poison_N}
\end{figure}


Figure.~\ref{fig:hyper_heat} provides a visualization of the impact of $p_e$ and $p_n$ under the same $\underline{p_A}$ and $\overline{p_B}$. It is evident that larger values of $p_e$ and $p_n$ correspond to higher certifying margins $\mu_{\rho,\tau}$, highlighting the crucial role played by both edge-deletion and node-deletion smoothing techniques.

Additionally, we evaluate the effect of varying numbers of Monte-Carlo samples $N$ in Figure.~\ref{fig:cer_poison_N}. Notably, as the value of $N$ increases, the abstain rate decreases significantly, leading to improved certified accuracy and ACR.

\section{Related Work}
\label{Sec:RelatedW}
This section discusses the current certifiable defense framework under \textit{evasion} and \textit{poisoning} attack scenarios. 

Certified robustness is a research topic that aims to provide guarantees for the stability and reliability of machine learning models, especially against adversarial attacks. It can be divided into two main categories: exact verification and randomized smoothing. The exact verification is usually designed for a specific model, while the randomized smoothing is more scalable and flexible for any graph model. Our research falls into the latter one. 

Randomized smoothing methods try to estimate the probability of the model's output under a given perturbation set, using randomization such as Gaussian noise~\cite{cohen2019certified,lecuyer2019certified}.
However, most of the existing works focus on computer vision, especially image classification~\cite{li2022double,fischer2021scalable,jia2022almost,levine2021deep,levine2020robustness,scholten2023hierarchical}, or tabular data~\cite{horvath2022randomized}, and few of them consider the challenges and opportunities of certified robustness for graph data, such as graph classification, node classification, link prediction, etc. 

Some recent works on certified robustness for graph data mainly focus on defending graph modification attacks (GMAs) in \textit{evasion} settings~\cite{bojchevski2020efficient,wang2021certified,jia2020certified,schuchardt2023localized}. Wang et al. ~\cite{wang2021certified} is the first to extend randomized smoothing to graph data and establish the certified robustness guarantee of any GNN for both node and graph classifications. In particular, it employs Bernoulli distribution to randomize the graph data, that every edge has a fixed probability of being deleted or kept. Jia et al.~\cite{jia2020certified} employs a similar scheme to investigate the provable robustness of graph community detection. Bojchevski et al.~\cite{bojchevski2020efficient} improves the calculation efficiency of \cite{wang2021certified,jia2020certified} by reducing the number of constant likelihood ratio region partition, and it can handle perturbations on both
the graph structure and the node attributes. Schuchardt et al.~\cite{schuchardt2020collective,schuchardt2023localized} further improve the sparsity-aware smoothing~\cite{schuchardt2023localized} by designing collective robustness by considering a more realistic and restricted attacker that can only use one perturbed input at once to disrupt as much as node predictions in the graph. 

Despite the recent research progress of certified robustness on graph data, the provable robustness against poisoning attacks on graphs remains mostly unexplored. 
There are mainly two views providing the robustness
certificates for poisoning attacks on image data: 1) Randomized smoothing framework based on ~\cite{cohen2019certified} can also handle poisoning attacks by
regarding the training-prediction procedure as a whole end-to-end function. 2) Aggregation-based technique that partitions the training data into subsets.  
Rosenfeld et al.~\cite{rosenfeld2020certified} is the first to extend randomized smoothing on label-flipping poisoning attacks. Wang et al.~\cite{wang2020certifying} and Weber et al.~\cite{weber2023rab} extend randomized smoothing to defend against backdoor (poisoning) attacks. However, \cite{wang2020certifying,rosenfeld2020certified,weber2023rab} is limited to data
poisoning attacks that only modify existing training examples. Levine et al.~\cite{levine2021deep} proposed an aggregation-based certifying scheme against poisoning attacks, including label-flipping, sample insertion, and deletion. The key insight is that removing a training sample or
adding a new sample will only change the contents of one partition. Similarly, Jia et al.~\cite{jia2021intrinsic} provided certifiable robustness of aggregation ensembles model via the Neyman-Pearson lemma. The only existing certifiable model against poisoning attack on graph data is PORE~\cite{jia2023pore} proposed by Jia et al., which generalized \cite{jia2021intrinsic} to establish the first certifying framework for the recommender system. 

These works mentioned above are all orthogonal to ours. 
Although \cite{bojchevski2020efficient,wang2021certified,jia2020certified,schuchardt2023localized} studied the certifiable robust model for graph data against evasion attack, they are designed for graph modification attack (GMA). Moreover, \cite{rosenfeld2020certified,wang2020certifying,rosenfeld2020certified,weber2023rab} are designed for poisoning attacks on image data only, while we study poisoning attacks of graph injection attack. \cite{jia2023pore} is the most related to our work, but it only focuses on the recommender system. To the best of our knowledge, we are the first to study certified robustness for black-box graph models of general node classification (also suitable for recommender systems) against graph injection attack (GIA) in both \textit{evasion} and \textit{poisoning} scenarios.

\section{Limitations and Future Work}
\label{Sec:Limit}
%
This paper focuses on a provable robust framework against graph injection attacks based on randomized smoothing. Nevertheless, the drawback of randomized smoothing is the computation overload. Future work might consider extending de-randomized smoothing~\cite{levine2020randomized,levine2021improved,horvath2022randomized} to our framework to tackle the challenge of high running time. To further improve certifiable performance of randomized smoothing, there are two common strategies: improving the training process~\cite{zhai2020macer,liu2022robust,jeong2021smoothmix,gosch2023adversarial} and applying collective certification~\cite{chen2022collective,schuchardt2020collective}. The former aims to increase the intrinsic robustness of the model, while the latter further constrains the attacker to be more realistic in that it can only forge one attack sample to achieve its overall goal.  

\section{Conclusion}
\label{Sec:Conc}

This paper investigates the task of certifying graph-based classifiers against graph injection attacks (GIA). We propose a novel \textbf{node-aware bi-smoothing} scheme that provides certificates specifically designed to defend against GIAs under both evasion and poisoning threat models. Additionally, we propose a variant called \textbf{node-aware-exclude} to further enhance the certified performance against poisoning attacks. We evaluate the certified robustness of our model against GIAs on the GCN node classifier and SAR recommender system. While there is no previous work specifically addressing certifying general node classification against GIA, we generalize two certified robust models originally designed for other tasks and compare our model with them. Through extensive experiments on three datasets, we provide comprehensive benchmarks of our certified models against GIA. 
Furthermore, we evaluate the effectiveness of our model as an empirical defense method against a real GIA and compare it with four common defense models. Through extensive experiments, we demonstrate that our proposed framework not only provides significant certified robustness but also achieves competitive empirical robustness. These results demonstrate the effectiveness of our proposed model in defending against GIAs and highlight its importance in ensuring the security and robustness of graph node classification tasks.

\bibliographystyle{IEEEtran}
\bibliography{references}

\appendices

\section{Theoretical Proofs}
\label{Sec:Appendix_A}
\setcounter{thm}{0}
\begin{thm}
(Restate) Let $f:\mathbb{G}\longrightarrow \{1,\cdots, C\}^{n}$ be any graph classifier, $g$ be its smoothed classifier defined in \eqref{eqn:smooth_g} with $\phi(G)=(\phi_e(G),\phi_n(G))$, $v\in G$ be any query node, $B_{\rho,\tau}(G)$ be the node injection perturbation set defined in \eqref{eqn:pertb_ball}. Suppose $y_A, y_B\in \{1,\cdots, C\}$ and $\underline{p_A}, \overline{p_B}\in[0,1]$. Then we have 
$g_v(G')=g_v(G)$, $\forall G' \in B_{\rho,\tau}(G)$, if:
\begin{equation}
    \mu_{\rho,\tau}:=\tilde{p}(\underline{p_A}-\overline{p_B}+1)-1 >0,\nonumber 
\end{equation}
where $\tilde{p}:=(p_n+(1-p_n)(p_e+p_n-p_ep_n)^{\tau})^{\rho}$.
\end{thm}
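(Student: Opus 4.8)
The plan is to cast the certification as the two-class linear program already set up in \eqref{opt:randomsmooth} and to exhibit its worst-case value in closed form. Concretely, it suffices to produce a lower bound on $p'_A$ and an upper bound on $p'_B$ that hold uniformly over every base classifier consistent with the observed $p_A \ge \underline{p_A}$ and $p_B \le \overline{p_B}$, and then to show $p'_A - p'_B \ge \mu_{\rho,\tau}$. Since the smoothing $\phi$ only deletes (never adds) edges and $p_+=0$, the perturbed graph $G'$ and the clean graph $G$ can be coupled on their shared original randomness, so the entire argument reduces to understanding a single event.

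First I would isolate the structural fact that drives everything: the \emph{isolation-transfer} property. On the event $\mathcal{I}$ that $\phi$ removes every injected edge (equivalently, every injected node becomes isolated), the assumption that isolated nodes do not affect the prediction of $v$ gives $f_v(\phi(G')) = f_v(\phi(G))$ under the coupling. Its complement carries zero mass under $\phi(G)$, because deletion-only noise applied to $G$ can never realize an injected edge; hence the randomization space splits into a positive-overlap region and a zero-overlap region. I would then compute $\mathbb{P}(\mathcal{I})$ one injected node at a time: given that an injected node survives, each of its edges is killed if it is deleted directly (probability $p_e$) or if its original endpoint is dropped (probability $p_n$), giving per-edge killing probability $p_e + p_n - p_e p_n$ and per-node isolation probability $p_n + (1-p_n)(p_e + p_n - p_e p_n)^{\tau}$. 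Taking the product over the $\rho$ injected nodes, and arguing that the max-degree, disjoint-neighbour configuration minimises this probability over all $G' \in B_{\rho,\tau}(G)$, identifies the worst-case overlap as $\tilde p$.

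With the two regions in hand, the LP is solved by the sorted-likelihood-ratio rule: to push $p'_A$ down, the worst-case classifier spends its $\underline{p_A}$ budget in the positive-overlap region only, yielding $p'_A \ge \tilde p\,\underline{p_A}$; to pull $p'_B$ up, it additionally claims the entire zero-overlap region, yielding $p'_B \le \tilde p\,\overline{p_B} + (1-\tilde p)$. Subtracting gives $p'_A - p'_B \ge \tilde p(\underline{p_A} - \overline{p_B} + 1) - 1 = \mu_{\rho,\tau}$, so $\mu_{\rho,\tau} > 0$ certifies $g_v(G') = g_v(G)$ for all $G'\in B_{\rho,\tau}(G)$.

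The step I expect to be the main obstacle is justifying that the positive-overlap region really behaves like a \emph{single} constant-likelihood-ratio region with ratio $1/\tilde p$. Node deletion couples the removal of an injected edge to the survival of its original endpoint, so whether a given injected node is isolated is correlated with the surviving original structure that the classifier actually reads; a priori the likelihood ratio on the prediction-relevant projection can depend on how many neighbours of the injected nodes were themselves deleted. Making the closed-form bound rigorous therefore requires showing that this dependence either collapses to one constant or can be absorbed into the worst case without weakening the stated $\tilde p$; this is the quantitative heart of the proof, whereas the hypothesis testing and Clopper--Pearson bookkeeping needed to turn $\underline{p_A},\overline{p_B}$ into high-probability estimates are routine.
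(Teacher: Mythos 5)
Your proposal retraces the paper's own argument step for step: pre-inject isolated nodes so that $G$ and $G'$ live on the same node set, split the sample space into the zero-overlap region and the region $\mathcal{R}_1$ in which every injected edge has been removed, identify $\mathbb{P}(\phi(G')\in\mathcal{R}_1)=\tilde p$ with the stated per-node/per-edge product, and solve the LP \eqref{opt:randomsmooth} by the sorted-likelihood-ratio rule to obtain $p'_A\geq \tilde p\,\underline{p_A}$ and $p'_B\leq \tilde p\,\overline{p_B}+1-\tilde p$, hence $\mu_{\rho,\tau}$. This is exactly the paper's decomposition and worst-case classifier.

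However, the step you flag as the main obstacle is a genuine gap, and neither your sketch nor the paper closes it: the paper simply asserts that $\Lambda(Z)=1/\tilde p$ for every $Z\in\mathcal{R}_1$, justifying this by the claim (imported from sparsity-aware smoothing, where bits are randomized independently) that the likelihood ratio depends only on the bits where $A$ and $A'$ differ. Under node deletion that independence fails, for precisely the reason you give: an injected edge can disappear because its \emph{original} endpoint was deleted, and that deletion also changes the original structure the classifier reads. The dependence does not collapse to one constant. Take $\rho=\tau=1$, an original edge $(u,w)$, an injected edge $(x,u)$, and $p_e=p_n=1/2$, so $q:=p_e+p_n-p_ep_n=3/4$ and $\tilde p=p_n+(1-p_n)q=7/8$. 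The output $\{(u,w)\}$ has pointwise ratio $1/q=4/3$, while the empty output has ratio $\bigl(p_n+(1-p_n)q\bigr)/\bigl(p_n+(1-p_n)q^2\bigr)=28/25$; neither equals $1/\tilde p=8/7$, so $\mathcal{R}_1$ is genuinely two regions. Re-solving the LP on the correct three-region partition shows certification requires $\underline{p_A}-\overline{p_B}>4/25=0.16$, strictly stronger than the theorem's threshold $1/\tilde p-1=1/7\approx 0.143$. Indeed, with $\underline{p_A}=0.575$ and $\overline{p_B}=0.425$ the theorem certifies ($\mu_{\rho,\tau}=1/160>0$), yet the base classifier that predicts $y_A$ on $\{(u,w)\}$, predicts $y_A$ with probability $18/35$ and $y_B$ with probability $17/35$ on the empty graph, and predicts $y_B$ whenever the injected edge survives (a randomized classifier, which the LP with $h,t\in[0,1]^I$ explicitly admits, and which respects the isolated-node assumption) attains exactly these clean probabilities while $p'_A=111/224<113/224=p'_B$: the smoothed prediction flips. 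So the coupling heuristic cannot be ``absorbed into the worst case without weakening the stated $\tilde p$'': a rigorous completion must either work with the finer partition (yielding a strictly weaker, more complicated certificate) or retreat to a coupling bound of the form $\underline{p_A}-\overline{p_B}>2(1-\tilde p)$, which is valid but weaker still. In short, your instinct located the real fault line, but the step is not merely unproven in your proposal --- as stated (and as asserted in the paper), it is false whenever $p_n>0$.
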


\begin{proof}
To solve the certifying problem defined in \eqref{opt:randomsmooth}, we need to calculate the likelihood ratio of $\phi(A)$ and $\phi(A')$. Let $\Lambda(Z)=\frac{\mathbb{P}(\phi(A)=Z)}{\mathbb{P}(\phi(A')=Z)}$ be the likelihood ratio, where $Z\in \mathbb{G}$ is any possible graph produced by $\phi(A)$ or $\phi(A')$. However, the difficulty lies in that the $\phi(A)$ and $\phi(A')$ are of different dimensions, which makes the probability hard to obtain. To tackle the challenge, we propose a straightforward strategy by pre-injecting $\rho$ isolated nodes in the clean graph $A$, such that the adjacency matrix $A$ has the same dimension as $A'$. Furthermore, in order to maintain the dimension of $\phi(A)$, we construct an equivalent setting for node deletion smoothing. If a node $v$ is deleted in the smoothing $\phi(\cdot)$, we delete all the edges incident to that node and keep the isolated node $v$, which is equivalent to setting the $v^{th}$ raw $A_{v:}$ to zeros. Note that the prediction for a graph will not be affected by the isolated nodes since it does not provide any information to the existing nodes. Thus, in this setting, all the graphs involved in the computation are in the same dimension.

According to \cite{bojchevski2020efficient}, the likelihood ratio $\Lambda$ is only depends on the bits $C:=\{(i,j)| A_{ij}\neq A'_{ij}\}$ which is the set of index that $A_{ij}\neq A'_{ij}$. Under node injection perturbation, the $A'$ and $ A$ are only different among the submatrix $A_{n:(n+\rho),1:(n+\rho)}$ (the raws of injected nodes), and they have exactly $\rho \cdot \tau$ different bits. That is $|C|=\rho 
\cdot\tau$. Since the smoothing randomization $\phi$ will not add any edge, bits in $\phi(A)_{n:(n+\rho),1:(n+\rho)}$ are always zeros. So that $\Lambda(Z)=\frac{\mathbb{P}(\phi(A)=Z)}{\mathbb{P}(\phi(A')=Z)}>0$ if and only if all injected edges in $A'_{n:(n+\rho),1:(n+\rho)}$ are set to zeros by $\phi(A')$, and we define such a region as $\mathcal{R}_1=\{Z|\,Z_{n:(n+\rho),1:(n+\rho)}=0\}$, while the other region as $\mathcal{R}_2=\{Z|\,Z_{n:(n+\rho),1:(n+\rho)}\neq0\}$. We have:
\begin{align}
    \Lambda(Z)=\frac{\mathbb{P}(\phi(A)=Z)}{\mathbb{P}(\phi(A')=Z)}=\left\{
\begin{array}{cl}
1/\tilde{p}, & \text{if } Z\in\mathcal{R}_1,\\
0, & \text{if } Z\in\mathcal{R}_2. \\
\end{array}
\right. 
\end{align}

The meaning of the region $\mathcal{R}_1=\{Z|\,Z_{n:(n+\rho),1:(n+\rho)}=0\}$ is to set all the injected nodes isolated from others. If $\phi(A')$ set an injected node $\tilde{v}$ isolated, the $\phi(A')$ deleted all the incident edges to node $\tilde{v}$, or $\phi(A')$ delete the node $\tilde{v}$. For an injected node, The probability of deleting the node itself in node deletion smoothing is $p_n$. If it is not deleted, each of the edges has a probability of $p_n+(1-p_n)p_e=p_n+p_e-p_np_e$ being deleted. Because the edge connects to other existing nodes, deleting other nodes also deletes the edge connects to the node (Figure~\ref{fig:nodeaware_scheme}, bottom). If an edge is not deleted by this, it has a probability of $p_e$ being deleted in edge deletion smoothing. Since there are $\rho$ injected nodes, and $\tau$ injected edges for each injected node, the probability of $\phi(A')\in\mathcal{R}_1$ is $\tilde{p}=(p_n+(1-p_n)(p_e+p_n-p_ep_n)^{\tau})^{\rho}$. Specifically, the corresponding probabilities for the two constant likelihood ratio regions are:

$$\left\{  
\begin{array}{l}
\mathbb{P}(\phi(A)\in \mathcal{R}_1)=1,\\
\mathbb{P}(\phi(A)\in \mathcal{R}_2)=0,\\
\end{array}
\right.  
$$
$$\left\{  
\begin{array}{l}
\mathbb{P}(\phi(A')\in \mathcal{R}_1)=\tilde{p},\\
\mathbb{P}(\phi(A')\in \mathcal{R}_2)=1-\tilde{p}.
\end{array}
\right.  
$$

The worst-case classifier defined in problem \eqref{opt:randomsmooth} will assign class $y_A$ in decreasing order ($\Lambda(Z\in\mathcal{R}_1)>\Lambda(Z\in\mathcal{R}_2)$) of the constant likelihood regions until $\mathbb{P}(f_v(\phi(A))=y_A)=\underline{p_A}$, and assign class $y_B$ in increasing order ($\Lambda(Z\in\mathcal{R}_2)<\Lambda(Z\in\mathcal{R}_1)$) of the constant likelihood regions until $\mathbb{P}(f_v(\phi(A))=y_B)=\overline{p_B}$. Therefore, the worst-case classifier is: 

$$\mathbb{P}(f_v(Z)=y_A)=
\left\{  
\begin{array}{ll}
\underline{p_A}, &Z\in \mathcal{R}_1,\\
0, &Z\in \mathcal{R}_2,\\
\end{array}
\right.  
$$
$$\mathbb{P}(f_v(Z)=y_B)=
\left\{  
\begin{array}{ll}
\overline{p_B}, &Z\in \mathcal{R}_1,\\
1, &Z\in \mathcal{R}_2.\\
\end{array}
\right.  
$$

Under this classifier, we can verify that: 
\begin{align}
&\quad\,\, \mathbb{P}(f_v(\phi(A))=y_A)\nonumber\\
&=\mathbb{P}(\phi(A)=Z\in \mathcal{R}_1)\mathbb{P}(f_v(Z)=y_A|Z\in\mathcal{R}_1)
\nonumber\\
&\quad+\mathbb{P}(\phi(A)=Z\in \mathcal{R}_2)\mathbb{P}(f_v(Z)=y_A|Z\in\mathcal{R}_2)\nonumber\\
&=1\cdot \underline{p_A}+0\cdot0=\underline{p_A}.\nonumber
\end{align}
\begin{align}
&\quad\,\, \mathbb{P}(f_v(\phi(A))=y_B)\nonumber\\
&=\mathbb{P}(\phi(A)=Z\in \mathcal{R}_1)\mathbb{P}(f_v(Z)=y_B|Z\in\mathcal{R}_1)
\nonumber\\
&\quad+\mathbb{P}(\phi(A)=Z\in \mathcal{R}_2)\mathbb{P}(f_v(Z)=y_B|Z\in\mathcal{R}_2)\nonumber\\
&=1\cdot \underline{p_B}+0\cdot1=\overline{p_B}.\nonumber
\end{align}

With this worst-case classifier, we can obtain the worst-case classification margin under $\phi(A')$, which we denoted as $\mu_{\rho,\tau}$:
\begin{align}
&\quad\,\, \mathbb{P}(f_v(\phi(A'))=y_A)\nonumber\\
&=\mathbb{P}(\phi(A')=Z\in \mathcal{R}_1)\mathbb{P}(f_v(Z)=y_A|Z\in\mathcal{R}_1)
\nonumber\\
&\quad+\mathbb{P}(\phi(A')=Z\in \mathcal{R}_2)\mathbb{P}(f_v(Z)=y_A|Z\in\mathcal{R}_2)\nonumber\\
&=\tilde{p}\cdot \underline{p_A}+(1-\tilde{p})\cdot0\nonumber\\
&=\tilde{p}\:\underline{p_A}.\nonumber
\end{align}

\begin{align}
&\quad\,\, \mathbb{P}(f_v(\phi(A'))=y_B)\nonumber\\
&=\mathbb{P}(\phi(A')=Z\in \mathcal{R}_1)\mathbb{P}(f_v(Z)=y_B|Z\in\mathcal{R}_1)
\nonumber\\
&\quad+\mathbb{P}(\phi(A')=Z\in \mathcal{R}_2)\mathbb{P}(f_v(Z)=y_B|Z\in\mathcal{R}_2)\nonumber\\
&=\tilde{p}\:\overline{p_B}+1-\tilde{p}.\nonumber
\end{align}
\begin{align}
    \mu_{\rho,\tau}&=\mathbb{P}(f_v(\phi(A'))=y_A)-\mathbb{P}(f_v(\phi(A'))=y_B)\nonumber\\
    &=\tilde{p}\:\underline{p_A}-(\tilde{p}\:\overline{p_B}+1-\tilde{p})\nonumber\\
    &=\tilde{p}(\underline{p_A}-\overline{p_B}+1)-1.\nonumber
\end{align}
If $\mu_{\rho,\tau}>0$, we can certify the prediction of $g_v(A')=y_A$. Otherwise, we cannot certify the prediction of $g_v(A')$. 
\end{proof}

\begin{thm}
(Restate) Let $f:\mathbb{G}\longrightarrow \{1,\cdots, C\}^{n}$ be any graph classifier, $g$ be its smoothed classifier defined in \eqref{eqn:smooth_g_exclude} with $\phi(G)=(\phi_e(G),\phi_n(G))$, $v\in G$ be any query node, $B_{\rho,\tau}(G)$ be the node injection perturbation set defined in \eqref{eqn:pertb_ball}, and the attack edges added to a node $v$ should not exceed its original degree $d(v)$. Suppose $y_A, y_B\in \{1,\cdots, C\}$ and $\underline{p_A}, \overline{p_B}\in[0,1]$. Then we have 
$g_v(G')=g_v(G)$, $\forall G' \in B_{\rho,\tau}(G)$, if:
\begin{align}
\mu_{\rho,\tau}&:=\tilde{p}(\underline{p_A}-\frac{(1-\underline{p'_0})\overline{p_B}}{(1-p_0)}+1-\underline{p'_0})-(1-\underline{p'_0})>0, \nonumber
\end{align}
where $\tilde{p}:=(p_n+(1-p_n)(p_e+p_n-p_ep_n)^{\tau})^{\rho}$, $d(v)$ denotes the degree of node $v$, and $p_0:=p_n+(1-p_n)(p_e+p_n-p_ep_n)^{d(v)}$ is the probability that the node $v$ is deleted by the smoothing $\phi(G)$, $\underline{p'_0}:=p_n+(1-p_n)(p_e+p_n-p_ep_n)^{2d(v)}$.  
\end{thm}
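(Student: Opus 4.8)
The plan is to port the Neyman--Pearson linear-program argument of Theorem~\ref{theorm:certify_condition} to the \emph{exclude} smoothed classifier of \eqref{eqn:smooth_g_exclude}, whose base model returns ABSTAIN whenever the query node $v$ is isolated under $\phi$. As in that proof, I would first pre-inject $\rho$ isolated nodes into the clean graph so that $\phi(A)$ and $\phi(A')$ have a common dimension, and then classify each randomized sample by whether \emph{all} injected edges have been deleted. The one genuinely new ingredient is that ABSTAIN samples contribute to no class, so the total voting mass is no longer $1$: under the clean graph it equals the survival probability $1-p_0$ of $v$, while under a perturbed graph it equals $1-p'_0$, the survival probability computed with the \emph{inflated} degree of $v$.

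First I would restrict to the ``active'' part of the sample space (where $v$ survives) and partition it into two constant-likelihood-ratio regions exactly as before: $\mathcal{R}_1$, on which every injected edge is removed, and $\mathcal{R}_2$, on which at least one survives. Since $\phi(A)$ never carries injected edges, all of its active mass $1-p_0$ sits in $\mathcal{R}_1$, and $\mathcal{R}_2$ has likelihood ratio $0$; on $\mathcal{R}_1$ the ratio is a positive constant. The greedy solution of \eqref{opt:randomsmooth} then assigns $y_A$ entirely inside $\mathcal{R}_1$ (its clean conditional rate is forced to $\underline{p_A}/(1-p_0)$), while the $y_B$-maximizing classifier keeps $\overline{p_B}$ inside $\mathcal{R}_1$ and additionally claims \emph{all} active mass of $\mathcal{R}_2$, on which the clean constraint is vacuous.

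The degree assumption enters when I evaluate the perturbed region masses. Because at most $d(v)$ attack edges touch $v$, its perturbed degree lies in $[d(v),2d(v)]$, so $p'_0$ is squeezed between $\underline{p'_0}$ (degree $2d(v)$) and $p_0$ (degree $d(v)$), equivalently the perturbed voting mass satisfies $1-p'_0\le 1-\underline{p'_0}$. Bounding the two perturbed region probabilities in the directions the LP demands---using the reweighting factor $(1-\underline{p'_0})/(1-p_0)$ on $\overline{p_B}$ coming from $\mathcal{R}_1$ and the leftover $\mathcal{R}_2$-mass $(1-\tilde{p})(1-\underline{p'_0})$---I would obtain the worst-case values $p'_A=\tilde{p}\,\underline{p_A}$ and $p'_B=\tfrac{(1-\underline{p'_0})}{1-p_0}\tilde{p}\,\overline{p_B}+(1-\tilde{p})(1-\underline{p'_0})$. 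Substituting these into $\mu_{\rho,\tau}=p'_A-p'_B$ and rearranging yields \eqref{eqn:certify_condition2}, with $\tilde{p}$ unchanged from Theorem~\ref{theorm:certify_condition} since the injected-edge-removal event is governed by the injected nodes exactly as there.

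I expect the delicate step to be the region-mass computation: the events ``all injected edges deleted'' and ``$v$ stays active'' are \emph{not} independent, because deleting $v$ (or a shared neighbour) simultaneously erases an injected edge and kills $v$'s survival. Disentangling this coupling---and checking that replacing the unknown $p'_0$ by its extremal bound $\underline{p'_0}$ is conservative in the right direction for \emph{both} the $\overline{p_B}$ term and the $\mathcal{R}_2$ term, so that the resulting $\mu_{\rho,\tau}$ is a valid lower bound on the true margin over all $G'\in B_{\rho,\tau}(G)$---is where the care is needed; once those masses are pinned down, the rest is the same Neyman--Pearson bookkeeping as in Theorem~\ref{theorm:certify_condition}.
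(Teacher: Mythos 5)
Your proposal is correct and follows essentially the same route as the paper's own proof: the same restriction to the event that $v$ survives, the same two regions $\mathcal{R}_1$ (all injected edges removed, positive constant ratio) and $\mathcal{R}_2$ (ratio $0$), the same worst-case classifier with conditional rates $\underline{p_A}/(1-p_0)$ and $\overline{p_B}/(1-p_0)$ on $\mathcal{R}_1$ and all active $\mathcal{R}_2$ mass given to $y_B$, the same squeeze $\underline{p'_0}\leq p'_0\leq p_0$ from the degree assumption, and the same worst-case values $p'_A=\tilde{p}\,\underline{p_A}$ and $p'_B=\tilde{p}\,\overline{p_B}\,(1-\underline{p'_0})/(1-p_0)+(1-\tilde{p})(1-\underline{p'_0})$ yielding \eqref{eqn:certify_condition2}. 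The coupling you flag as the delicate step is genuine, but the paper's proof does not disentangle it either---it simply asserts the factorized masses $\mathbb{P}(\phi(A')\in\mathcal{R}_1,\,v\vdash\phi(A'))=\tilde{p}(1-p'_0)$ and $\mathbb{P}(\phi(A')\in\mathcal{R}_2,\,v\vdash\phi(A'))=(1-\tilde{p})(1-p'_0)$---so your treatment is at parity with, and more candid than, the published argument.
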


\begin{proof}
Let $Z$ be any possible graph from $\phi(G)$ or $\phi(G')$,  $v\vdash Z$ denote  $v\in Z$  is not isolated, we now need to compute the likelihood ratio with $v\vdash \phi(G)$:
\begin{align}
\label{eqn:likelihood_ratio_exclude}
    \Lambda(Z)&=\frac{\mathbb{P}(\phi(A)=Z, v\vdash Z)}{\mathbb{P}(\phi(A')=Z, v\vdash Z)}\\
&=\left\{
\begin{array}{cl}
\frac{1-p_0}{\tilde{p}(1-p'_0)}, & \text{if } Z\in\mathcal{R}_1 \,(Z_{n:(n+\rho),1:(n+\rho)}=0),\\
0, & \text{if } Z\in\mathcal{R}_2 \,(Z_{n:(n+\rho),1:(n+\rho)}\neq0). \nonumber\\
\end{array}
\right. 
\end{align}    
Specifically, the corresponding probabilities for the two constant likelihood ratio regions are:

$$\left\{  
\begin{array}{l}
\mathbb{P}(\phi(A)\in \mathcal{R}_1,v\vdash \phi(A))=1-p_0,\\
\mathbb{P}(\phi(A)\in \mathcal{R}_2,v\vdash \phi(A))=0,\\
\end{array}
\right.  
$$
$$\left\{  
\begin{array}{l}
\mathbb{P}(\phi(A')\in \mathcal{R}_1,v\vdash \phi(A'))=\tilde{p}(1-p'_0),\\
\mathbb{P}(\phi(A')\in \mathcal{R}_2,v\vdash \phi(A'))=(1-\tilde{p})(1-p'_0),
\end{array}
\right.  
$$
where $p_0:=p_n+(1-p_n)(p_e+p_n-p_ep_n)^{d(v)}$, $p'_0=p_n+(1-p_n)(p_e+p_n-p_ep_n)^{d(v')}$ denotes the probability that the node $v$ is deleted by the smoothing $\phi(G)$ and $\phi(G')$ respectively; $d(v)$, $d(v')$ denotes the degree of node $v$ in $G$ and $G'$, respectively. Similarly, the worst-case classifier is: 
$$\mathbb{P}(f_v(Z)=y_A)=
\left\{  
\begin{array}{ll}
\frac{\underline{p_A}}{(1-p_0)}, &Z\in \mathcal{R}_1, v\vdash Z,\\
0, &Z\in \mathcal{R}_2, v\vdash Z,\\
\end{array}
\right.  
$$
$$\mathbb{P}(f_v(Z)=y_B)=
\left\{  
\begin{array}{ll}
\frac{\overline{p_B}}{(1-p_0)}, &Z\in \mathcal{R}_1, v\vdash Z,\\
1, &Z\in \mathcal{R}_2, v\vdash Z.\\
\end{array}
\right.  
$$


With this worst-case classifier, we can obtain the worst-case classification margin under $\phi(A')$, which we denoted as $\mu_{\rho,\tau}$:
\begin{align}
&\quad\,\, \mathbb{P}(f_v(\phi(A'))=y_A)\nonumber\\
&=\mathbb{P}(f_v(\phi(A'))=y_A,v\vdash Z)\nonumber\\
&=\mathbb{P}(\phi(A')=Z \in \mathcal{R}_1,v\vdash Z)\nonumber\\
&\quad\times \mathbb{P}(f_v(Z)=y_A|Z\in\mathcal{R}_1,v\vdash Z)\nonumber\\
&\quad+\mathbb{P}(\phi(A')=Z\in \mathcal{R}_2,v\vdash Z)\nonumber\\
&\quad\times\mathbb{P}(f_v(Z)=y_A|Z\in\mathcal{R}_2,v\vdash Z)\nonumber\\
&=\tilde{p}(1-p'_0)\cdot \frac{\underline{p_A}}{(1-p_0)}\nonumber\\
&\geq \tilde{p}\cdot\underline{p_A},\nonumber
\end{align}
where the inequality is due to $\frac{(1-p'_0)}{(1-p_0)}\geq 1$, because the node degree of $v$ in the perturbed graph must be larger than in the clean graph: $d(v')\geq d(v)$. With the assumption that $d(v')\leq 2d(v)$, we have $p'_0>\underline{p'_0}:=p_n+(1-p_n)(p_e+p_n-p_ep_n)^{2d(v)}$, and $(1-p'_0)\leq (1-\underline{p'_0})$:
\begin{align}
&\quad\,\, \mathbb{P}(f_v(\phi(A'))=y_B)\nonumber\\
&=\mathbb{P}(\phi(A')=Z\in \mathcal{R}_1,v\vdash Z)\nonumber\\
&\quad\times \mathbb{P}(f_v(Z)=y_B|Z\in\mathcal{R}_1,v\vdash Z)\nonumber\\
&\quad+\mathbb{P}(\phi(A')=Z\in \mathcal{R}_2,v\vdash Z)\nonumber\\
&\quad\times \mathbb{P}(f_v(Z)=y_B|Z\in\mathcal{R}_2,v\vdash Z)\nonumber\\
&=\tilde{p}(1-p'_0)\cdot\frac{\overline{p_B}}{(1-p_0)}+(1-\tilde{p})(1-p'_0)\nonumber\\
&\leq \tilde{p}\cdot \overline{p_B}\cdot\frac{(1-\underline{p'_0})}{(1-p_0)}+(1-\tilde{p})(1-\underline{p'_0}),\nonumber
\end{align}
where the inequality is due to $(1-p'_0)\leq (1-\underline{p'_0})$. Then, we can obtain a lower bound of the worst-case classification margin under $\phi(A')$:
\begin{align}
    &\quad \mathbb{P}(f_v(\phi(A'))=y_A)-\mathbb{P}(f_v(\phi(A'))=y_B)\nonumber\\
    &\geq \tilde{p}\cdot\underline{p_A}-[\tilde{p}\cdot \overline{p_B}\cdot\frac{(1-p_n)}{(1-p_0)}+(1-\tilde{p})(1-\underline{p'_0})]\nonumber\\
    &= \tilde{p}(\underline{p_A}-\overline{p_B}\cdot\frac{(1-\underline{p'_0})}{(1-p_0)})-1+\underline{p'_0}+\tilde{p}-\tilde{p}\underline{p'_0}\nonumber\\
    &= \tilde{p}(\underline{p_A}-\frac{(1-\underline{p'_0})\overline{p_B}}{(1-p_0)}+1-\underline{p'_0})-(1-\underline{p'_0}):=\mu_{\rho,\tau}.\nonumber
\end{align}
If $\mu_{\rho,\tau}>0$, we can certify the prediction of $g_v(A')=y_A$. Otherwise, we cannot certify the prediction of $g_v(A')$. 
\end{proof}

\begin{thm}
(Restate) Let $F_u(G)$ be any base recommender system trained on $G$ and recommend $K'$ items to the user $u$, $g_u(G)$ be its smoothed recommender defined in \eqref{eqn:smooth_g_RS}, $u\in G$ be any query user, $B_{\rho,\tau}(G)$ be the node injection perturbation set defined in \eqref{eqn:pertb_ball}, and the attack edges added to a node $v$ should not exceed its original degree $d(v)$. Then, we have at least $r$ recommended items after poisoning are overlapped with ground truth items $I_u$: $|g_u(G') \cap I_u|\geq r, \forall G'\in B_{\rho,\tau}(G)$ if:
    \begin{equation}
        \hat{p}\,\underline{p_r}-\min_{H_c}(\overline{p}_{H_c}+K'(1-\hat{p})(1-p_0))/c>0,\nonumber
    \end{equation}
    where $\hat{p}:=(p_n+(1-p_n)p_e^{\tau})^{\rho}$, $\underline{p_r}$ is the lower bound of the $r$th largest item probability among $\{p_{u,i}|i\in I_u\}$, $H_c$ denote any subset of the top-$(K-r+1)$ largest items among $I\setminus I_u$ with size $c$, $\overline{p}_{H_c}:=\sum_{j\in H_c} \overline{p_{u,j}}$ is the sum of probability upper bounds for $c$ items in $H_c$, $p_0:=p_n+(1-p_n)(p_e)^{d(u)}$ is the probability that the user $u$ is deleted by the smoothing $\phi(G)$, $d(u)$ is the number of user ratings in training set.
\end{thm}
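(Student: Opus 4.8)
The plan is to reduce the recommendation guarantee to the ranking criterion stated just before the theorem (following PORE): it suffices to show that under every poisoned graph $G' \in B_{\rho,\tau}(G)$, the $r$th largest smoothed item probability among the ground-truth set $I_u$ strictly exceeds the $(K-r+1)$th largest smoothed item probability among $I \setminus I_u$. The whole argument then rests on a single overlap event $E :=$ ``all $\rho$ injected users are neutralized by the smoothing'', i.e.\ every injected user is either node-deleted or has all of its $\tau$ item-edges edge-deleted. First I would compute $\mathbb{P}_{\phi(G')}(E) = \hat p = (p_n + (1-p_n)p_e^{\tau})^{\rho}$, emphasizing that the inner factor is $p_e^{\tau}$ rather than $(p_e + p_n - p_ep_n)^{\tau}$ as in Theorem~\ref{theorm:certify_condition}: because the graph is bipartite and only users (not items) undergo node deletion, an injected user's edge to an item can be removed only by edge deletion, never by deleting the item endpoint. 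Conditioned on $E$, the randomized poisoned graph is distributed exactly as the clean randomized graph $\phi(G)$, since the clean randomization is independent of the malicious part; this independence is the lever for both bounds. I would also note that $u$'s rating count is unchanged by the attack (injected users attach to items, not to $u$), so the isolation probability $p_0 = p_n + (1-p_n)p_e^{d(u)}$ is identical under $G$ and $G'$, which is why no $\underline{p'_0}$-type correction appears here.

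For the ground-truth side, for any $i \in I_u$ I would restrict to $E$ and use independence to obtain $p_{u,i}(G') \ge \mathbb{P}(E)\,\mathbb{P}(i \in F_u(\phi(G)) \mid E) = \hat p\, p_{u,i}(G)$. Since at least $r$ items of $I_u$ have clean probability at least $\underline{p_r}$ (by definition of $\underline{p_r}$), each of these has poisoned probability at least $\hat p\,\underline{p_r}$, so the $r$th largest poisoned probability among $I_u$ is at least $\hat p\,\underline{p_r}$. This is the left-hand term of the certificate.

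For the non-ground-truth side I must upper bound the $(K-r+1)$th largest poisoned probability. For each $j \in I \setminus I_u$, splitting on $E$ gives $p_{u,j}(G') \le \hat p\, \overline{p_{u,j}} + \mathbb{P}(j \in F_u(\phi(G')), \bar{E})$. The crucial point is that the residual masses cannot all be large at once: summing $\mathbb{P}(j \in F_u(\phi(G')),\bar{E})$ over any item subset $H$ equals $\mathbb{E}[\,|F_u(\phi(G')) \cap H| \cdot \mathbb{I}(\bar{E})\,]$, which is at most $K'\,\mathbb{P}(\bar{E},\, u \text{ not isolated}) = K'(1-\hat p)(1-p_0)$, because the base recommender returns at most $K'$ items and abstains whenever $u$ is isolated. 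Hence for any size-$c$ subset $H_c$ of the top-$(K-r+1)$ items of $I \setminus I_u$, bounding $\hat p \le 1$, we get $\sum_{j \in H_c} p_{u,j}(G') \le \overline{p}_{H_c} + K'(1-\hat p)(1-p_0)$. If the $(K-r+1)$th largest poisoned probability were $\ge \hat p\,\underline{p_r}$, then every item in some size-$(K-r+1)$ set would have poisoned probability $\ge \hat p\,\underline{p_r}$; averaging the displayed inequality over the $c$ items of $H_c$ would then force $\hat p\,\underline{p_r} \le (\overline{p}_{H_c} + K'(1-\hat p)(1-p_0))/c$, and taking the minimum over $H_c$ (and over $c$) contradicts the hypothesis. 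This yields the strict separation and hence $|g_u(G')\cap I_u| \ge r$.

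I expect the main obstacle to be this non-ground-truth averaging step. Individually each residual term $\mathbb{P}(j \in F_u(\phi(G')),\bar{E})$ can be as large as $(1-\hat p)(1-p_0)$, which is far too weak to certify anything; the certificate works only because the \emph{total} residual recommendation budget $K'(1-\hat p)(1-p_0)$ is shared across all non-ground-truth items. Converting this aggregate budget into a bound on the $(K-r+1)$th order statistic is exactly what the $\min_{H_c}$ over subsets of the top-$(K-r+1)$ candidates accomplishes, and verifying that the worst case is genuinely covered by this minimization is the delicate part; by contrast, the evaluations of $\hat p$ and $p_0$ and the ground-truth lower bound are routine.
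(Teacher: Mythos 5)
Your proposal is correct and, at its core, follows the same skeleton as the paper's proof: your event $E$ (all injected users neutralized) is exactly the paper's constant-likelihood-ratio region $\mathcal{R}_1$, with the same probability $\hat{p}=(p_n+(1-p_n)p_e^{\tau})^{\rho}$ (and the same bipartite observation explaining $p_e^{\tau}$ rather than $(p_e+p_n-p_ep_n)^{\tau}$), the same remark that the attack leaves $d(u)$ and hence $p_0$ unchanged, the same lower bound $\hat{p}\,\underline{p_r}$ on the ground-truth side, the same aggregate budget $K'(1-\hat{p})(1-p_0)$ on the non-ground-truth side, and the same averaging step over $H_c$. Where you differ is in packaging: the paper runs the Neyman--Pearson machinery, partitioning the sample space into regions $\mathcal{R}_1,\mathcal{R}_2$ (restricted to $u\vdash Z$) and exhibiting explicit worst-case classifiers that attain the bounds, whereas you derive the two one-sided bounds directly by splitting on $E$ and invoking independence of the clean and injected randomization. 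This is more elementary and perfectly adequate, since a certificate needs only one direction of each inequality (the LP view additionally establishes tightness, which the theorem does not claim). Your expectation identity $\sum_{j\in H}\mathbb{P}(j\in F_u(\phi(G')),\bar{E})=\mathbb{E}\bigl[|F_u(\phi(G'))\cap H|\,\mathbb{I}(\bar{E})\bigr]\leq K'\,\mathbb{P}(\bar{E},\,u\vdash\phi(G'))$ is in fact a cleaner justification of the $K'$ term than the paper's terse remark that the base recommender returns at most $K'$ items. One caveat, which you yourself flag as the delicate point and which the paper also leaves implicit: the averaging chain requires $H_c$ to lie inside the top-$(K-r+1)$ set under the \emph{poisoned} probabilities, while the computable certificate minimizes over subsets of the top-$(K-r+1)$ set identified from the \emph{clean} upper bounds; closing this needs a small domination argument that, for any $(K-r+1)$-subset $T'$ of $I\setminus I_u$, the sum of the $c$ smallest upper bounds inside $T'$ is at most the corresponding sum inside the top-$(K-r+1)$-by-upper-bound set. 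Since the paper's own proof glosses over exactly the same point, your proposal matches it in both substance and rigor.
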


\begin{proof}
For a user-item interaction graph $G$, we represent it as a user-item interaction matrix denoted by $A$, where each row represents a user and each column represents an item, and the elements $A_{ui}=1$ if the interaction exists between user $u$ and item $i$, otherwise $A_{ui}=0$. If there are $n$ users and $m$ items in the training set, we have the shape of $A$ of $n\times m$. Let $I$ denote all the items in the training set, we have $|I|=m$. Let $Z$ be any possible matrix from $\phi(A)$ or $\phi(A')$,  $u\vdash Z$ denote  $u\in Z$ has at least one rating, we have the likelihood ratio with $u\vdash \phi(G)$:
\begin{align}
    \Lambda(Z)&=\frac{\mathbb{P}(\phi(A)=Z, u\vdash Z)}{\mathbb{P}(\phi(A')=Z, u\vdash Z)}\\
&=\left\{
\begin{array}{cl}
1/\hat{p}, & \text{if } Z\in\mathcal{R}_1 \,(Z_{n:(n+\rho),1:m}=0),\\
0, & \text{if } Z\in\mathcal{R}_2 \,(Z_{n:(n+\rho)),1:m}\neq0). \nonumber\\
\end{array}
\right. 
\end{align} 

Specifically, the corresponding probabilities for the two constant likelihood ratio regions are:

$$\left\{  
\begin{array}{l}
\mathbb{P}(\phi(A)\in \mathcal{R}_1,u\vdash \phi(A))=1-p_0,\\
\mathbb{P}(\phi(A)\in \mathcal{R}_2,u\vdash \phi(A))=0,\\
\end{array}
\right.  
$$
$$\left\{  
\begin{array}{l}
\mathbb{P}(\phi(A')\in \mathcal{R}_1,u\vdash \phi(A'))=\hat{p}(1-p_0),\\
\mathbb{P}(\phi(A')\in \mathcal{R}_2,u\vdash \phi(A'))=(1-\hat{p})(1-p_0),
\end{array}
\right.  
$$
where $p_0:=p_n+(1-p_n)(p_e)^{d(u)}$ denotes the probability that the node $u$ is deleted by the smoothing $\phi(G)$, and $d(u)$ denotes the degree (number of rating) of user $u$ in $G$. Note that the newly injected user will not increase the degree of the existing user, the $\phi(A')$, so that $\mathbb{P}(u\vdash \phi(A'))$ is also $(1-p_0)$. 
Similarly, if we know the lower bound of $p_{u,i}:=\mathbb{P}(i\in F_u(\phi(G)))$, then the worst-case classifier returns the smallest $p'_{u,i}:=\mathbb{P}(i\in F_u(\phi(G')))$ is: 
$$\mathbb{P}(i\in F_u(Z))=
\left\{  
\begin{array}{ll}
\frac{\underline{p_{u,i}}}{(1-p_0)}, &Z\in \mathcal{R}_1, u\vdash Z,\\
0, &Z\in \mathcal{R}_2, u\vdash Z,\\
\end{array}
\right.  
$$
On the contrary, if we know the upper bound of $p_{u,j}:=\mathbb{P}(j\in F_u(\phi(G)))$, then the worst-case classifier returns the largest $p'_{u,j}:=\mathbb{P}(j\in F_u(\phi(G')))$ is:
$$\mathbb{P}(j\in F_u(Z))=
\left\{  
\begin{array}{ll}
\frac{\overline{p_{u,j}}}{(1-p_0)}, &Z\in \mathcal{R}_1, u\vdash Z,\\
1, &Z\in \mathcal{R}_2, u\vdash Z.\\
\end{array}
\right.  
$$

For the $r$th items under the clean graph, we denote its lower bound of probability as $\underline{p_r}$, and then we have the bound for its probability under the poisoned graph: $p'_r\geq \hat{p}\,\underline{p_r}$. We have at least $r$ recommended items overlapped with ground truth items $I_u$ if the $r$th largest item probability among items $I_u$ is larger than the $(K-r+1)$th largest items probability among $I\setminus I_u$ under the poisoned graph.
We denote the top $(K-r+1)$ largest items by their probability among $I\setminus I_u$ as a set $\mathbf{I}_{kr}$.
Following \cite{jia2023pore}, instead of considering the $(K-r+1)$th item (the smallest one in $\mathbf{I}_{kr}$), jointly considering multiple items $H_c$ usually leads to a smaller upper bound, where $H_c$ is the subset of $\mathbf{I}_{kr}$ with size $c$. For the $c$ items, we know its summation of upper bound: $\overline{p}_{H_c}:=\sum_{j\in H_c} \overline{p_{u,j}}$. Because each of the system recommender $K'$ items, we have 
$\overline{p}_{H_c}\leq K'$ (i.e., $\frac{\overline{p}_{H_c}}{K'}\leq1$). 
Then we have the upper bound for $p'_{H_c}$:

\begin{align}
    \quad p'_{H_c}=\sum_{j\in H_c} p'_{u,j}\leq \hat{p}\cdot \overline{p}_{H_c}+ K'(1-\hat{p})(1-p_0).
\end{align}

Then, because the minimum value of a set is always smaller than the average value, we have:

\begin{align}
    \min_{j\in\mathbf{I}_{kr}} p'_{u,j}
    &\leq \min_{j\in H_c} p'_{u,j}
    \leq \frac{\sum_{j\in H_c} p'_{u,j}}{c} \nonumber\\
    &\leq (\hat{p}\cdot \overline{p}_{H_c}+ K'(1-\hat{p})(1-p_0))/c.
\end{align}
Finally, we have at least $r$ recommended items overlapped with ground truth items $I_u$ if: $\hat{p}\,\underline{p_r}-\min_{H_c}(\overline{p}_{H_c}+K'(1-\hat{p})(1-p_0))/c>0$.

\end{proof}

\section{Other Experimental Results}
In this section, we display the supplemental experimental results. Figure.~\ref{fig:clean_acc} shows the clean accuracy of the smoothed classifier under various smoothing parameters $p_e$ and $p_n$. For the evasion attack, the Multi-layer Perceptron (MLP) will never affected because it does not rely on the graph structure for prediction. As a result, a graph model with a lower accuracy is not meaningful. In the Cora-ML dataset, the clean accuracy of our smoothed classifiers is significantly higher than $0.691$, which are all effective. However, due to the smaller average degree of the Citeseer dataset, our smoothed classifiers might have a lower clean accuracy than MLP. We exclude the parameters (shadow) that lead to lower accuracy than the MLP model, which is $0.691$ on Cora-ML
and $0.660$ on the Citeseer dataset. 

Note that the MLP model is also subject to poisoning GIA since the malicious node feature can be crafted arbitrarily (all the results are effective). Notably, our smoothed model in the poisoning setting improves the clean accuracy.

\setcounter{figure}{0}
\renewcommand\thefigure{\Alph{section}.\arabic{figure}}

\label{Sec:Appendix_B}
\begin{figure}[hbt!]
\centering
    \subfigure[Evasion]{\includegraphics[width=0.222\textwidth,height=3.5cm]{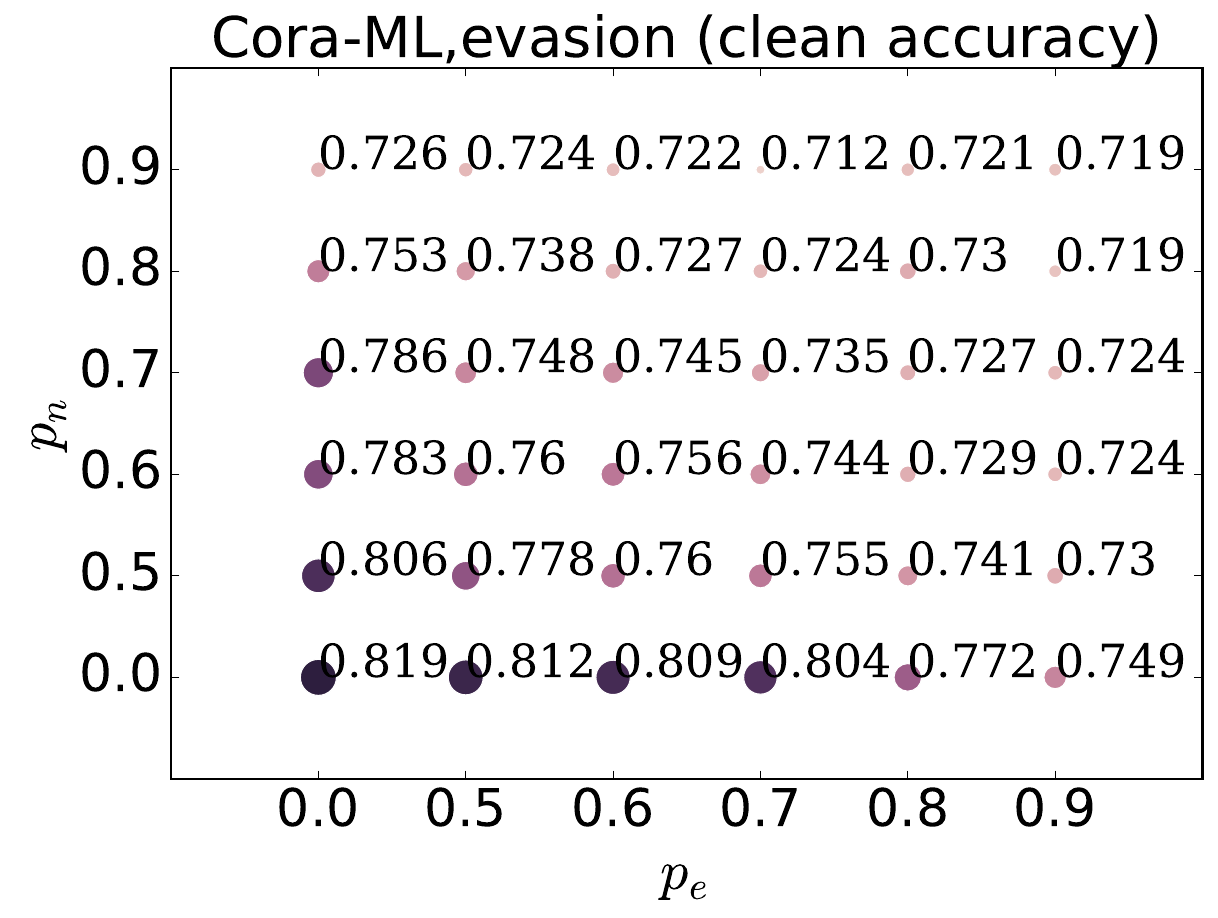}
    \label{fig_second_case}
    }
    \subfigure[Evasion]{\includegraphics[width=0.222\textwidth,height=3.5cm]{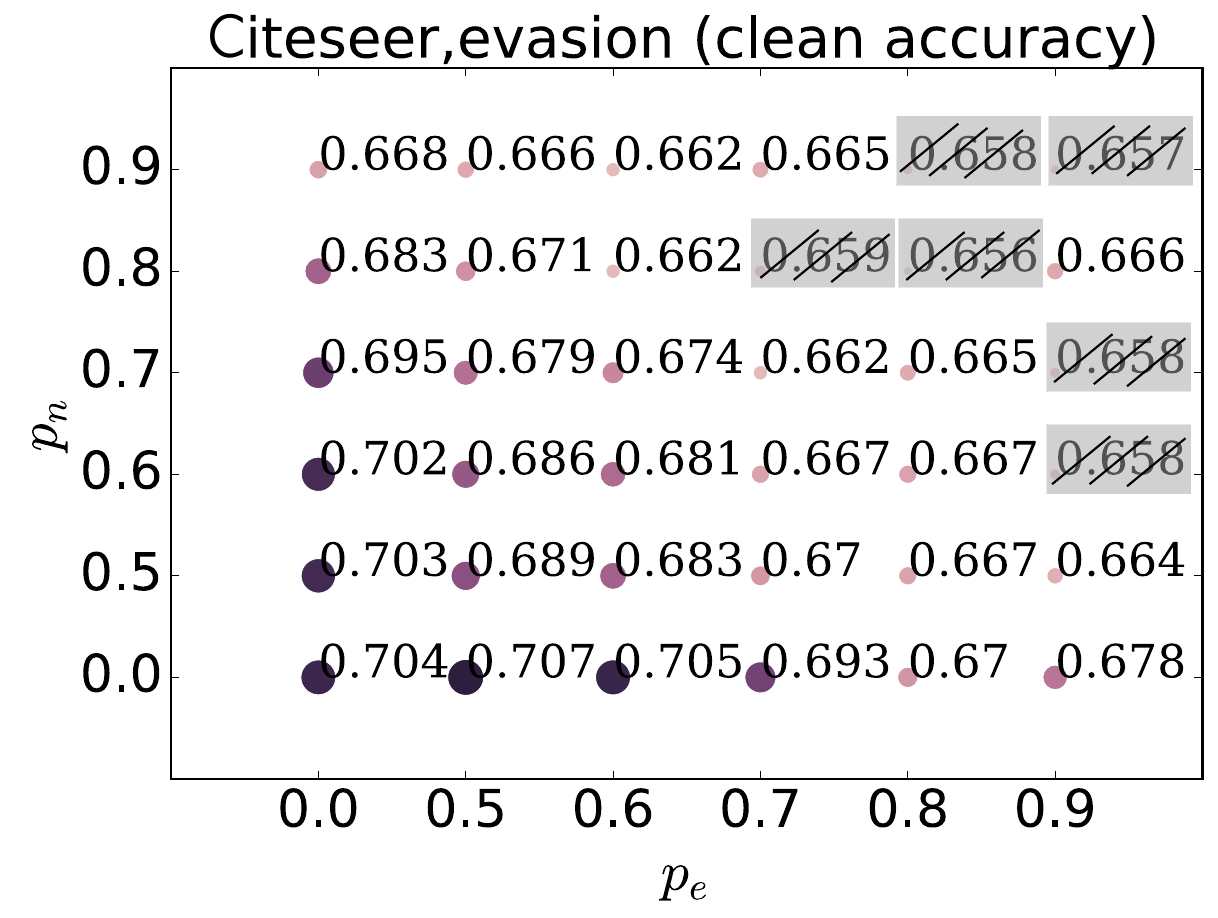}
    }
    \subfigure[Poisoning (include)]{\includegraphics[width=0.222\textwidth,height=3.5cm]{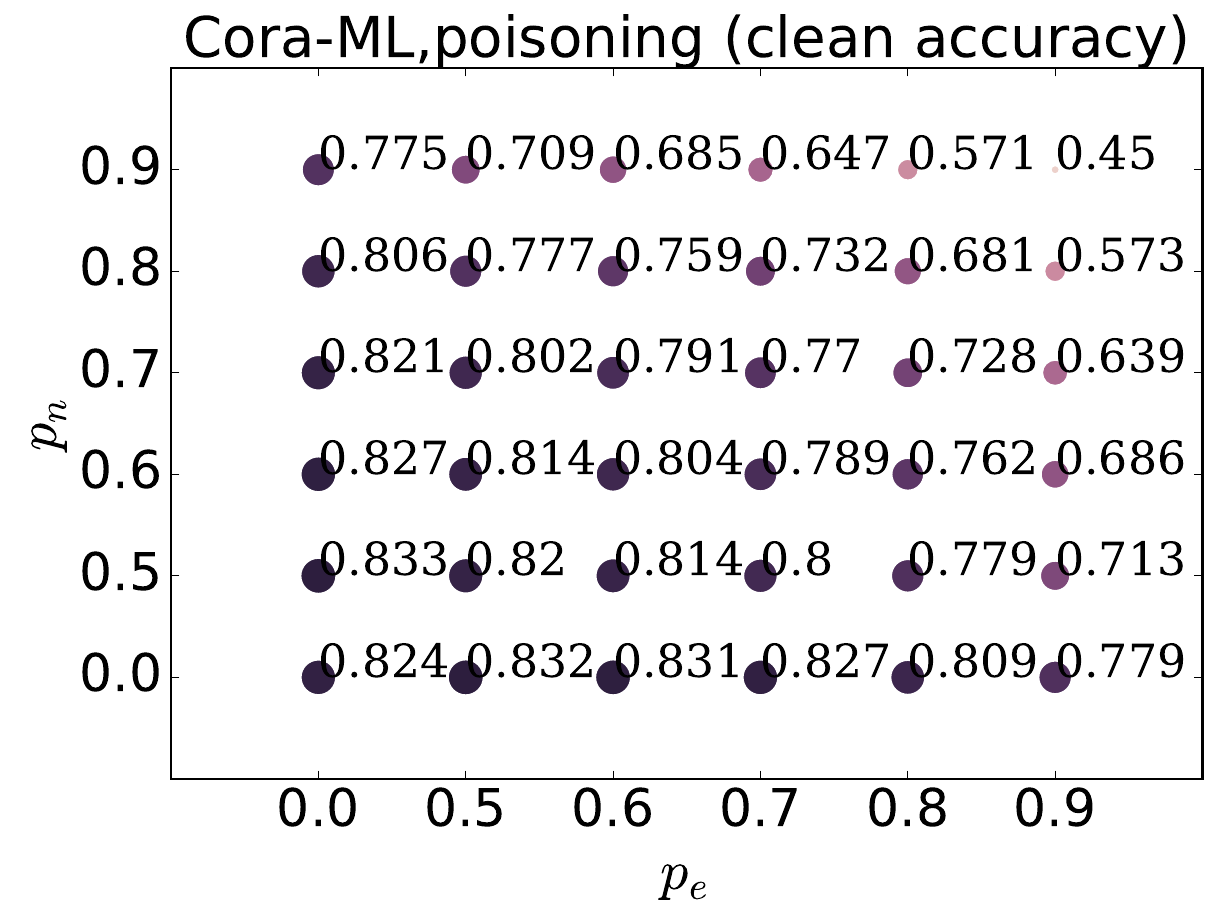}
    }
    \subfigure[Poisoning (include)]{\includegraphics[width=0.222\textwidth,height=3.5cm]{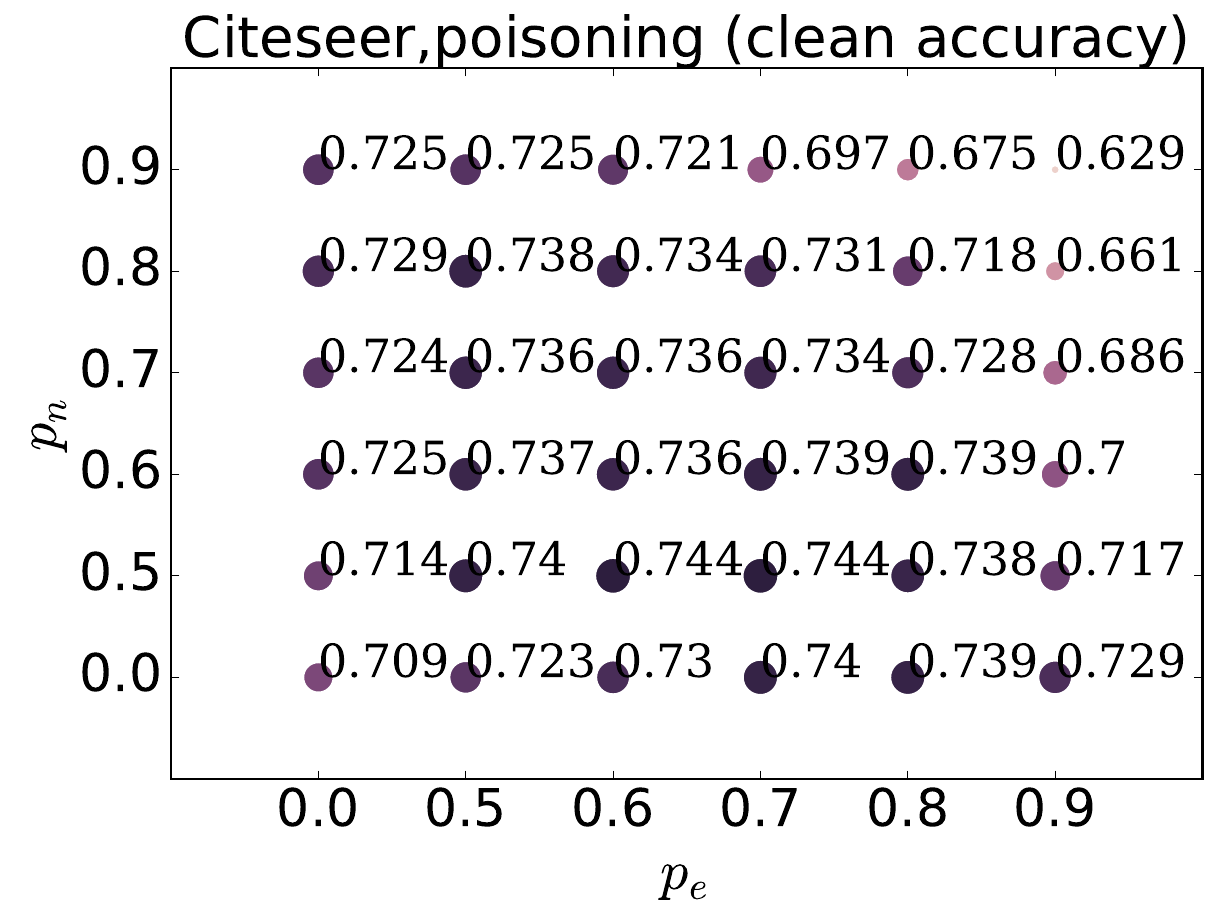}
    }
    \subfigure[Poisoning (exclude)]{\includegraphics[width=0.222\textwidth,height=3.5cm]{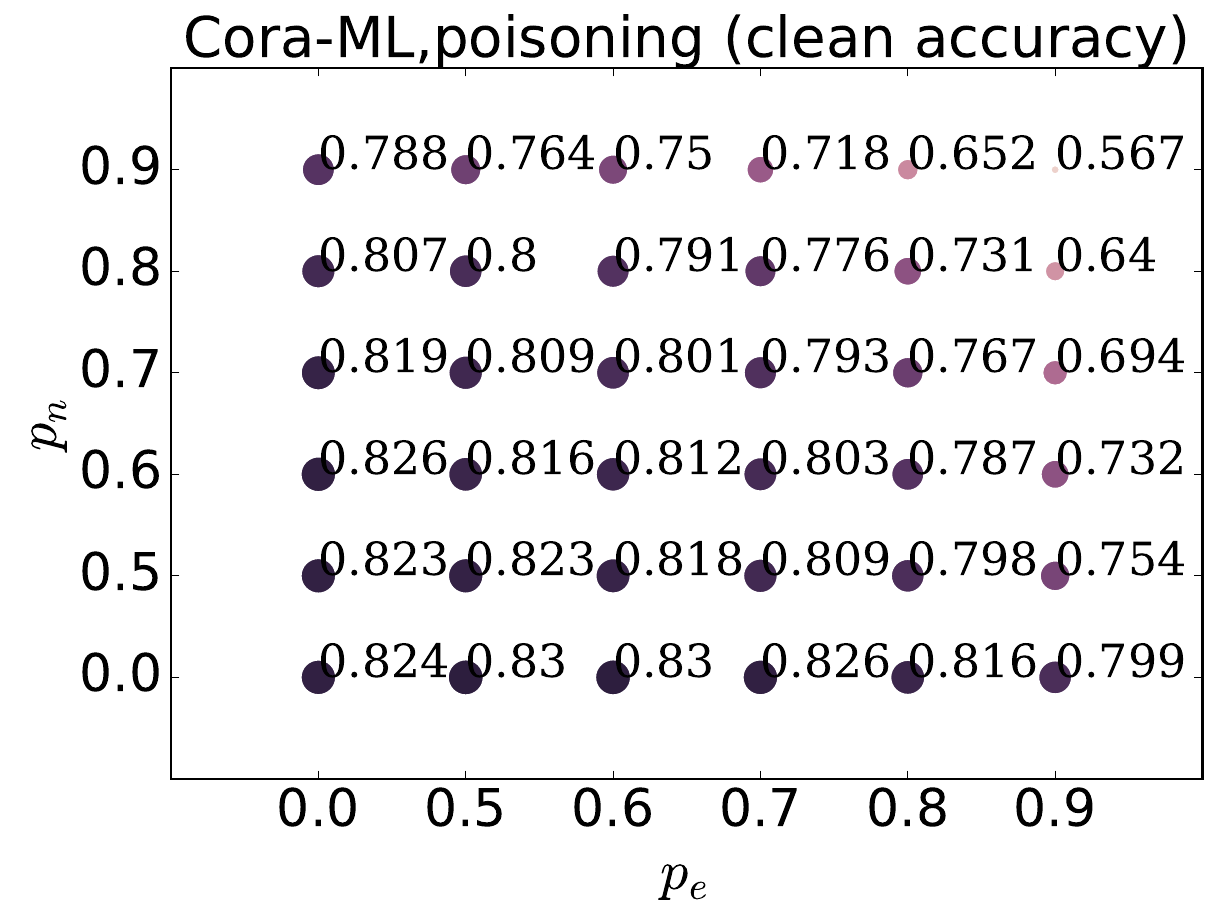}
    }
    \subfigure[Poisoning (exclude)]{\includegraphics[width=0.222\textwidth,height=3.5cm]{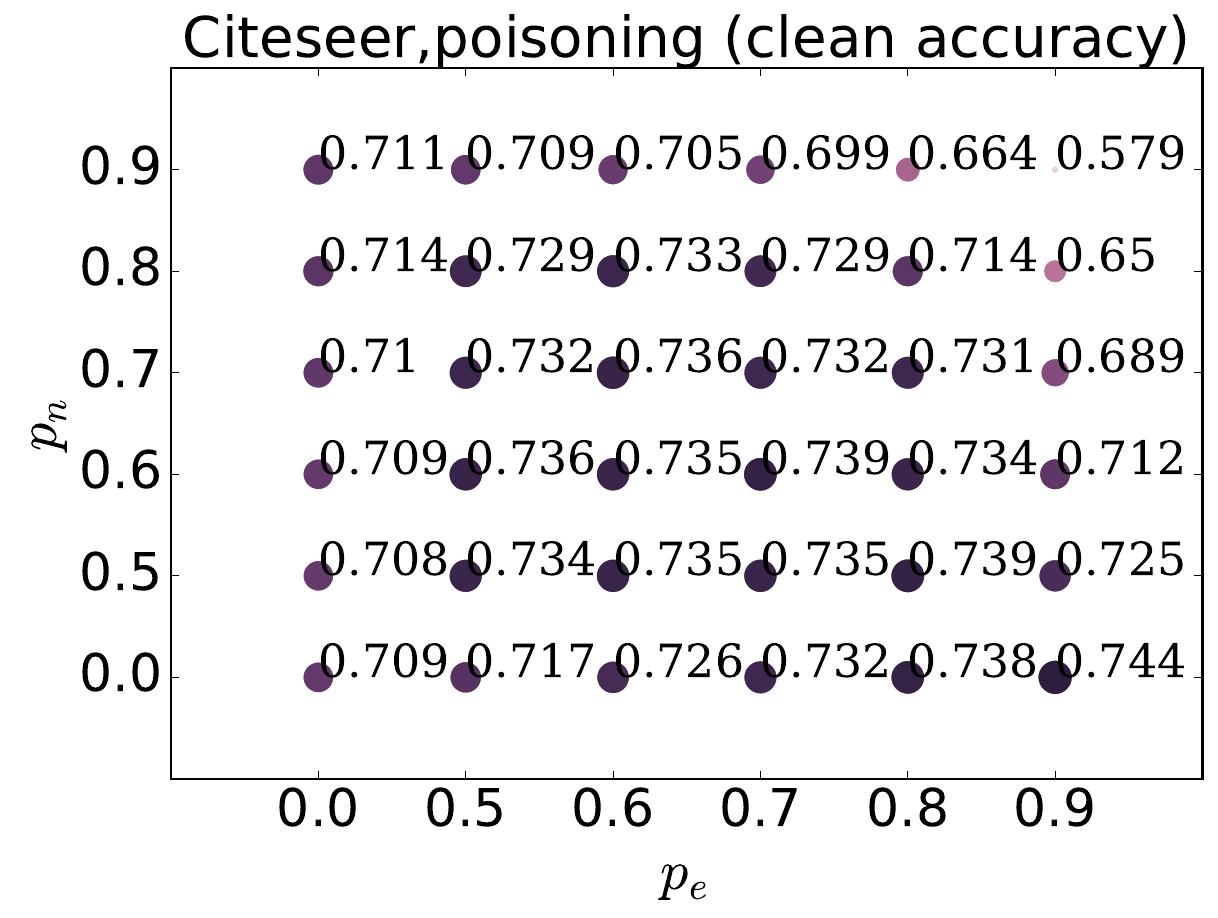}
    }
\caption{Clean accuracy of node-aware bi-smoothing classifiers with various parameters under evasion and poisoning setting.}
\label{fig:clean_acc}
\end{figure}

\end{document}